\documentclass{article}

\usepackage{microtype}
\usepackage{graphicx}
\usepackage{subcaption}
\usepackage{booktabs} 
\usepackage{mdframed}
\usepackage{tikz}
\usetikzlibrary{calc,decorations.pathreplacing}

\usepackage{hyperref}
\usepackage{xspace}

\usepackage{amsmath}
\usepackage{amssymb}

\usepackage{enumitem}

\usepackage[preprint]{icml2026}

\usepackage{mathtools}
\usepackage{amsthm}
\usepackage{thm-restate}

\usepackage{relsize}
\usepackage{capt-of}
\usepackage{longtable}
\usepackage{tcolorbox}
\usepackage{mathrsfs}

\usepackage{color, comment, environ, bbm}
\usepackage{bm}
\usepackage{multirow}
\usepackage{siunitx}
\renewcommand{\algorithmicensure}{\textbf{Output:}}

\newcommand{\ourmethod}{transport clustering\xspace}

\newcommand{\ourmethodshort}{\texttt{TC}\xspace}

\newcommand{\defeq}{\triangleq}

\DeclareMathOperator*{\conv}{conv}

\DeclareMathOperator{\rank}{rank}
\DeclareMathOperator{\diag}{diag}

\DeclareMathOperator*{\argmin}{arg\,min}
\DeclareMathOperator*{\argmax}{arg\,max}

\graphicspath{ {./images/} }

\DeclareMathOperator*{\tr}{tr}

\newcommand{\mb}{\mathbf}

\newcommand{\LOT}{\texttt{LOT}\xspace}
\newcommand{\FRLC}{\texttt{FRLC}\xspace}
\newcommand{\LIN}{\texttt{LatentOT}\xspace}

\newcommand{\FC}{\texttt{FactoredOT}\xspace}

\usepackage[capitalize,noabbrev]{cleveref}

\theoremstyle{plain}
\newtheorem{theorem}{Theorem}[section]
\newtheorem{proposition}[theorem]{Proposition}
\newtheorem{lemma}[theorem]{Lemma}
\newtheorem{corollary}[theorem]{Corollary}
\theoremstyle{definition}
\newtheorem{definition}[theorem]{Definition}

\theoremstyle{remark}
\newtheorem{remark}[theorem]{Remark}

\usepackage[textsize=tiny]{todonotes}

\icmltitlerunning{Transport Clustering: Solving Low-Rank Optimal Transport via Clustering}

\begin{document}

\twocolumn[
  \icmltitle{Transport Clustering: Solving Low-Rank \\ Optimal Transport via Clustering}



  \icmlsetsymbol{equal}{*}

  \begin{icmlauthorlist}
    \icmlauthor{Henri Schmidt}{equal,xxx}
    \icmlauthor{Peter Halmos}{equal,xxx}
    \icmlauthor{Ben Raphael}{xxx}
  \end{icmlauthorlist}

  \icmlaffiliation{xxx}{Department of Computer Science, Princeton University, NJ, USA}

  \icmlcorrespondingauthor{Ben Raphael}{braphael@princeton.edu}

  \icmlkeywords{Machine Learning, ICML}

  \vskip 0.3in
]



\printAffiliationsAndNotice{\icmlEqualContribution}

\begin{abstract}
Optimal transport (OT) finds a least cost 
transport plan 
between two probability distributions using a cost 
matrix defined on pairs of points. 
Unlike standard OT, which infers unstructured pointwise mappings, low-rank optimal transport explicitly constrains the rank of the transport plan to infer latent structure.
This improves statistical stability and robustness, yields sharper parametric rates for estimating Wasserstein distances adaptive to the intrinsic rank,
and generalizes $K$-means to co-clustering.
These advantages, however, come at the cost of a 
non-convex and NP-hard optimization problem. 
We introduce \ourmethod, an algorithm to compute a low-rank OT plan that reduces low-rank OT to a clustering
problem on correspondences obtained from a full-rank \emph{transport registration} step.
We prove that this reduction yields polynomial-time, constant-factor approximation algorithms for low-rank OT: specifically, a $(1+\gamma)$ approximation for negative-type metrics and a $(1+\gamma+\sqrt{2\gamma}\,)$ approximation for kernel costs, where $\gamma \in [0,1]$ denotes the approximation 
ratio of the optimal full-rank solution relative to the low-rank optimal. Empirically, \ourmethod\ outperforms existing low-rank OT solvers 
on synthetic benchmarks and large-scale, high-dimensional 
datasets.
\end{abstract}

\section{Introduction}

Optimal transport finds a mapping between two probability distributions in a space $M$ provided an appropriate cost function $c: M \times M \to \mathbb{R}$ between pairs of points in the space. When $c$ is the squared Euclidean cost, the cost $W_{2}^{2}(\mu, \nu)$ of the optimal map 
between two probability distributions $\mu$ and $\nu$ supported on $M$ is known as the Wasserstein distance or Earth Mover's distance, and is one of the most natural and popular metrics for assessing the distance between two probability distributions.

OT has been widely used in machine learning and scientific applications because of its ability to resolve correspondences between unregistered datasets. In machine learning, optimal transport has found applications in generative modeling \citep{tong2023improving,korotin2023neural,KorotinLGSFB21}, self-attention \citep{tay20a, sander22a, geshkovski2023mathematical}, unpaired data translation \citep{KorotinLGSFB21, bortoli2024schrodinger, tong2024improving, klein2024generative}, and alignment problems in transformers and LLMs \citep{melnyk2024distributional, li2024gilot}. Moreover, OT has become an essential tool in science, with wide-ranging applications from biology \citep{schiebinger2019optimal, yang2020predicting, zeira2022PASTE, Bunne_2023, destot, klein2023moscot} to particle physics \citep{PhysRevLett.123.041801, Ba_2023, manole2024backgroundmodelingdoublehiggs}. 

In the discrete setting, where the source and target distributions are empirical measures over $n$ points, the Kantorovich OT problem \cite{kantorovich1942transfer} minimizes a linear cost over the set of non-negative matrices with fixed row and column marginals. With uniform marginals, optimal
solutions coincide with the vertices of the Birkhoff polytope \cite{birkhoff1946tres} -- i.e. the set of permutation matrices -- yielding deterministic, one-to-one mappings.
However, high-dimensional transport plans are often well-described by an interpretable, low-dimensional process. Specifically, transport
plans in high-dimensions often factor through a small number of latent factors or
anchors, reflecting a low intrinsic rank \cite{forrow19a, lin2021making}. Thus, while the ``true'' coupling often exhibits interpretable low-rank structure with a rapidly decaying spectrum,
full-rank OT is inherently incapable of finding such structure: a full-rank OT solution is a permutation matrix $\mb{P}$ with a flat spectrum of constant singular values $\sigma_{1}(\mb{P}) = \ldots = \sigma_n(\mb{P})= 1$\footnote{Any attempt to approximate the permutation matrix $\mb{P}$, e.g. using SVD or NMF, 
with a rank-$K$ doubly stochastic matrix $\mb{Q}$ will therefore
incur error 
$\lVert \mb{P}-\mb{Q} \rVert_F^2 \geq n-K$
by the Eckart-Minsky-Young theorem (see Theorem 7.4.9.1 in \citet{horn2012matrix}).
}.


Low-rank optimal transport (LR-OT) \citep{forrow19a, scetbon2020linear, lin2021making, scetbon2022lowrank, scetbon2023unbalanced, FRLC} aims to reveal low-rank structure by 
explicitly constraining the rank of the transport plan during optimization.
The low-rank constraint fundamentally alters the nature of the solution: by enforcing a rank $K \ll n$, LR-OT infers a variable spectrum $\sigma_{1}(\mb{P}) \geq \cdots \geq \sigma_{K}(\mb{P})$ that reflects the underlying latent structure \cite{scetbon2022lowrank}. This serves as a powerful regularizer,
producing estimators of Wasserstein distance that are more robust to outliers and sparse sampling, and achieves sharper statistical rates adaptive to the underlying rank \citep{forrow19a,lin2021making}. By forcing transport to factor through 
latent anchors, LR-OT simultaneously partitions and aligns
the source and target data \citep{forrow19a,
lin2021making}. In addition, the LR-OT framework strictly generalizes $K$-means 
clustering to the setting of multiple datasets \citep{scetbon2022lowrank}.



Despite the desirable features of low-rank OT, several practical and theoretical factors limit its adoption.
First, low-rank OT is a non-convex and NP-hard optimization problem, similar to NMF \citep{NMF}, and 
thus is sensitive to the choice of initialization \citep{scetbon2022lowrank} often producing different low-rank factors with different initializations. 
Second, current algorithms, which rely on local optimization through mirror-descent \citep{Scetbon2021LowRankSF, FRLC}
or Lloyd-type \citep{forrow19a, lin2021making} approaches, consist of a complex
optimization over three or more variables. 
Finally, although preliminary work has characterized theoretical properties 
of the low-rank OT problem \citep{forrow19a,scetbon2022lowrank}, existing
algorithms lack
provable guarantees beyond convergence to stationary points. This contrasts with tools for
$K$-means clustering that offer robust
$\mathcal{O}(\log K)$ \citep{kmpp} and $(1 +\epsilon)$-approximation \citep{1peps} in addition to statistical guarantees \citep{zhuang2023statistically}.

\textbf{Contributions.}
We show that low-rank OT reduces to a simple clustering 
problem on correspondences, which we call \emph{transport clustering}. Specifically, we reduce the low-rank OT problem from a co-clustering problem to a generalized $K$-means problem \citep{scetbon2022lowrank} via a 
\emph{transport registration} of the cost matrix. This registers the cost with 
the solution to a convex optimization problem: the optimal full-rank transport plan. Transport clustering eliminates the auxiliary variables used in existing low-rank solvers and converts the low-rank OT problem into a single clustering subroutine: one low-rank factor is given by solving the generalized $K$-means problem on the registered cost, and the second factor is automatically obtained from the first. We prove constant-factor guarantees for this reduction: for kernel costs the approximation factor is $(1+\gamma+\sqrt{2\gamma})$ and for negative-type metrics it is $(1+\gamma)$ where $\gamma\in[0,1]$ is the ratio of the optimal full-rank and rank $K$ OT
costs. Because the reduced problem is a (generalized) $K$-means instance, \ourmethod\ inherits the algorithmic stability and approximation guarantees of modern $K$-means 
and $K$-medians
solvers. In addition to its theoretical guarantees, transport clustering (\ourmethodshort) is a simple and practically effective algorithm 
for low-rank OT that empirically obtains lower transport cost than existing low-rank OT solvers.
\section{Background}
\label{section:background}
Suppose $X = \{ x_1, \ldots, x_n \}$ and $Y = \{ y_1,\ldots,y_m\}$ are datasets with $n$ and $m$ data points in a space $M$. Letting $\Delta_{n} = \{ \, \bm{p} \in \mathbb{R}^{n}_{+}  : \, \sum_{j} \bm{p}_{j} = 1  \}$ denote the probability simplex over $n$ elements, one may represent each dataset explicitly over the support with probability measures $\mu = \sum_{i=1}^{n} \bm{a}_{i} \delta_{x_{i}}$ and $\nu = \sum_{j=1}^{m} \bm{b}_{j} \delta_{y_{j}}$ for probability vectors $\bm{a} \in \Delta_{n}$ and $\bm{b} \in \Delta_{m}$. The
optimal transport framework \citep{peyre2019computational} aims to find the 
least-cost mapping between these datasets $\mu \mapsto \nu$ as quantified via a cost function $c : X \times Y \rightarrow\mathbb{R}$. 

\textbf{Optimal Transport.} The \emph{Monge formulation} \citep{monge1781memoire} of optimal transport finds a map $T^{\star}: M \to M$ of least-cost between the measures $\mu$ and $\nu$, $T^{\star}=\argmin_{T:\, 
T_{\sharp} \mu = 
\nu
} 
\mathbb{E}_{
\mu
}c(x, T(x))$. Here, $T_{\sharp}\mu$ denotes the pushforward measure of $\mu$ under $T$, defined by $T_{\sharp} \mu (B) := \mu(T^{-1}(B))$ for 
any measurable set $B \subset {M}$.
Define the set of couplings $\Gamma(\mu, \nu)$ to be all joint distributions $\gamma$ with marginals given by $\mu$ and $\nu$. The \emph{Kantorovich problem} \citep{kantorovich1942transfer} relaxes the Monge-problem by instead finding a coupling of least-cost $\gamma^{\star}$ between $\mu$ and $\nu$: $\gamma^{\star} \in\argmin_{
\gamma \in \Gamma(\mu, \nu)
} 
\mathbb{E}_{
\gamma
} \,c(x, y)\,$. This relaxation permits mass-splitting and guarantees the existence of a solution between any pair of measures $\mu$ and $\nu$ \citep{peyre2019computational}. 

In the discrete setting, the Kantorovich problem is equivalent to the linear optimization
\begin{equation}\label{eq:primal_ot}
    \min_{\mb{P} \in \Pi(\bm{a}, \bm{b})}\, \sum_{i=1}^{n}\sum_{j=1}^{m} \mb{P}_{ij} \, c(x_{i}, y_{j}) = \min_{\mb{P} \in \Pi_{\bm{a}, \bm{b} } }  
    \,\langle\mb{C}, \mb{P}\rangle_F,
\end{equation}
over the \emph{transportation polytope} $\Pi(\bm{a}, \bm{b}) \defeq \left\{ \mb{P} \in \mathbb{R}_+^{n \times m} : \mb{P} \mb{1}_m = \bm{a}, \mb{P}^\mathrm{T} \mb{1}_n = \bm{b} \right\}$ defined by marginals $\bm{a} \in \Delta_n$ and $\bm{b} \in \Delta_m$. $\left\langle \mb{A}, \mb{B} \right\rangle_F = \tr\mb{A}^{\top} \mb{B}$ denotes the Frobenius inner product and $[c(x_i, y_j)]=(\mb{C})_{ij}  \in \mathbb{R}^{n \times m}$ is the cost evaluated at all point pairs.

\textbf{Low-rank Optimal Transport.} 
Low-rank optimal transport (OT) constrains the non-negative rank of the transport plan $\mb{P}$ to be upper bounded by a specified constant $K$. This has computational \citep{Scetbon2021LowRankSF, scetbon2022lowrank, FRLC}, statistical \citep{forrow19a}, and interpretability 
benefits \citep{forrow19a, lin2021making, HMOT}, with the drawback that it results in a 
non-convex and NP-hard optimization problem.
For a matrix $\mb{M} \in \mathbb{R}_+^{n\times m}$, the \emph{nonnegative rank} \citep{cohen1993nonnegative} is
$\rank_+(\mb{M}) \defeq \min\{K: \mb{M} = \sum_{i=1}^K \bm{q}_i \bm{r}_i^{\top}, \bm{q}_i, \bm{r}_i\geq 0\}$,
or the minimum number of nonnegative rank-one matrices which sum to $\mb{M}$. The \emph{low-rank Kantorovich problem} \citep{scetbon2022linear, scetbon2023unbalanced} is then
\vspace{-0.2em}\begin{equation}\label{eq:primal_low_rank_ot}
    \min_{\mb{P} \in \Pi(\bm{a},\bm{b})}  \left\langle \mb{C}, \mb{P} \right\rangle_F  \, : \, \rank_+{(\mb{P})} \leq K 
\end{equation}

\vspace{-1em}which constrains the (nonnegative) rank of the \emph{transport plan} $\mb{P}$ to be at most $K$. 
Following \citep{cohen1993nonnegative, Scetbon2021LowRankSF}, the low-rank Kantorovich problem \eqref{eq:primal_low_rank_ot} 
is equivalent to
\begin{equation}
    \label{eq:primal_low_rank_ot_2}
\min_{ \, \substack{
   \mb{Q} \in \Pi(\bm{a},\bm{g}),\mb{R} \in \Pi(\bm{b},\bm{g}) \\
   \bm{g} \in \Delta_K
}} \left\langle\mb{C}, \mb{Q}\diag(\bm{g}^{-1})\mb{R}^{\top}\right\rangle_F ,
\end{equation}
which explicitly parameterizes the low-rank plan $\mb{P}$ as the product of two rank $K$ transport plans $\mb{Q}$ and $\mb{R}$ with outer marginals $\mb{Q}\mb{1}_{K}=\bm{a},\, \mb{R}\mb{1}_{K} =\bm{b}$ and a shared inner marginal $\bm{g} = \mb{Q}^{\top}\mb{1}_{n} = \mb{R}^{\top}\mb{1}_{m}$.

\textbf{$K$-Means and Generalized $K$-Means.} 
Given a dataset $X$, the $K$-means problem finds a partition $\pi = \{ \mathcal{C}_{k}\}_{k=1}^{K}$ of $X$ with $K$ clusters 
and means $\bm{\mu}_{1},\ldots,\bm{\mu}_K$
such that the total distance of each point to its nearest mean is
minimized. Letting the $k$-th cluster mean be $\bm{\mu}_k = \frac{1}{\lvert \mathcal{C}_k\rvert}\sum_{i\in\mathcal{C}_k}x_i$, the $K$-means problem minimizes the distortion
\begin{equation}\label{eq:Kmeans}
\min_{
    \pi} \sum_{\mathcal{C}_k \in \pi} \sum_{i\in \mathcal{C}_{k}} \lVert x_{i} - \bm{\mu}_{k} \rVert_{2}^{2}.
\end{equation}
Using that $|\mathcal{C}_{k}| \, \sum_{i \in \mathcal{C}_{k}}\lVert x_{i} - \bm{\mu}_{k} \rVert_{2}^{2} = \frac{1}{2}\sum_{i,j \in \mathcal{C}_{k}} \lVert x_{i} - x_{j} \rVert_{2}^{2}$ yields an  equivalent \emph{mean-free} formulation of \eqref{eq:Kmeans} in terms of pairwise distances:
\begin{equation}\label{eq:pairwise_KMeans}
    \min_{
    \substack{
   \pi }} \sum_{\mathcal{C}_k \in \pi} \frac{1}{|\mathcal{C}_{k}|} \sum_{i, j \in \mathcal{C}_{k}} \frac{1}{2}\lVert x_{i} - x_{j} \rVert_{2}^{2}.
\end{equation}
Define the assignment matrix $n\mb{Q} \in \{0, 1 \}^{n \times K}$ by $\mathbf{Q}_{ik} = 
\frac{1}{n}$ if $i\in \mathcal{C}_k$ and $0$ otherwise. Then,  \eqref{eq:pairwise_KMeans} is equivalently expressed (up to the constant factor $n$) as a sum over all assignment variables with cluster proportions given by $|\mathcal{C}_{k}|\,/\,n = \sum_{i} \mb{Q}_{ik}$:
\begin{align}
\label{eq:Kmeans_Q}
    &\langle \mb{C}_{\ell_2^2}, \mb{Q}\,\mathrm{diag}(1/\mb{Q}^\top \bm{1}_n)\,\mb{Q}^{\top} \rangle_{F} \\
    &=\sum_{i=1}^{n}\sum_{j=1}^{n}\sum_{\mathcal{C}_k \in \pi}   \frac{1}{2}\lVert x_{i} - x_{j} \rVert_{2}^{2}  \,\,\mb{Q}_{ik} \frac{n}{|\mathcal{C}_{k}|} \mb{Q}_{jk}
\end{align}
where $(\mb{C}_{\ell_2^2})_{ij}=(1/2)\lVert x_i - x_j\rVert^2_2$. In the preceding assignment form, \eqref{eq:Kmeans_Q} is the cost
of the rank $K$ transport plan $\mathbf{P}=\mb{Q}\diag(\bm{g}^{-1})\mb{R}^\top$
where $\mb{R} = \mb{Q}$ and $\bm{g}=\mb{Q}^{\top}\mb{1}_n$.
Following this observation, \cite{scetbon2022lowrank}
introduced \emph{generalized $K$-means} as the extension of \eqref{eq:Kmeans} to arbitrary cost functions $c(x_{i}, x_{j})$ by replacing $\mb{C}^{\ell_{2}^{2}}$ in the mean-free formulation \eqref{eq:Kmeans_Q} with a general cost $\mb{C}$. Let $\sqcup$ denote the disjoint set
union operator. In \emph{partition form}
this yields the following problem.
\begin{definition}
    Given a cost matrix $\,\mb{C}_{ij} = c(x_{i}, x_{j}) \in \mathbb{R}^{n\times n}$, the
    \emph{generalized $K$-means} problem is to minimize over partitions $\pi = \{\mathcal{C}_{k} \}_{k=1}^{K}$ the distortion:
    \vspace{-0.5em}\begin{equation}\label{def:GenKMeans}
    \min_{
    \pi} \left\{\sum_{k=1}^{K} \frac{1}{|\mathcal{C}_{k}|} \sum_{i, j \in \mathcal{C}_{k}} c( x_{i} , x_{j} ): \bigsqcup_{k=1}^{K}\,\mathcal{C}_{k} = [n]\right\}
    \end{equation}
\end{definition}
\vspace{-0.5em}
     Define the set of \emph{hard transport plans} to be 
$\Pi_{\bullet}(\bm{a},\bm{b}) \defeq \{ \mb{P} \in \mathbb{R}_+^{n \times K} : \mb{P} \mb{1}_K = \bm{a}, \mb{P}^\top \mb{1}_n = \bm{b}, \lVert \mb{P} \rVert_0 = n\}$, where $\lVert \mb{P}\rVert_0 = \lvert\{ (i,j): \mb{P}_{ij} >0 \}\rvert$. Then, \eqref{def:GenKMeans} is equivalent 
to the optimization
problem
\begin{equation}\label{eq:GenKMeans}
    \min_{\substack{\mb{Q} \in \Pi_{\bullet}(\bm{u}_{n},\,\cdot)}}  \,\, \langle\mb{C}, \mb{Q}\,\mathrm{diag}(1/\mb{Q}^\top \bm{1}_n)\,\mb{Q}^{\top} \rangle_{F},
\end{equation}
where $\bm{u}_n = \frac{1}{n}\mb{1}_{n}$ is the uniform marginal. Interestingly, when $X = Y$, $\bm{a} = \bm{b}=\bm{u}_n$, and $\mb{C} = \mb{C}_{\ell_2^2}$, the optimal solution $(\mb{Q}, \mb{R}, \bm{g})$ of \eqref{eq:primal_low_rank_ot_2} always has $\mb{Q} = \mb{R} \in \Pi_{\bullet}(\bm{u}_n,\bm{g})$ following Proposition 9 in 
\cite{scetbon2022lowrank}. Consequently, $K$-means strictly 
reduces to low-rank OT (see also Corollary 3 in \cite{scetbon2022lowrank}),
proving that the low-rank OT problem \eqref{eq:primal_low_rank_ot_2} is 
NP-hard.

\begin{figure*}
    \centering
    \includegraphics[width=1.0\linewidth]{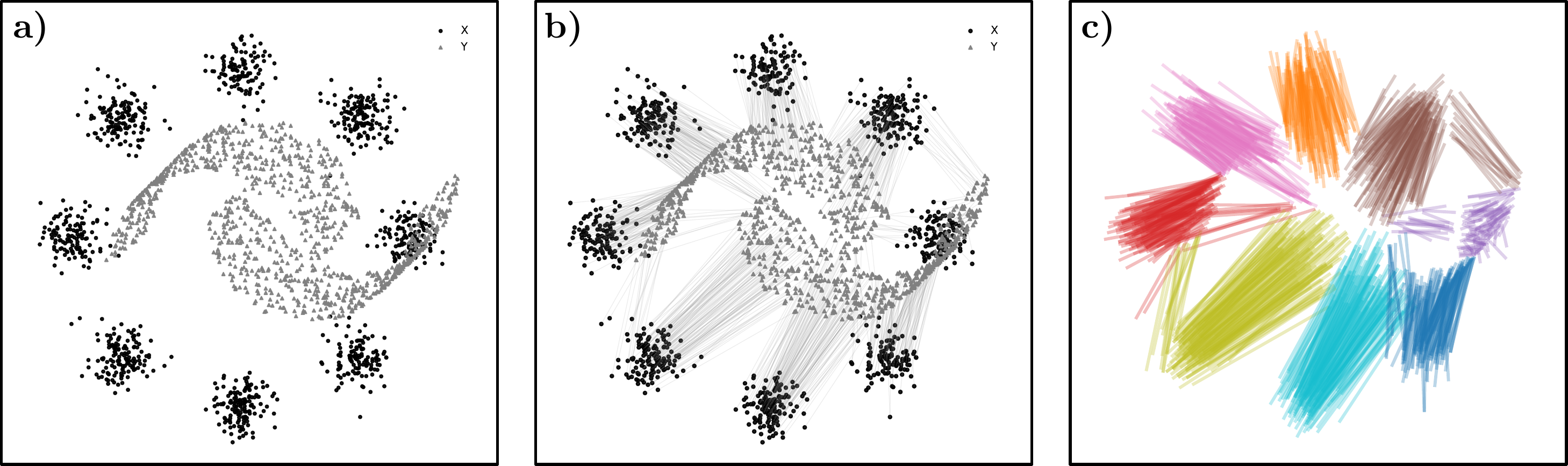}
    \caption{\label{fig:illustration} \ourmethodshort on \textbf{(a)} a synthetic 2-Moons ($X$) and 
    8-Gaussians ($Y$) dataset ($n=m = 1024$) from \cite{tong2023improving} with the \textbf{(b)} Monge map alignment of $X$ and $Y=\sigma(X)$ using \cite{halmos2025hierarchical}. \ourmethodshort reduces low-rank OT (co-clustering) to \textbf{(c)} clustering a single set of Monge registered correspondences using
    generalized $K$-means.}
\end{figure*}

\section{Transport Clustering}

We introduce a hard assignment variant of the
low-rank OT problem and argue that it naturally
generalizes $K$-means to co-clustering two datasets. 
We introduce \emph{Monge registration} of the cost matrix as a tool
for reducing low-rank OT to generalized $K$-means and discuss approximation guarantees for
the reduction. Finally, we introduce \emph{Kantorovich registration} as the analogue of Monge registration for the soft assignment low-rank OT problem 
\eqref{eq:primal_low_rank_ot_2}. As this reduction converts low-rank OT from a co-clustering problem to a clustering problem, we refer to the procedure as \emph{\ourmethod}.

Clustering methods such as $K$-Means 
output hard 
assignments of points to clusters to represent a partition. The
extension of \eqref{eq:primal_low_rank_ot_2} to co-clustering with a bipartition then 
requires the 
low rank factors to represent hard co-cluster assignments. Specifically, we require that the transport plans $\mb{Q}$ and $\mb{R}$ in 
\eqref{eq:primal_low_rank_ot_2} lie in the set of hard transport plans
$\Pi_{\bullet}(\bm{a},\bm{b})$\footnote{A well-known result on network flows (see \cite{peyre2019computational}) states that vertices of the
(soft) transportation polytope $\Pi(\bm{a}, \bm{b})$ have $\leq n + K - 1$ non-zero entries, implying that the solutions of the (soft) low-rank OT problem \eqref{eq:primal_low_rank_ot_2} are nearly hard transport plans.} instead of $\Pi(\bm{a},\bm{b})$, mirroring
the assignment version of $K$-means in Section \ref{section:background}.
\begin{definition}
    Given a cost matrix $\mb{C}_{ij} = c(x_{i} , y_{j} ) \in \mathbb{R}^{n\times n}$, the assignment form of
    the (hard) low-rank optimal transport problem is to solve:
    \begin{align}
\min_{\substack{\mb{Q}, \mb{R} \in \Pi_{\bullet}(\bm{u}_{n},\,\bm{g}) \\
    \bm{g} \in \Delta_{K}
    }}  \,\, \langle\mb{C}, \mb{Q}\,\mathrm{diag}(1/\bm{g})\,\mb{R}^{\top} \rangle_{F}.\label{eq:hardLOT}
\end{align}
\end{definition}
There is an equivalent partition-form of ~\eqref{eq:hardLOT} which parallels the partition form of $K$-means in 
\cite{zhuang2023statistically}. In particular, one finds a pair of partitions $\pi_{X}=\{\mathcal{C}_{X,k}\}, \,\pi_{Y}=\{\mathcal{C}_{Y,k}\}$ minimizing the distortion:
{\begin{align}
&\min_{\pi_X\times\pi_Y}\biggl\{  \sum_{k=1}^{K} \frac{1}{|\mathcal{C}_{k}|} \sum_{i \in \mathcal{C}_{X,k}}\sum_{j \in \mathcal{C}_{Y,k}} c( x_{i} , y_{j} ) \biggr\} \\ 
&\textrm{s.t.}\,\,\,
\  |\mathcal{C}_{X,k}|=|\mathcal{C}_{Y,k}|, \,\,\bigsqcup_{k=1}^{K}\,\mathcal{C}_{X,k} = \,\bigsqcup_{k=1}^{K}\,\mathcal{C}_{Y,k} = [n].\label{eq:hardLOT_partition} 
\end{align}}

\vspace{-0.5em}
This form \eqref{eq:hardLOT_partition} solves for a bipartition, implying~\eqref{eq:hardLOT} is a form of co-clustering (Appendix \ref{appendix:theory_1}). 
When the sets $X$ and $Y$ are distinct, \eqref{eq:hardLOT_partition} 
provides a natural generalization of $K$-means for co-clustering: (i) 
there are $K$ co-clusters, (ii) each dataset receives a distinct partition 
$\pi_{X}, \,\pi_{Y}$, (iii) co-cluster 
sizes are matched $|\mathcal{C}_{X,k}|=|\mathcal{C}_{Y,k}|$, and (iv) 
one minimizes a distortion $c( x_{i} , y_{j})$. When $X=Y$ and 
$\mathcal{C}_{X,k}=\mathcal{C}_{Y,k}$, observe that this exactly 
recovers the generalized $K$-means problem. As an example, the decomposition of \eqref{eq:hardLOT_partition} for the squared Euclidean cost can be written as
{\vspace{-0.6em}\begin{align*}
\min_{ \pi_X\times\pi_Y }\,\,
  \sum_{k=1}^{K}  
\sum_{i\in \mathcal{C}_{X,k}} &\lVert x_{i} - \bm{\mu}_{k}^{X} \rVert_{2}^{2} +
 \sum_{k=1}^{K}\sum_{j\in \mathcal{C}_{Y,k}} \lVert y_{j} - 
\bm{\mu}_{k}^{Y}
\rVert_{2}^{2}  \\
&+\sum_{k=1}^{K}|\mathcal{C}_{k}| \lVert \bm{\mu}_{k}^{X} - \bm{\mu}_{k}^{Y} \rVert_{2}^{2}
\end{align*}}\noindent

\vspace{-1.1em}where $\bm{\mu}_k^Z = \frac{1}{\lvert \mathcal{C}_{Z,k}\rvert}\sum_{i\in\mathcal{C}_{Z,k}}x_i$ for $Z = X, Y$ (see Remark~\ref{remark:LOT_as_2KMeans}). While \eqref{eq:Kmeans} finds a single centroid per cluster, this is a natural generalization for optimizing two: one minimizes two $K$-means distortions of $\bm{\mu}_{k}^{X}, \bm{\mu}_{k}^{Y}$ on $X$ and $Y$, and an additional distortion between the cluster centers $\bm{\mu}_{k}^{X}, \bm{\mu}_{k}^{Y}$. When $X = Y$ and $\bm{\mu}_{k}^{X}= \bm{\mu}_{k}^{Y}$, this collapses to $K$-means. 

To solve the low-rank OT problem \eqref{eq:hardLOT}-\eqref{eq:hardLOT_partition}, we propose a reparameterization trick
motivated by the assignment form \eqref{eq:hardLOT}.
Specifically, as the matrices $\mb{Q}, \mb{R} \in \Pi_{\bullet}(\bm{u}_{n},\,\bm{g})$ are hard assignment
matrices with matching column and row sums there exists a permutation
of the rows of $\mb{R}$ (resp. $\mb{Q}$) that takes $\mb{R}$ to $\mb{Q}$.
Formally, for any feasible $\mb{Q}, \mb{R}$, there exists a permutation matrix $\mb{P}_{\sigma} \in \mathcal{P}_n$ with $\mb{R} = \mb{P}_{\sigma}^\top \mb{Q}$. With this reparameterization, we reformulate \eqref{eq:hardLOT} as follows,
\begin{align}
    &\min_{\substack{\mb{Q},\mb{R} \in \Pi_{\bullet}(\bm{u}_{n},\,\bm{g}),\\\,\bm{g} \in \Delta_{K}}}  \,\, \langle \mb{C}, \mb{Q}\,\mathrm{diag}(\bm{g}^{-1})\,\mb{R}^{\top}\rangle_{F}\\ 
    &= \min_{\substack{\mb{Q} \in \Pi_{\bullet}(\bm{u}_{n},\,\bm{g}),\\\mb{P}_{\sigma} \in \mathcal{P}_n,\\\,\bm{g} \in \Delta_{K}}}  \,\, \langle\mb{C}, \mb{Q}\,\mathrm{diag}(\bm{g}^{-1})\,(\mb{P}_{\sigma}^{\top}\mb{Q})^{\top} \rangle_{F},\nonumber\\
    &=\min_{\substack{\mb{Q} \in \Pi_{\bullet}(\bm{u}_{n},\,\cdot),\\\mb{P}_{\sigma} \in \mathcal{P}_n}}  \,\, \langle\mb{C}\mb{P}_{\sigma}^{\top}, \mb{Q}\,\mathrm{diag}(1/\mb{Q}^T\mb{1}_n)\,\mb{Q}^{\top} \rangle_{F},
    \label{eq:RotatedLR}
\end{align}
where $\mathcal{P}_n$ is the set of permutation matrices.
This reformulation of \eqref{eq:hardLOT} might appear to offer little: 
the optimization remains over a difficult and non-convex pair of variables $(\mb{Q},\mb{P}_{\sigma})$. However, 
the reformulation \eqref{eq:RotatedLR} offers a new perspective:
for $\mb{P}_{\sigma}$ fixed, \eqref{eq:RotatedLR}
is a symmetric optimization problem over a single assignment matrix $\mb{Q}$
with respect to the \emph{registered}
cost matrix $\mb{C}\mb{P}_{\sigma}^{\top}$.

In fact, when $\mb{P}_{\sigma}$ is fixed in \eqref{eq:RotatedLR} the
result is exactly the generalized $K$-means problem \eqref{def:GenKMeans} discussed in
Section \ref{section:background}. 
Unfortunately, however, the reduction from \eqref{eq:hardLOT} to \eqref{def:GenKMeans} requires a priori knowledge of the optimal choice for this unknown permutation $\mb{P}_\sigma$. This leads
us to ask:

\vspace{-1em}\begin{quote}
    \emph{Is there an efficiently computable choice of  permutation matrix $\,\mb{P}_\sigma$ that accurately
    reduces low-rank optimal transport to the generalized $K$-means
    problem?}
\end{quote}

\vspace{-1em}We answer this question in the \emph{affirmative}.
Specifically, we show that taking the optimal Monge map $\mb{P}_{\sigma^*}$
as the choice of $\mb{P}_\sigma$ yields a constant-factor approximation 
algorithm (Algorithm \ref{alg:monge_KMeans}) for (hard) low-rank OT given an
algorithm for solving the generalized $K$-means problem (Section \ref{section:theory}). The resulting \emph{\ourmethod}
(\ourmethodshort) algorithm first finds a correspondence
between $X$ and $Y$ and then clusters the transport registered cost, effectively clustering on the correspondences (Figure \ref{fig:illustration}).

\begin{algorithm}[]
   \caption{Transport Clustering (\ourmethodshort)}
   \label{alg:monge_KMeans}
\begin{algorithmic}[1]
   \STATE \textbf{Input:} Cost matrix $\mb{C}$ and rank $K$.
   \STATE \textbf{Step 1 (Transport):} Compute the optimal full-rank plan $\mb{P}_{\sigma^{\star}}$:
   \begin{equation*}
   \mb{P}_{\sigma^{\star}} \leftarrow n \cdot \argmin_{\mb{P}\in \Pi(\bm{u}_{n},\bm{u}_{n})} \langle \mb{C}, \mb{P} \rangle_{F}
   \end{equation*}
   \STATE \textbf{Step 2 (Clustering):} Register the cost $\tilde{\mb{C}} \leftarrow \mb{C}\mb{P}_{\sigma^{\star}}^\top$ and solve generalized $K$-means for $\mb{Q}$:
   \begin{equation*}
   \mb{Q} \leftarrow \argmin_{\mb{Q} \in \Pi_{\bullet}(\bm{u}_{n},\cdot)} \,\, \langle\tilde{\mb{C}}, \mb{Q}\,\mathrm{diag}(1/\mb{Q}^\top\bm{1}_n)\,\mb{Q}^{\top}\rangle_{F}
   \end{equation*}
   \STATE \textbf{Output:} The low-rank factors $(\mb{Q},  \mb{P}_{\sigma^{\star}}^\top\mb{Q})$.
\end{algorithmic}
\end{algorithm}

Using standard algorithms for the Monge problem such as the Hungarian algorithm 
\citep{Kuhn1955Hungarian} or the Sinkhorn algorithm \citep{sinkhorn}, ones 
easily implements step 1 in polynomial time. For step 2, we propose
two algorithms for generalized 
$K$-means problem based upon (1) mirror descent and (2) semidefinite programming based 
algorithms for $K$-means 
\citep{peng2007approximating, fei2018hidden, zhuang2023statistically}. Given a $(1 + \epsilon)$-approximation algorithm $\mathcal{A}$ for $K$-means, 
an appropriate initialization for step 2 of 
Algorithm \ref{alg:monge_KMeans} maintains the constant factor
approximation guarantee with an 
additional $(1+\epsilon)$
factor. An analogous statement holds for metric costs where the $K$-means solver $\mathcal{A}$
is replaced with a $K$-medians solver, yielding polynomial-time constant-factor approximations for
low-rank OT with metric and kernel costs independent of
an algorithm for generalized $K$-means (Section \ref{section:theory}). 

We note that an analogous notion of 
\emph{Kantorovich registration} exists for the soft assignment variant of the low-rank OT problem \eqref{eq:primal_low_rank_ot_2} with arbitrary marginals $\bm{a},\bm{b}$ supported on $X$ and $Y$ with $n \neq m$.
In this setting, rather than register via the Monge permutation, one registers by the optimal Kantorovich plan $\mb{P}^*$ using either $\mb{Q} = \mb{P}^* \diag(1/\bm{b}) \mb{R}$ or $\mb{R} = (\mb{P}^*)^{\top} \diag(1/\bm{a}) \mb{Q}$. When solving with respect to $\mb{Q}$, this results in a (soft)
generalized $K$-means problem:
\[
\min_{\substack{\mb{Q} \in \Pi(\bm{a},\bm{\cdot})}} \,\, \left\langle \mb{C} \mb{P}^{*,\top} \diag(1/\bm{a}), \mb{Q} \diag(1/\mb{Q}^{\top}\bm{1}_n)\mb{Q}^{\top}    
    \right\rangle_{F}.
\]
To obtain $\mb{R}$, for the resultant $\mb{Q}^{\top}\mb{1}_{n} = \bm{g} \in \Delta_{K}$ one applies the conjugation $\mb{R} = \mb{P}^{*,\top} \diag(1/\bm{a}) \mb{Q}$, which ensures $\mb{R}\mb{1}_{K} = \mb{P}^{*,\top} \diag(1/\bm{a}) \mb{Q} \mb{1}_{K} = \bm{b}$ and $\mb{R}^{\top}\mb{1}_{n} = \mb{Q}^{\top}\diag(1/\bm{a})\mb{P}^*\mb{1}_{n} = \bm{g}$, so that $(\mb{Q}, \mb{R}, \bm{g})$ is feasible, and $\mb{Q}\diag(\bm{g}^{-1})\mb{R}^{\top} \in \Pi_{\bm{a},\bm{b}}$.

\section{Theoretical Results}
\label{section:theory}
\textbf{Approximation of low-rank optimal transport by generalized $K$-means.}
\label{subsec:theory1}
In this section, we justify the reduction
from the low-rank optimal transport 
problem \eqref{eq:primal_low_rank_ot_2} to the 
generalized $K$-means problem \eqref{def:GenKMeans} by proving
that solving the proxy problem \eqref{def:GenKMeans}
incurs at most a constant factor in cost. All proofs
are found in Appendix \ref{appendix:theory_1}.

In detail, we derive a $(2 + \gamma)$ approximation ratio for any 
cost $c(\cdot, \cdot)$ satisfying the triangle inequality and a $(1 + \gamma + \sqrt{2\gamma})$ approximation ratio for any cost induced by a kernel,
which includes the squared Euclidean cost. 
For metrics of negative type, we provide an improved approximation
ratio of $(1 + \gamma)$. 
Examples of negative type metrics include all $\ell_p$ metrics for $p \in [1, 2]$ and weighted linear transformations thereof (see Theorem 3.6 \cite{meckes2013positive}). Any metric embeddable in $\ell_p$, $p \in [1, 2]$, is also
of negative type. For example, tree metrics are exactly embeddable in 
$\ell_p$ while shortest path metrics are approximately embeddable in $\ell_p$
with small distortion \citep{abraham2005metric}. 


To state our results, we write that a cost matrix $\mb{C}$ is
\emph{induced} by a cost $c(\cdot, \cdot)$ if there exists points ${X}=\{x_1\ldots,x_n\}$ and ${Y} = \{y_1, \ldots, y_n\}$ such that
$\mb{C}_{ij}=c(x_i,y_j)$. A cost $c(\cdot,\cdot)$ is a \emph{kernel cost}
if $c(x, y) = \lVert\phi(x)-\phi(y)\lVert^2_2$ for some feature-map $\phi: X \to \mathbb{R}^d$. 
A cost function 
$c(\cdot, \cdot)$ is \emph{conditionally negative semidefinite} if $\sum_{i=1}^n\sum_{j=1}^n\alpha_i\alpha_j \,c(x_i,x_j) \leq 0$ for all
$x_1, \ldots, x_n$ and $\alpha_1, \ldots, \alpha_n$ such that $\sum_{i=1}^n\alpha_i = 0$. Equivalently, this requires all cost matrices $\mb{C}$ induced by $c(\cdot, \cdot)$ to be negative semidefinite $\mb{C} \preceq 0$ over $\mb{1}_n^\perp = \{ \xi \in \mathbb{R}^{n}: \langle \xi , \mb{1}_{n} \rangle =0\}$.
A cost function $c(\cdot,\cdot)$ is said to be of \emph{negative type} if it is a
metric and conditionally negative semidefinite.

\begin{theorem}
\label{thm:reduction_to_kcut}
    Let $\mb{C} \in \mathbb{R}^{n \times n}$ be a cost matrix either induced by i) a metric of negative type, ii) a kernel
    cost, or iii) a cost satisfying the triangle inequality. 
    If $\mb{P}_{\sigma^{\star}}$ denotes
    the full-rank optimal transport plan for $\mb{C}$ and $\tilde{\mb{C}} = \mb{C} \mb{P}_{\sigma^{\star}}^\top$ is the Monge registered cost, then
    \begin{align*} &\min_{\substack{\mb{Q} \in \Pi_{\bullet}(\bm{u}_{n},\,\cdot)}}  \,\, \langle \tilde{\mb{C}},  \mb{Q}\,\mathrm{diag}(1/\mb{Q}^{\top}\mb{1}_n)\,\mb{Q}^{\top} \rangle_{F} \,\, \\
    & \leq (1+\gamma) \,\, \cdot \min_{\substack{\mb{Q},\mb{R} \in \Pi_{\bullet}(\bm{u}_{n},\,\bm{g}),\\\bm{g} \in \Delta_{K}}}  \,\, \langle \mb{C}, \mb{Q}\,\mathrm{diag}(\bm{g}^{-1})\,
        \mb{R}^{\top} \rangle_{F} ,\,\, \tag*{(Metrics of Negative Type)} \\
        &\leq \,\, (1+\gamma + \sqrt{2\gamma}) \,\, \cdot \min_{\substack{\mb{Q},\mb{R} \in \Pi_{\bullet}(\bm{u}_{n},\,\bm{g}),\\\bm{g} \in \Delta_{K}}}  \,\, \langle \mb{C}, \mb{Q}\,\mathrm{diag}(\bm{g}^{-1})\,
        \mb{R}^{\top} \rangle_{F}, \,\, \tag*{(Kernel Costs)} \\
        &\leq \,\, (1+\gamma + \rho) \,\, \cdot \min_{\substack{\mb{Q},\mb{R} \in \Pi_{\bullet}(\bm{u}_{n},\,\bm{g}),\\\bm{g} \in \Delta_{K}}}  \,\, \langle \mb{C}, \mb{Q}\,\mathrm{diag}(\bm{g}^{-1})\,
        \mb{R}^{\top} \rangle_{F}, \,\, \tag*{(General Metrics)} 
    \end{align*}
    where $\gamma \in [0,1]$ is the ratio of the  cost of the optimal rank $n$ and $K$ 
    solutions and $\rho \in [0,1]$ is the asymmetry coefficient of the cluster-variances (defined formally in Lemma~\ref{lemma:folklore_metric_bounds}). 
\end{theorem}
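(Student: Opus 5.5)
Let $(\mb{Q}^\star,\mb{R}^\star,\bm{g}^\star)$ be an optimal hard low-rank solution with partitions $\{\mathcal{C}_{X,k}\},\{\mathcal{C}_{Y,k}\}$ and value $\mathrm{OPT}_K$, and let $\sigma^\star$ be the Monge permutation with value $\mathrm{OPT}_n=\frac1n\sum_i c(x_i,y_{\sigma^\star(i)})=\gamma\,\mathrm{OPT}_K$. The plan is to exhibit a single feasible $\mb{Q}$ for the registered problem whose cost is at most the stated multiple of $\mathrm{OPT}_K$. Since the left-hand side is a minimum, this suffices.

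The natural candidate is $\mb{Q}=\mb{Q}^\star$, i.e.\ the $X$-partition of the optimum. Unwinding $\tilde{\mb{C}}=\mb{C}\mb{P}_{\sigma^\star}^\top$, its registered cost is
\[
\frac1n\sum_k \frac{1}{|\mathcal{C}_{X,k}|}\sum_{i,i'\in\mathcal{C}_{X,k}} c\bigl(x_i,y_{\sigma^\star(i')}\bigr).
\]
This compares $X$-cluster points against the Monge images of the same cluster, whereas $\mathrm{OPT}_K$ compares $\mathcal{C}_{X,k}$ against $\mathcal{C}_{Y,k}$. I would therefore also consider the symmetric candidate given by the $Y$-partition pulled back through $\sigma^\star$, namely the clusters $\sigma^{\star-1}(\mathcal{C}_{Y,k})$. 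Its registered cost is $\frac1n\sum_k\frac1{|\mathcal{C}_{Y,k}|}\sum_{j,j'\in\mathcal{C}_{Y,k}}c(x_{\sigma^{\star-1}(j)},y_{j'})$. I would then bound the minimum of the two candidates by their average.

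For each candidate, apply the relevant triangle-type inequality through the Monge partner. For a metric, $c(x_i,y_{\sigma(i')})\le c(x_i,y_j)+c(y_j,x_{i''})+c(x_{i''},y_{\sigma(i')})$, averaged over $j\in\mathcal{C}_{Y,k}$ and suitable intermediate points. Averaging over cluster members turns the first term into the $\mathrm{OPT}_K$ contribution and the Monge term into $\mathrm{OPT}_n$. This leaves a within-cluster ``variance'' term on $X$ or $Y$, and that term is where the asymmetry coefficient $\rho$ of Lemma~\ref{lemma:folklore_metric_bounds} enters, giving $1+\gamma+\rho$.

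For kernel costs, write $c=\|\phi(x)-\phi(y)\|^2$ and expand around the cluster means $\bm{\mu}_k^X,\bm{\mu}_k^Y$ in feature space, as in the decomposition of Remark~\ref{remark:LOT_as_2KMeans}. The candidate's cost becomes the $X$-variance plus the $\sigma^\star(X)$-variance plus the mean-gap terms. Using $\phi(y_{\sigma(i)})=\phi(x_i)+(\phi(y_{\sigma(i)})-\phi(x_i))$, the extra contribution beyond $\mathrm{OPT}_K$ is bounded by $\mathrm{OPT}_n$ plus a cross term. Cauchy--Schwarz bounds that cross term by $\sqrt{2\,\mathrm{OPT}_n\,\mathrm{OPT}_K}$, which yields the factor $1+\gamma+\sqrt{2\gamma}$.

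For negative-type metrics, conditional negative semidefiniteness says $\sum\alpha_i\alpha_j c\le0$ for $\sum\alpha=0$. Choosing $\alpha$ as the difference of the uniform measures on the two clusters gives the inequality $2\,c(A,B)\ge c(A,A)+c(B,B)$ for cluster averages. This removes the asymmetry/cross term, and averaging the two candidate costs then gives exactly $\mathrm{OPT}_K+\mathrm{OPT}_n$.

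Finally, $\gamma\le1$ holds because the optimal rank-$n$ plan is a relaxation of the rank-$K$ problem. The main obstacle is the kernel cross term. One must arrange the expansion so that the Monge displacement pairs with a quantity bounded by $2\,\mathrm{OPT}_K$, to get the constant $\sqrt{2\gamma}$ rather than something worse. I would first try averaging the two candidates before applying Cauchy--Schwarz.
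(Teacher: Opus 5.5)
Your two-candidate scheme is exactly the paper's: the candidates are $\mathcal{X}^*$ and $\sigma^{\star-1}(\mathcal{Y}^*)$, and the minimum is bounded by the average. Your kernel argument also goes through if you average before applying Cauchy--Schwarz, as you suggest. Write $\sigma=\sigma^\star$, $M=\sum_i c(x_i,y_{\sigma(i)})$, and $\mathcal{J}^*=\sum_k|\mathcal{C}_{X,k}|^{-1}\sum_{i\in\mathcal{C}_{X,k},\,j\in\mathcal{C}_{Y,k}}c(x_i,y_j)=n\,\mathrm{OPT}_K$. Per candidate you get $M+2V_X+2\sqrt{MV_X}$ (resp.\ with $V_Y$), where $V_X,V_Y$ are the feature-space within-cluster variances of the optimal co-clusters. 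Averaging, then using $\sqrt{V_X}+\sqrt{V_Y}\le\sqrt{2(V_X+V_Y)}$ and $V_X+V_Y\le\mathcal{J}^*$ (the mean-gap term in Remark~\ref{remark:LOT_as_2KMeans} is nonnegative), gives $1+\gamma+\sqrt{2\gamma}$. This is the paper's termwise Young bound optimized at $t=\sqrt{2\gamma}$.

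The genuine problem is the metric step. Take your chain $c(x_i,y_{\sigma(i')})\le c(x_i,y_j)+c(y_j,x_{i'})+c(x_{i'},y_{\sigma(i')})$, average over $j\in\mathcal{C}_{Y,k}$, and sum over $i,i'\in\mathcal{C}_{X,k}$ with weight $1/|\mathcal{C}_{X,k}|$. This leaves no within-cluster variance term: the middle term averages to a second full copy of the cross cost. You therefore get only $n$ times your first candidate's registered cost $\le M+2\mathcal{J}^*$, a $(2+\gamma)$ factor. The extra hop through $y_j$ has already traded the $X$-dispersion for cross cost. Nothing is left for conditional negative semidefiniteness or for $\rho$ to act on, so neither $1+\gamma$ nor $1+\gamma+\rho$ is reachable this way.

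The fix is a single hop through the Monge partner, and then stop. The inequality $c(x_i,y_{\sigma(i')})\le c(x_i,x_{i'})+c(x_{i'},y_{\sigma(i')})$ bounds the first candidate by $M+\sum_k|\mathcal{C}_{X,k}|^{-1}\sum_{i,i'\in\mathcal{C}_{X,k}}c(x_i,x_{i'})$. Symmetrically, $c(x_{\sigma^{-1}(j)},y_{j'})\le c(x_{\sigma^{-1}(j)},y_j)+c(y_j,y_{j'})$ bounds the pulled-back candidate by $M$ plus the $Y$-dispersion of the optimal $Y$-partition.

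Only after averaging do you hold the sum of the $X$- and $Y$-dispersions of the same equal-size optimal co-clusters. Then, cluster by cluster, the conditional negative semidefiniteness inequality applies with $\alpha$ equal to the indicator of $\mathcal{C}_{X,k}$ minus that of $\mathcal{C}_{Y,k}$. It bounds half this sum by $\mathcal{J}^*$, giving $1+\gamma$. For general metrics, Lemma~\ref{lemma:folklore_metric_bounds} bounds half the sum by $(1+\rho)\mathcal{J}^*$. Since $\rho$ compares the two dispersions, it cannot enter through a single candidate's ``variance term on $X$ or $Y$''; the averaging is what makes it usable.
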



Note that the approximation ratio $\gamma \leq 1$  as the optimal cost decreases monotonically 
with the rank. Consequently,
the  upper bound in Theorem  \ref{thm:reduction_to_kcut} is at worst a $2$-approximation for negative type metric costs and at worst a $(2+\sqrt{2)}$-approximation for kernel costs. Further, following the argument in \citep{scetbon2022lowrank}, 
$\gamma$ is typically much smaller than one, especially
for small $r \ll n$. Finally, we
note that the statements of Theorem 
\ref{thm:reduction_to_kcut} holds even
when $\bm{g}$ is held fixed in both the upper and lower bounds. This follows
from analyzing the proof of the theorem.

Next, we show that the derived approximation ratios are essentially tight
and cannot be further improved without additional assumptions. Specifically,
we show that when $\bm{g}$ is fixed, the upper bound in Theorem \ref{thm:reduction_to_kcut} is realized by explicit examples
in $\mathbb{R}^2$. We provide separate examples for the Euclidean and squared Euclidean distances (Appendix \ref{appendix:lowerbounds}). Formally, we have the following result.
\begin{proposition}
    \label{prop:lowerbound}
    For all $\epsilon > 0$, there exists an integer $n$ and $X,Y$ of size $n$ such that for the cost matrix $\mb{C} \in \mathbb{R}_{+}^{n \times n}$ induced on these points by either the Euclidean or squared Euclidean cost,
    \begin{align*}
    &\min_{\substack{\mb{Q} \in \Pi_{\bullet}(\bm{u}_{n},\,\bm{g})}}  \,\, \langle\tilde{\mb{C}}, \mb{Q}\,\mathrm{diag}(\bm{g}^{-1})\,\mb{Q}^{\top}\rangle_{F} \\
    &\geq (2-\epsilon) \tag*{(Euclidean Metric)}  \,\, \cdot \min_{\substack{\mb{Q},\mb{R} \in \Pi_{\bullet}(\bm{u}_{n},\,\bm{g})}}  \,\, \langle \mb{C},\mb{Q}\,\mathrm{diag}(\bm{g}^{-1})\,
        \mb{R}^{\top} \rangle_{F}, \\
        &\geq (3-\epsilon) \,\, \cdot \min_{\substack{\mb{Q},\mb{R} \in \Pi_{\bullet}(\bm{u}_{n},\,\bm{g})}}  \,\, \langle \mb{C},\mb{Q}\,\mathrm{diag}(\bm{g}^{-1})\,
        \mb{R}^{\top} \rangle_{F}, \tag*{(Squared-Euclidean Distance)}
    \end{align*}
    for some $\bm{g} \in \Delta_{K}$ and Monge registered cost $\tilde{\mb{C}} = \mb{C} \mb{P}_{\sigma^{\star}}^\top$.
\end{proposition}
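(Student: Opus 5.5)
The plan is to build explicit families of point clouds $X,Y\subset\mathbb{R}^2$ for which every inequality in the proof of Theorem~\ref{thm:reduction_to_kcut} becomes (asymptotically) tight. First I would reverse-engineer which inequalities the upper bound uses. For a registered cluster $\mathcal{C}$, the registered distortion $\frac{1}{|\mathcal{C}|}\sum_{i,j\in\mathcal{C}} c(x_i,y_{\sigma^\star(j)})$ is bounded by splitting each term through $x_j$ (metric case) or through a three-term expansion of $\lVert\phi(x_i)-\phi(y_{\sigma^\star(j)})\rVert^2$ (kernel case). That split yields an intra-cluster spread term plus the Monge term, with $\gamma\le 1$. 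Hence a tight example needs three things at once: (a) $\gamma\to 1$, i.e.\ the rank-$K$ optimum with the fixed $\bm{g}$ costs asymptotically no more than the full-rank optimum; (b) every $\bm{g}$-feasible clustering of the registered pairs $(x_i,y_{\sigma^\star(i)})$ is forced to group pairs whose $x$-parts are far apart; and (c) these far-apart $x$'s are aligned with the displacements $y_{\sigma^\star(j)}-x_j$. Condition (c) makes the triangle inequality an equality in the metric case and makes the cross term $2\langle\phi(x_i)-\phi(x_j),\phi(x_j)-\phi(y_{\sigma^\star(j)})\rangle$ maximal in the squared case. The extra $+1$ (giving $3$ rather than $2$) for the squared Euclidean cost should come from exactly this cross term, matching the $\sqrt{2\gamma}$ term at $\gamma=1$ up to the Cauchy--Schwarz slack, which the example will not saturate.

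Concretely, I would use the freedom of fixing $\bm{g}$, taking it to be non-uniform, to force the registered clustering to merge heterogeneous pairs. I would take $K=2$ with $\bm{g}$ having one large block. I would place the points on a line (embedded in $\mathbb{R}^2$) in a few groups, with a Monge map $\sigma^\star$ that shifts each group by a unit displacement in the same direction. The low-rank solution, which is free to use \emph{different} partitions $\pi_X\neq\pi_Y$, can then pair each $X$-group with a nearby $Y$-group at cost about equal to the Monge cost. The registered problem, by contrast, must put $x_j$ and $y_{\sigma^\star(j)}$ in the same cluster. With the block sizes dictated by $\bm{g}$, some cluster is then forced to contain pairs from two groups at distance about $1$. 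This collects roughly one extra unit of cost (Euclidean), or about $1+2\cdot 1$ extra (squared Euclidean), per matched point. If the full-rank plan has ties, I would break them by small perturbations, so that $\sigma^\star$ is the unique optimum and the statement holds for the true Monge registration.

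The steps are as follows. (1) Fix a small template, meaning group positions and multiplicities parametrized by $t$, together with $\bm{g}$. (2) Compute the Monge map by the monotone rearrangement on the line, which is valid for both costs since both are convex functions of the difference. (3) Exhibit a feasible $(\mb{Q},\mb{R})$ whose cost gives an upper bound on the right-hand side. (4) Lower-bound the left-hand side over \emph{all} $\mb{Q}\in\Pi_{\bullet}(\bm{u}_n,\bm{g})$. (5) Send $n\to\infty$ with group multiplicities and $t\to\infty$ (or $0$), so that the ratio tends to $2$, respectively $3$. I would replicate each group $m$ times so that boundary effects from non-divisibility vanish.

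The main obstacle is step (4): the minimum of the generalized $K$-means objective over all hard partitions of prescribed sizes must be bounded below, not just for the intended partition. I would handle this by showing that, in the template, any cluster that mixes pairs from different groups pays at least the claimed cross-group cost. This follows by writing each cluster's distortion as a sum over group-count variables. The objective is then a function of a few integer counts per cluster, concave or otherwise explicitly minimizable, so the minimum is attained at an extreme allocation that can be checked by hand. If the line construction fails to force mixing for the squared cost, I would move a second coordinate into the construction, stacking displacements orthogonally to the group offsets, and repeat the count argument.
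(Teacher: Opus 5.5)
\textbf{There is a genuine gap.} You never fix a construction, and the one you sketch produces no gap at all.

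\emph{The common-shift construction gives ratio exactly $1$.} Suppose the Monge map is a common shift, $y_{\sigma^\star(j)}=x_j+d$ for all $j$, and the cost is squared Euclidean.
\begin{itemize}
\item The cross term cancels by antisymmetry, $\sum_{i,j\in C}\langle x_i-x_j,d\rangle=0$. So a registered cluster $C$ costs exactly $\frac{1}{|C|}\sum_{i,j\in C}\lVert x_i-x_j\rVert_2^2+|C|\lVert d\rVert_2^2=2\,\mathrm{SS}(C)+|C|\lVert d\rVert_2^2$, where $\mathrm{SS}(C)$ is the within-cluster sum of squares.
\item Conversely, use Remark~\ref{remark:LOT_as_2KMeans} together with Jensen on the mean term, $\sum_k|\mathcal{C}_k|\lVert\bm{\mu}^X_k-\bm{\mu}^Y_k\rVert_2^2\ge n\lVert d\rVert_2^2$. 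Then every co-clustering with sizes $n\bm{g}$ costs at least $\sum_k[\mathrm{SS}(X_k)+\mathrm{SS}(Y_k)]+n\lVert d\rVert_2^2$. Since $Y$ is a translate of $X$, this is at least twice the size-constrained $K$-means cost of $X$, plus $n\lVert d\rVert_2^2$.
\item Registering the optimal size-constrained $K$-means partition of $X$ attains this bound, so the ratio is $1$ for \emph{every} $\bm{g}$.
\end{itemize}
A non-uniform $\bm{g}$ cannot create a gap: the unconstrained low-rank problem faces the same cluster sizes and must mix groups just as much.

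\emph{The line itself is the problem.} On a line the Monge map for either cost can be taken to be the monotone rearrangement. In configurations of this type, that is exactly the matching that induces good co-clusters. For example, project the paper's Euclidean example onto the first axis and take $K=2$ with uniform $\bm{g}$. The monotone registration then costs $2$, which equals the full-rank optimum and hence the low-rank optimum.

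\emph{The missing idea} is an unstable near-tie between two very different matchings. An $\epsilon$-offset in a \emph{second} coordinate breaks the tie so that the \emph{unique} Monge map is a non-monotone ``long jump'' that straddles the natural clusters. The slightly more expensive alternative matching supports a cheap co-clustering.

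In the Euclidean case the paper takes the following points:
\begin{itemize}
\item $P=(0,\epsilon)\in X$ and $Q=(2,\epsilon)\in Y$;
\item one point $M=(1,-\epsilon)$ in each set;
\item $k$ coincident $X$/$Y$ pairs $L^i$ near $(0,0)$ and $R^i$ near $(2,0)$.
\end{itemize}
The chain matching $P\mapsto M$, $M\mapsto Q$ costs $2\sqrt{1+4\epsilon^2}>2$, so the Monge map sends $P\mapsto Q$ and fixes everything else.

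With $K=2$ and uniform $\bm{g}$, the free co-clustering $(\{P\}\cup L,\{M\}\cup L)$, $(\{M\}\cup R,\{Q\}\cup R)$ follows the chain and costs $2+O(\epsilon)$, so $\gamma\to1$. In any registered clustering, the pair $(P,Q)$ sits at both ends at once. A count in $l=|\{i:L^i\in X_1\}|$ then gives cost at least $(4k+2)/(k+1)\to4$, hence ratio $2$. That count is a concave quadratic checked at its endpoints, as in your step (4).

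The squared case uses the same mechanism on $[0,2]^2$:
\begin{itemize}
\item top and bottom points $(1+\epsilon,2)$ and $(1-\epsilon,0)$ in $X$;
\item side points $(0,1)$ and $(2,1)$ in $Y$;
\item $k$ coincident pairs at the corners $(0,2)$ and $(2,0)$.
\end{itemize}
The Monge map sends top to right and bottom to left, at cost $4-4\epsilon+2\epsilon^2$. This beats by $8\epsilon$ the matching compatible with the corner clusters, and yields about $12$ versus about $4$, i.e.\ ratio $3$.

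So the second coordinate is not a fallback but the essential ingredient. On a line the monotone map is always optimal for $\lvert x-y\rvert$, so no perturbation along the line can make the long jump the unique Monge map.
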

The preceding lower
bounds rely on (1) \emph{unstable} arrangements of points,
where the Monge map changes dramatically upon an 
$\epsilon$-perturbation, and (2) a limit where the size of a the sets of the points ${X}, Y$ is taken to $\infty$ as $|{X}|\uparrow \infty ,\,\, |{Y}| \uparrow \infty$. In finite settings with stable Monge maps, the approximation ratios in 
Theorem \ref{thm:reduction_to_kcut} may be greatly improved.

\textbf{Transport Registered Initialization with $K$-means and
$K$-medians.} \label{subsec:theory2} In the preceding section, we derived constant factor approximation guarantees
by reducing low-rank OT to generalized $K$-means via Algorithm
\ref{alg:monge_KMeans}. However, $\Tilde{\mb{C}}$ does not necessarily express a matrix of intra-dataset distances, so that even for kernel costs and metrics one cannot directly solve generalized $K$-means using $K$-means or $K$-medians. In Theorem 2, we show that by solving $K$-means or $K$-medians clustering optimally on $X,\,Y$ separately to yield $\mb{Q}_{X},\mb{R}_{Y}$ and taking the \emph{minimum} of the Monge-registered solutions $(\mb{Q}_{X}, \mb{P}_{\sigma^{*}}^{\top}\mb{Q}_{X} )$ and $(\mb{P}_{\sigma^{*}}\mb{R}_{Y}, \mb{R}_{Y} )$ in cost $(\mb{Q}, \mb{R}) \to \langle \mb{C}, \mb{Q}\diag(1/\mb{Q}^{\top}\mb{1}_{n}) \mb{R}^{\top} \rangle$, the constant factor approximation guarantees are preserved. In other words, using the best initialization in generalized $K$-means between $\mb{Q}^{(0)} =\mb{Q}_{X}$ and $\mb{Q}^{(0)} =\mb{P}_{\sigma^{*}}\mb{R}_{Y}$ by solving $K$-means or $K$-medians already ensures a constant-factor approximation to low-rank OT on initialization, which only requires an algorithm for generalized $K$-means with a local descent guarantee to maintain the approximation.
 
Let $\mathcal{A}_1 ,  \mathcal{A}_2$ denote blackbox $(1 + \epsilon)$-approximation algorithms for $K$-means and $K$-medians. For example, such polynomial time approximation algorithms exist 
when the dimension is fixed for $K$-means \citep{1peps}, and $(1 + \epsilon)$ approximation algorithms exist for $K$-medians \citep{Kolliopoulos2007}.
Then, we have the following guarantee for Algorithm \ref{alg:monge_KMeans} when using
Algorithm \ref{alg:gen_kmeans_initialization} to implement step 2 of the procedure. 
\begin{theorem}\label{thrm:K_means_init_guarantee}
    Let $\mb{C}$ be a $n$-by-$n$ cost matrix either induced by i) a metric of negative type, ii) a kernel
    cost, or iii) a cost satisfying the triangle inequality. 
    Let $(\mb{Q}^*, \mb{R}^*)$ be the solution output by using Algorithm \ref{alg:gen_kmeans_initialization} for step (ii) of Algorithm \ref{alg:monge_KMeans} with oracles $\mathcal{A}_1$ and $\mathcal{A}_2$. Then, 
    \begin{align*} &(1+\epsilon)^{-1}\cdot\langle \mb{C},  \mb{Q}^*\,\mathrm{diag}(1/(\mb{Q}^*)^\top\mb{1}_n)\,(\mb{R}^*)^{\top} \rangle_{F} \,\, \\
    & \leq (2+2\gamma) \,\, \cdot \min_{\substack{\mb{Q},\mb{R} \in \Pi_{\bullet}(\bm{u}_{n},\,\bm{g}),\\\bm{g} \in \Delta_{K}}}  \,\, \langle \mb{C}, \mb{Q}\,\mathrm{diag}(\bm{g}^{-1})\,
        \mb{R}^{\top} \rangle_{F} \,\, \tag*{(Metrics of Negative Type)} \\
        &\leq \,\, (1+\gamma + \sqrt{2\gamma}) \,\, \cdot \min_{\substack{\mb{Q},\mb{R} \in \Pi_{\bullet}(\bm{u}_{n},\,\bm{g}),\\\bm{g} \in \Delta_{K}}}  \,\, \langle \mb{C}, \mb{Q}\,\mathrm{diag}(\bm{g}^{-1})\,
        \mb{R}^{\top} \rangle_{F} \,\, \tag*{(Kernel Costs)} \\
        &\leq \,\, (2+2\gamma + 2\rho) \,\, \cdot \min_{\substack{\mb{Q},\mb{R} \in \Pi_{\bullet}(\bm{u}_{n},\,\bm{g}),\\\bm{g} \in \Delta_{K}}}  \,\, \langle \mb{C}, \mb{Q}\,\mathrm{diag}(\bm{g}^{-1})\,
        \mb{R}^{\top} \rangle_{F} \,\, \tag*{(General Metrics)} 
    \end{align*}
    where $\gamma, \rho \in [0,1]$ are defined as in Theorem~\ref{thm:reduction_to_kcut}.
\end{theorem}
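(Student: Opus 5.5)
We take as given the Theorem~\ref{thm:reduction_to_kcut} argument, in its form for a \emph{fixed} candidate $\mb{Q}$, together with the cluster-variance decomposition behind it (Lemma~\ref{lemma:folklore_metric_bounds}). We also use the fact that Algorithm~\ref{alg:gen_kmeans_initialization} outputs the cheaper of the two Monge-registered candidates $(\mb{Q}_X, \mb{P}_{\sigma^\star}^\top \mb{Q}_X)$ and $(\mb{P}_{\sigma^\star}\mb{R}_Y, \mb{R}_Y)$. Here $\mb{Q}_X$ and $\mb{R}_Y$ are $(1+\epsilon)$-approximate $K$-means or $K$-medians partitions of $X$ and $Y$, obtained from $\mathcal{A}_1$ or $\mathcal{A}_2$.

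\textbf{Step 1: bound each candidate.} Fix an optimal hard low-rank solution $(\mb{Q}^\circ,\mb{R}^\circ,\bm{g})$ with cost $\mathrm{OPT}$. For any $\mb{Q}$, the cost of $(\mb{Q},\mb{P}_{\sigma^\star}^\top\mb{Q})$ equals $\langle \tilde{\mb{C}}, \mb{Q}\diag(1/\mb{Q}^\top\mb{1})\mb{Q}^\top\rangle$. For each co-cluster $k$, bound the registered entry $c(x_i, y_{\sigma^\star(j)})$ via the triangle inequality (metric case) or the kernel identity. This gives
\[
\mathrm{cost}(\mb{Q}) \le \mathrm{Var}_X(\mb{Q}) + (\text{Monge term} \le \gamma\,\mathrm{OPT}) + \text{cross term},
\]
where $\mathrm{Var}_X(\mb{Q})$ is the intra-$X$ cluster dispersion of $\mb{Q}$ measured in $c$. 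Symmetrically, $\mathrm{cost}(\mb{P}_{\sigma^\star}\mb{R}_Y) \le \mathrm{Var}_Y(\mb{R}_Y) + \gamma\,\mathrm{OPT} + \cdots$.

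\textbf{Step 2: compare with the optimal co-clustering.} Since $\mathcal{A}_1$ and $\mathcal{A}_2$ are $(1+\epsilon)$-approximate, $\mathrm{Var}_X(\mb{Q}_X) \le (1+\epsilon)\mathrm{Var}_X(\mb{Q}^\circ)$ and $\mathrm{Var}_Y(\mb{R}_Y) \le (1+\epsilon)\mathrm{Var}_Y(\mb{R}^\circ)$. From Lemma~\ref{lemma:folklore_metric_bounds} (the decomposition analogous to Remark~\ref{remark:LOT_as_2KMeans}), the within-cluster dispersions of the optimal co-clustering satisfy
\[
\mathrm{Var}_X(\mb{Q}^\circ)+\mathrm{Var}_Y(\mb{R}^\circ) \le \mathrm{OPT} \quad \text{(kernel case, with exact equality minus the centroid term)},
\]
and $\le 2\,\mathrm{OPT}$, up to $\rho$, for metrics.

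\textbf{Step 3: take the minimum.} The output cost is at most the average of the two candidate bounds, so the two variance terms are combined. Kernel case: averaging gives $\tfrac12(\mathrm{Var}_X+\mathrm{Var}_Y)$ plus the Monge and cross terms. Handling the cross term by Cauchy--Schwarz as in Theorem~\ref{thm:reduction_to_kcut} then yields $(1+\epsilon)(1+\gamma+\sqrt{2\gamma})\mathrm{OPT}$. Metric case: each candidate's variance term can only be bounded by $\mathrm{Var}_X \le \mathrm{OPT}$-type quantities with a factor of $2$, because $K$-medians distortion relates to pairwise dispersion up to a factor of $2$ through the triangle inequality. This doubles the Theorem~\ref{thm:reduction_to_kcut} constants, giving $2+2\gamma$ for negative-type metrics and $2+2\gamma+2\rho$ for general metrics.

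\textbf{Main obstacle.} The hard step is the cross term. The proof of Theorem~\ref{thm:reduction_to_kcut} uses the same optimal $\mb{Q}^\circ$ on both sides, whereas here $\mb{Q}_X$ and $\mb{R}_Y$ come from independent clusterings. I would handle this by showing the bound of Theorem~\ref{thm:reduction_to_kcut} holds for an \emph{arbitrary} $\mb{Q}$, with $\mathrm{Var}_X(\mb{Q})$ replacing the optimal variance. I would then verify that the cross term is controlled only through $\mathrm{Var}_X(\mb{Q})$ and the Monge cost. If that fails in the kernel case, the fallback is to bound the cross term as $\sqrt{2\,\mathrm{Var}\cdot\gamma\mathrm{OPT}}$ and apply AM-GM after taking the minimum of the two candidates.
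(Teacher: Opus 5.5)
Your plan is essentially the paper's proof. You bound each Monge-registered candidate by its own-side dispersion plus $M_{\sigma}$, with a Cauchy--Schwarz/Young cross term in the kernel case (Lemmas~\ref{lemma:metric_intra_ub} and~\ref{lemma:euclidean_intra_ub}); the $\mathcal{J}_2$ bound there involves only $\mathcal{X}$ and the $\mathcal{J}_1$ bound only $\mathcal{Y}$, so the obstacle you flag dissolves. You then compare each dispersion to the optimal co-clustering's via $(1+\epsilon)$-optimality (losing a factor $2$ through $K$-medians for metrics), apply Lemma~\ref{lemma:intra_cndbound} or Lemma~\ref{lemma:folklore_metric_bounds}, and use $\min\le$ average.

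One bookkeeping fix is needed in Step~2. In your Step~1 normalization (cost $=\mathrm{Var}_X+M_\sigma+$ cross), the kernel-case bound should read $\tfrac12(\mathrm{Var}_X+\mathrm{Var}_Y)\le\mathrm{OPT}$, exactly as for negative-type metrics, rather than $\mathrm{Var}_X+\mathrm{Var}_Y\le\mathrm{OPT}$. Kernel costs are conditionally negative semidefinite, so this comes from Lemma~\ref{lemma:intra_cndbound} or Remark~\ref{remark:LOT_as_2KMeans}, not Lemma~\ref{lemma:folklore_metric_bounds}. This corrected factor is what produces $(1+\epsilon)(1+\gamma+\sqrt{2\gamma})$; your inequality as written would give a strictly smaller, unjustified constant.
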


\begin{figure*}
    \centering\includegraphics[width=1.0\linewidth]{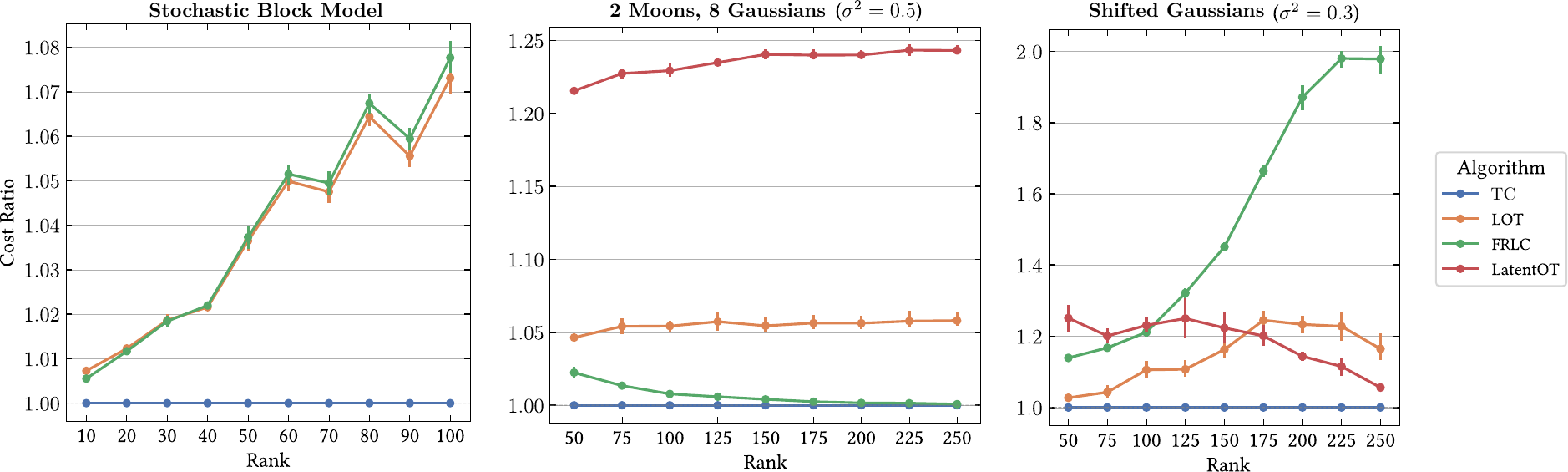}\caption{\label{fig:synthetic_results} The relative cost of the rank 
    $K \in \{50, 75, \ldots, 250\}$ transport plan inferred \LOT, \FRLC, and \LIN compared to the 
    cost of the transport plan inferred by \ourmethodshort across $315$ synthetic instances (lower is 
    better). Each dataset contains $n = m = 5000$ data points. LatentOT is excluded from the 
    stochastic block model evaluation as it
    takes as input a squared Euclidean cost matrix.}
\end{figure*}

\textbf{Generalized $K$-means Solver.}
\label{section:algo_genkmeans}
To solve the generalized $K$-means problem we propose (1) a mirror-descent algorithm called \texttt{GKMS} that solves generalized $K$-means locally, like Lloyd's algorithm, and (2) a semidefinite programming based approach. \texttt{GKMS} solves a sequence of diagonal, one-sided Sinkhorn projections \citep{sinkhorn} of a classical exponentiated gradient update. Suppose $(\gamma_k)_{k =1}^{\infty}$ is a positive sequence of step sizes for a mirror-descent with respect to the $\mathrm{KL}$ divergence. Then, the update for $\mb{Q}^{(k)}$ is given by:
\vspace{-0.5em}\begin{align}\label{alg:gen_KMeans}
\mb{Q}^{(k+1)}= P_{\bm{u}_{n}, \cdot}\left(
\mb{Q}^{(k)} 
\odot  \exp\left(-
\gamma_{k} \nabla_{\mb{Q}}\mathcal{F}\mid_{\mb{Q}^{(k)}}
\right)
\,\right),
\end{align}

\vspace{-1em}where $\mathcal{F}(\mb{Q})$ is the cost $ \langle \tilde{\mb{C}}, \mb{Q} \diag(1/\mb{Q}^{\top} \mb{1}_{n}) \mb{Q}^{\top} \rangle_{F}$ and $P_{\bm{u}_{n}, \cdot}(\mb{X}) = \diag(\bm{u}_{n}/ \mb{X
}\mb{1}_{K})\mb{X}$ is a Sinkhorn projection onto the set of positive matrices with marginal $\bm{u}_{n}$, $\Pi(\bm{u}_{n}, \cdot)=\{ \mb{X} \in \mathbb{R}_{+}^{n\times K} :\,\mb{X}\mb{1}_K = \bm{u}_{n} \}$. Observe that solving low-rank OT \eqref{eq:hardLOT} with the constant-factor guarantees of Theorem~\ref{thrm:K_means_init_guarantee} only requires \eqref{alg:gen_KMeans} to decrease the cost from the initialization of Algorithm~\ref{alg:gen_kmeans_initialization}. We show in Proposition~\ref{prop:descent_GKMS} that assuming a $\delta$ lower bound on $\mb{Q}^{\top}\mb{1}_{n}$ (similar to \cite{Scetbon2021LowRankSF, FRLC}), relative-smoothness to the entropy mirror-map $\psi$ holds $\lVert \nabla \mathcal{F}^{(k+1)}-\nabla\mathcal{F}^{(k)} \rVert_{F}
 \leq \beta\,\, \lVert \nabla \psi^{(k+1)}-\nabla \psi^{(k)} \rVert_{F}$ for $\beta= \mathrm{poly}(n, \lVert \mb{C} \rVert_{F}, \delta)$. By the descent lemma \citep{Lu2018}, this implies that Theorem~\ref{thrm:K_means_init_guarantee} provides upper bounds on the quality of the final solution of \texttt{GKMS}. See Appendix~\ref{sec:GKMS} for more details on the \texttt{GKMS} algorithm and Appendix~\ref{subsec:alg_sdp} for a semidefinite programming approach

\textbf{Complexity Analysis.}\label{sec:Complexity} The time and space complexity of Algorithm~\ref{alg:monge_KMeans} depends on the complexity of optimal transport and generalized K-means. Procedures such as \cite{Agarwal2024} and \cite{halmos2025hierarchical} yield approximate OT solutions with $\Tilde{\mathcal{O}}(n)$ time and $\mathcal{O}(n)$ space complexity for constant dimension $d$. \texttt{GKMS} requires $\mathcal{O}(ndr)$ iteration complexity if the cost is factorized $\mb{C}=\mb{U}_{d}\mb{V}_{d}^{\top}$ and $\mathcal{O}(nr)$ space to store $\mb{Q}$.  In addition, recent SDP approaches for $K$-means using the Burer-Monteiro factorization \cite{zhuang2023statistically} likewise provide linear time and space complexity for generalized $K$-means. 

\section{Numerical Experiments}


\textbf{Synthetic Validation.}

We evaluated \ourmethod\ against \LOT \citep{Scetbon2021LowRankSF}, \FRLC \citep{FRLC}, and \LIN \citep{lin2021making} on three synthetic benchmarks ($n=m=5000$): (1) 2-Moons to 8-Gaussians \citep{tong2023improving}, (2) shifted Gaussians (SG), and (3) the stochastic block model (SBM). We varied noise levels $\sigma^2$ and ranks $K \in [10,250]$ across five random seeds. See Appendix \ref{appendix:synthetic} for full simulation details.

To evaluate the low-rank OT methods, we computed the relative cost of the low-rank OT plans
output by existing methods compared to the cost of the low-rank OT plan output by \ourmethodshort.
Across all synthetic datasets, \ourmethodshort was consistently the best performing method 
in terms of minimizing the low-rank OT cost (Figure \ref{fig:synthetic_results}). On the 2M-8G dataset, \ourmethodshort outperformed all methods in the highest noise setting
(Figure \ref{fig:synthetic_results}, \ref{fig:twomoons}) and was slightly
 ($\leq1\%$ difference) outperformed by \FRLC
in the low noise, high rank setting. On the SG dataset, \ourmethodshort was 
the top performing method and obtained an average relative improvement of $23\%$ compared
to the next best performing method \LOT (Figure \ref{fig:synthetic_results}, \ref{fig:shifted_gaussians}). On the SBM dataset, \ourmethodshort outperformed all methods and
obtained an average relative improvement of $4\%$ compared to the next best performing method
\LOT (Figure \ref{fig:synthetic_results}, \ref{fig:sbm_cost}).

To evaluate co-cluster recovery, we computed the ARI/AMI with reference to the ground truth
clusters when the rank $K$ matched the true number of clusters 
($K = 250$ for SG, $K = 100$ for SBM). On the
SG dataset, \ourmethodshort was the second best performing method (Figure \ref{fig:shifted_gaussians})
with a slightly worse average ARI/AMI than \LIN (\ourmethodshort 0.97/0.99; \LOT 0.94/0.98; \FRLC
0.60/0.88; \LIN: 1.00/1.00). On the SBM dataset,
\ourmethodshort was the best performing method (Figure \ref{fig:sbm_cost}) and obtained the highest average ARI/AMI (\ourmethodshort 0.09/0.20; \LOT 0.02/0.02; \FRLC 0.02/0.01). 

\begin{table}[]
\centering
\caption{Comparison of low-rank OT methods across three datasets: CIFAR-10 ($n=60{,}000$), smallest mouse embryo split ($n=18{,}819$), and largest mouse embryo split ($n=131{,}040$).}
\label{tab:lr-ot-combined}
\resizebox{\linewidth}{!}{%
\begin{tabular}{llc cccccc}
\toprule
Dataset & Method & Rank & OT Cost $\downarrow$ & AMI (A/B) $\uparrow$ & ARI (A/B) $\uparrow$ & CTA $\uparrow$ \\
\midrule
\multirow{3}{*}{\shortstack{CIFAR-10 \\ ($60{,}000$)}} 
    & \ourmethodshort & 10 & \textbf{231.200} & \textbf{0.478} / \textbf{0.476} & \textbf{0.358} / \textbf{0.356} & \textbf{0.412} \\
    & \FRLC       & 10 & 235.950 & 0.411 / 0.407 & 0.281 / 0.277 & 0.351 \\
  & \LOT        & 10 & 234.733 & 0.430 / 0.427 & 0.306 / 0.303 & 0.358 \\
\midrule
\multirow{3}{*}{\shortstack{Mouse embryo \\ E8.5 $\to$ E8.75 \\ ($18{,}819$)}} 
  & \ourmethodshort & 43 & \textbf{0.506} & \textbf{0.639} / \textbf{0.617} & \textbf{0.329} / \textbf{0.307} & \textbf{0.722} \\
  & \FRLC       & 43 & 0.553 & 0.556 / 0.531 & 0.217 / 0.199 & 0.525 \\
  & \LOT        & 43 & 0.520 & 0.605 / 0.592 & 0.283 / 0.272 & 0.611 \\
\midrule
\multirow{3}{*}{\shortstack{Mouse embryo \\ E9.5 $\to$ E9.75 \\ ($131{,}040$)}} 
  & \ourmethodshort & 80 & \textbf{0.389} & \textbf{0.554} / \textbf{0.551} & \textbf{0.172} / \textbf{0.169} & \textbf{0.564} \\
  & \FRLC       & 80 & 0.399 & 0.491 / 0.487 & 0.116 / 0.115 & 0.447 \\
  & \LOT        & 80 & --    & -- / -- & -- / -- & -- \\
\bottomrule
\end{tabular}}
\end{table}

\textbf{Co-Clustering on CIFAR10.} Following \cite{zhuang2023statistically} we applied low-rank OT methods to the CIFAR-10 dataset, which contains 60{,}000 images of size $32 \times 32 \times 3$ across 10 classes. We use a ResNet to embed the images to $d=512$ \citep{he2016deep} and apply a PCA to $d=50$, following the procedure of \cite{zhuang2023statistically}. We perform a stratified 50:50 split of the images into two datasets of 30{,}000 images with class-label distributions matched. We co-cluster these two datasets using the methods which scale to it: \ourmethodshort, \FRLC \citep{FRLC}, and \LOT \cite{Scetbon2021LowRankSF}. We set the rank $K=10$ to match the number of labels. For \ourmethodshort, we solve for $\mb{P}_{\sigma^{\star}}$ with \cite{halmos2025hierarchical} and solve generalized $K$-means with \texttt{GKMS}. On this 60k point alignment, \ourmethodshort\ attains the lowest OT cost of $231.20$ vs.\ \LOT ($234.73$) and \FRLC ($235.95$). To evaluate the co-clustering performance of \ourmethodshort\, we evaluate the AMI and ARI of the labels derived from the asymmetric factors against the ground-truth class label assignments (Table~\ref{tab:lr-ot-combined}). \ourmethodshort\ shows stronger agreement on both marginals (AMI/ARI: split~A $0.478/0.358$, split~B $0.476/0.356$) than \LOT ($0.430/0.306$, $0.427/0.303$) or \FRLC ($0.411/0.281$, $0.407/0.277$). To assess the accuracy of co-clustering across distinct domains, we computing the class-transfer accuracy (CTA): the fraction of mass aligned between ground-truth classes \emph{across datasets} over the total (for $\rho$ the class-class transport matrix, this is $\tr \rho \,/ \sum \rho_{k,k'}$).
\ourmethodshort\ attains a CTA of $0.412$, compared to \LOT ($0.358$) and \FRLC ($0.351$), indicating more accurate cross-domain label transfer. See Section~\ref{sec:cifar} for details.

\textbf{Large-Scale Single-Cell Transcriptomics.} 
Recent single-cell datasets have sequenced millions of nuclei from model organisms such as the mouse \citep{shendure2024, shendure2022} and zebrafish \citep{Liu2022-zz} across time to characterize cell-differentiation and stem-cell reprogramming. Optimal transport has emerged as the canonical tool for aligning single-cell datasets \citep{schiebinger2019optimal, zeira2022PASTE, PASTE2, destot, klein2023moscot}, and low-rank optimal transport has recently emerged as a tool to co-cluster or link cell-types across time, allowing one to infer a map of cell-type differentiation \citep{HMOT, klein2023moscot}. We benchmark the co-clustering and alignment performance of \ourmethodshort, \LOT, and \FRLC on a recent, massive-scale dataset of single-cell mouse embryogenesis \cite{shendure2024} measured across 45 timepoint bins with combinatorial indexing (sci-RNA-seq3). We align 7 time-points with $n=18819$-$131040$ cells (stages E8.5-E10.0)  for a total of 6 pairwise alignments (Table~\ref{tab:lr-ot-combined}, Supplementary Table~\ref{tab:sc-lr-ot}). We set the rank $K \in \{ 43, 53, 57, 67, 80, 77 \}$ to be the number of ground-truth cell-types. While \LOT runs up to E8.75-9.0 ($30240$ cells) and fails to compute an alignment past E9.0-E9.25 ($45360$ cells), we find \ourmethodshort\ and \FRLC scale to all pairs. Transport clustering yields lower OT cost, higher AMI, and higher ARI than both \LOT and \FRLC on all dataset pairs (Supplementary Table~\ref{tab:sc-lr-ot}). Notably, the co-clustering performance is also improved for all timepoints: as an example, on E8.5-8.75 \ourmethodshort\ achieves a CTA of $0.722$ and correctly maps the majority of mass between recurring cell-types across the datasets, compared to \LOT ($0.611$) and \FRLC ($0.525$). See Section~\ref{sec:single_cell} for further details.

\textbf{Estimation of Wasserstein Distances.} Lastly, we demonstrate the efficacy of \ourmethod\ as an estimator of Wasserstein distances. For two measures $\hat \mu_{n}, \hat \nu_{n}$ given by empirical $n$-sample estimates of $\mu, \nu$, \cite{forrow19a} introduced a robust estimator for Wasserstein distances derived from a low-rank factored coupling:
\vspace{-1em}\begin{equation}\label{eq:Forrow_estimator}
\hat W_{2}^{2}( \hat \mu_{n}, \hat \nu_{n}) = \sum_{j=1}^{K} \lambda_{j} \bigl\lVert \bm\mu_{j}^{(1)} - \bm \mu_{j}^{(2)}
\bigr\rVert_{2}^{2}
\end{equation}

\vspace{-1.5em}where $\lambda \in \Delta_{K}$ ($\bm{g}$) represents the barycentric measure, and $\bm \mu_{j}^{(1)},\, \bm \mu_{j}^{(2)}$ are the centroids of the first dataset $X$ ($\hat \mu_{n}$) and second dataset $Y$ ($\hat \nu_{n}$) induced by the couplings $\mb{Q}$ and $\mb{R}$. We evaluate this estimator on the fragmented hypercube benchmark of \cite{forrow19a} (Section 6.1), which admits a ground-truth distance of $W_{2}^{2}(\mu,\nu) = 8$.

\begin{figure}
    \centering
    \includegraphics[width=0.95\linewidth]{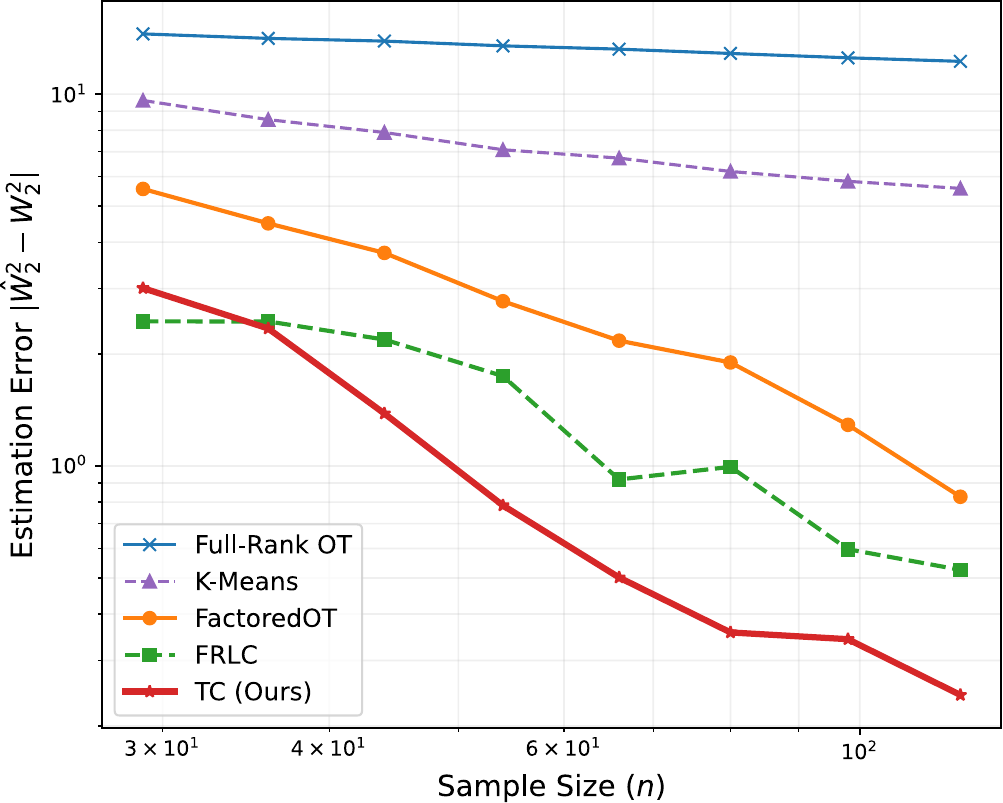}
    \caption{ 
    Estimation of squared Wasserstein distance on the fractured hypercube of \cite{forrow19a}. Convergence shown for fixed $d=30$, $K=10$, and averaged over $10$ runs.
    }\label{fig:w2_estimation}
\end{figure}

We compare the standard plug-in estimator of full-rank OT, 
known to suffer a curse of dimensionality with a $n^{-1/d}$ convergence rate, 
against the estimator \eqref{eq:Forrow_estimator} evaluated on low-rank couplings derived from: (1) \FC \cite{forrow19a}, (2) a naive baseline of independent $K$-means alignment, (3) the low-rank solver \FRLC, and (4) \ourmethodshort. In Figure~\ref{fig:w2_estimation} we find, that full-rank OT converges slowly to the true estimate, while low-rank OT algorithms exploit the intrinsic low-rank to achieve a faster rate. Notably, \ourmethodshort\ achieves the most accurate estimate of $W_{2}^{2}$ across the range of $n$ (Figure~\ref{fig:w2_estimation}, Table~\ref{tab:results_n}).

\textbf{Additional Experiments and Ablations.} We provide additional experiments and ablations to understand the impact of entropy regularization, initialization, and the asymmetry of datasets (Kantorovich registration) in Appendix~\ref{sec:additional_ablations}.



\section*{Impact Statement}

This paper presents work whose goal is to advance the field of Machine
Learning. There are many potential societal consequences of our work, none
which we feel must be specifically highlighted here.

\bibliography{references}
\bibliographystyle{icml2026}

\newpage
\appendix
\onecolumn

\section{Appendix}
\subsection{Approximation guarantees for low-rank optimal transport}
\label{appendix:theory_1}
To prove the approximation guarantees stated in Theorem \ref{thm:reduction_to_kcut}, we start by 
proving the equivalence between the partition formulation \eqref{eq:hardLOT_partition} and the assignment formulation \eqref{eq:hardLOT} of the (hard) low-rank OT problem.

Throughout, we assume that $\mb{C}$ is induced by a cost matrix $c(\cdot, \cdot)$ on 
${X} = \{x_1, \ldots, x_n\}$ and 
${Y} = \{y_1, \ldots, y_n\}$, matching the assumptions in Theorem \ref{thm:reduction_to_kcut}.
Denote the set of partitions of $\{1, \ldots, n\}$ as $\mathcal{P}_n$ and the set
of partitions of size $K$ as $\mathcal{P}_n^K$. 
Define the cost $\mathcal{J}(\mathcal{X}, \mathcal{Y})$ of two partitions $\mathcal{X}, \mathcal{Y} \in \mathcal{P}_n^K$
as
\begin{equation*}
    \mathcal{J}(\mathcal{X}, \mathcal{Y}) \defeq \sum_{k=1}^K\frac{1}{\lvert X_k \rvert}\sum_{i\in X_k}\sum_{j \in Y_k}c(x_i,y_j).
\end{equation*}
Then,
the assignment formulation \eqref{eq:hardLOT} is equivalent to the following partition
formulation over the datasets ${X}$ and ${Y}$:
\begin{equation}
\label{eq:comb_reformulation_monge_sep}
    \min_{\substack{\mathcal{X}=\{X_k\}_{k=1}^K\\\mathcal{Y}=\{Y_k\}_{k=1}^K}}\{\mathcal{J}(\mathcal{X}, \mathcal{Y}) : \lvert X_k\rvert=\lvert Y_k \rvert, \mathcal{X},\mathcal{Y}\in\mathcal{P}_n^K\}.
\end{equation}
The form \eqref{eq:comb_reformulation_monge_sep} is a concise form of 
\eqref{eq:hardLOT_partition} that is used in the proofs.
To see the equivalence, note that the cost of a solution 
$(\mb{Q}, \mb{R}, \bm{g})$ equals
\begin{align*}
    &\langle \mb{C}, \mb{Q}\diag(\bm{g}^{-1})\mb{R}^\top\rangle \\
    &= \sum_{i=1}^n\sum_{j=1}^n \mb{C}_{ij}[\mb{Q}\diag(\bm{g}^{-1})\mb{R}^\top]_{ij}= \sum_{i=1}^n\sum_{j=1}^n \mb{C}_{ij}\sum_{k=1}^K  \frac{\mb{Q}_{ik} \mb{R}_{jk}}{g_{k}}\\
    &= \sum_{k=1}^K \frac{1}{g_k} \sum_{i=1}^n\sum_{j=1}^n \mb{C}_{ij} \mb{Q}_{ik} \mb{R}_{jk}= \sum_{k=1}^K \frac{1}{g_k }\sum_{i\in X_k}\sum_{j \in Y_k}c(x_i, y_j)
\end{align*}
where $X_k = \{i : \mb{Q}_{ik} > 0\}, Y_k = \{i : \mb{R}_{ik} > 0\}$ are partitions
in $\mathcal{P}_n$ due to the constraints on $\mb{Q}$ and $\mb{R}$.
Rescaling the objective by $n$, we have that $ng_k^{-1} = \lvert X_k\rvert = \lvert Y_k \rvert$. Thus, every feasible solution $(\mb{Q}, \mb{R}, \bm{g})$ of \eqref{eq:hardLOT} induces a
solution of \eqref{eq:comb_reformulation_monge_sep} with equivalent cost, up to
a constant factor $n$. For the other direction, observe that any solution
of \eqref{eq:comb_reformulation_monge_sep} induces a solution of \eqref{eq:hardLOT}
with equal cost, again up to the factor of $n$, by following the equalities in the
opposite order.

When $\mb{R} = \mb{P}_\sigma^\top \mb{Q}$ for a permutation matrix $\mb{\sigma}$, 
it follows that $Y_k = \sigma(X_k)$. Thus,
fixing $\mb{P}_\sigma$ in the low-rank OT problem \eqref{eq:RotatedLR}
is equivalent to requiring that $Y_k = \sigma(X_k)$. 
Consequently, any approximation guarantee for the partition formulation
\eqref{eq:comb_reformulation_monge_sep} carries directly over to \eqref{eq:hardLOT}. 
Formally, we have the following statement.
\begin{lemma}
\label{lemma:comb_reformulation_equivalence}
    For any $\alpha > 0$ and permutation $\sigma$, the inequality 
\begin{equation*}
    \min_{\substack{\mb{Q} \in \Pi_{\bullet}(\bm{u}_{n},\,\bm{g}),\\\mb{P}_{\sigma} \in \mathcal{P}_n,\\\,\bm{g} \in \Delta_{K}}}  \,\, \langle \mb{Q}\,\mathrm{diag}(\bm{g}^{-1})\,\mb{Q}^{\top}, \tilde{\mb{C}} \rangle_{F} \leq \alpha\cdot \min_{\substack{\mb{Q},\mb{R} \in \Pi_{\bullet}(\bm{u}_{n},\,\bm{g}),\\\,\bm{g} \in \Delta_{K}}}  \,\, \langle \mb{Q}\,\mathrm{diag}(\bm{g}^{-1})\,\mb{R}^{\top}, \mb{C} \rangle_{F},
\end{equation*}
where $\tilde{\mb{C}} = \mb{C}\mb{P}^\top_\sigma$
holds if and only if
\begin{equation*}
   \min_{\mathcal{X} \in \mathcal{P}_n^K} \mathcal{J}(\mathcal{X}, \sigma(\mathcal{X}))\leq \alpha\cdot  \min_{\substack{\mathcal{X}=\{X_k\}_{k=1}^K\\\mathcal{Y}=\{Y_k\}_{k=1}^K}}\{\mathcal{J}(\mathcal{X}, \mathcal{Y}) : \lvert X_k\rvert=\lvert Y_k \rvert, \mathcal{X},\mathcal{Y}\in\mathcal{P}_n^K\}.
\end{equation*}
\end{lemma}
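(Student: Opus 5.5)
The plan is to show that both sides of the first inequality equal, up to the common factor $1/n$, the corresponding sides of the second inequality. Since $\alpha>0$ and $n$ are fixed positive constants, the equivalence then follows immediately. Throughout I read the minimum on the left of the first display with $\mb{P}_\sigma$ fixed to the given permutation $\sigma$ (as the phrase ``$\tilde{\mb{C}} = \mb{C}\mb{P}_\sigma^\top$'' for the given $\sigma$ indicates), so the only free variables there are $\mb{Q}$ and $\bm{g}$.

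\textbf{Right-hand sides.} These agree up to $1/n$ by the computation preceding the lemma. Any feasible $(\mb{Q},\mb{R},\bm{g})$ with $\mb{Q},\mb{R}\in\Pi_\bullet(\bm{u}_n,\bm{g})$ defines partitions $X_k=\{i:\mb{Q}_{ik}>0\}$ and $Y_k=\{j:\mb{R}_{jk}>0\}$, because each row has exactly one nonzero entry, equal to $1/n$. These satisfy $|X_k|=|Y_k|=ng_k$, and
\[
\langle \mb{C},\mb{Q}\diag(\bm{g}^{-1})\mb{R}^\top\rangle = \tfrac1n\,\mathcal{J}(\mathcal{X},\mathcal{Y}).
\]
Conversely, any pair of partitions with $|X_k|=|Y_k|$ gives back a feasible triple by setting $\mb{Q}_{ik}=\tfrac1n\mathbb{1}[i\in X_k]$, $\mb{R}_{jk}=\tfrac1n\mathbb{1}[j\in Y_k]$ and $g_k=|X_k|/n$, with the same cost. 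So the two minima coincide after scaling by $n$.

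\textbf{Left-hand sides.} For fixed $\sigma$, I will set $\mb{R}=\mb{P}_\sigma^\top\mb{Q}$ and use $\langle \mb{C}\mb{P}_\sigma^\top,\mb{Q}\diag(\bm{g}^{-1})\mb{Q}^\top\rangle = \langle \mb{C},\mb{Q}\diag(\bm{g}^{-1})(\mb{P}_\sigma^\top\mb{Q})^\top\rangle$, which is the trace identity $\tr(\mb{P}_\sigma\mb{C}^\top\mb{Q}D\mb{Q}^\top)=\tr(\mb{C}^\top\mb{Q}D\mb{Q}^\top\mb{P}_\sigma)$. The matrix $\mb{R}=\mb{P}_\sigma^\top\mb{Q}$ is a row permutation of $\mb{Q}$, so it lies in $\Pi_\bullet(\bm{u}_n,\bm{g})$, and its support classes are $Y_k=\sigma(X_k)$ under the convention that $(\mb{P}_\sigma)_{i\sigma(i)}=1$. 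Applying the right-hand-side computation to this particular $\mb{R}$ gives
\[
\langle\tilde{\mb{C}},\mb{Q}\diag(\bm{g}^{-1})\mb{Q}^\top\rangle=\tfrac1n\,\mathcal{J}(\mathcal{X},\sigma(\mathcal{X})).
\]
Every $\mathcal{X}\in\mathcal{P}_n^K$ arises from some $\mb{Q}$ and $\bm{g}$ in this way, and every $(\mb{Q},\bm{g})$ produces some $\mathcal{X}$. Hence the left-hand minima also coincide up to the factor $n$. Dividing both inequalities by the same positive quantity then gives the ``if and only if''.

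\textbf{Main obstacle.} The only delicate point is bookkeeping. First, the permutation convention must be fixed so that $Y_k$ is exactly $\sigma(X_k)$ rather than $\sigma^{-1}(X_k)$; since the identity holds for each fixed $\sigma$, choosing the convention consistent with the definition $\mb{R}=\mb{P}_\sigma^\top\mb{Q}$ suffices. Second, the problem treats partitions as ordered $K$-tuples that may contain empty classes ($g_k=0$), which I would handle by dropping zero columns with the convention $0/0=0$, equivalently by allowing $\mathcal{P}_n^K$ to contain partitions with at most $K$ nonempty blocks.
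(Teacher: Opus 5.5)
Your proposal is correct and follows the paper's argument. The paper likewise identifies the assignment and partition forms up to a factor of $n$ via $X_k=\{i:\mb{Q}_{ik}>0\}$ and $Y_k=\{j:\mb{R}_{jk}>0\}$, observes that $\mb{R}=\mb{P}_\sigma^\top\mb{Q}$ forces $Y_k=\sigma(X_k)$, and concludes that both sides rescale by the same constant; your version additionally makes explicit the bookkeeping the paper leaves implicit: the trace identity, the exact factor $1/n$, the reading of $\mb{P}_\sigma$ as fixed, and the treatment of empty blocks.
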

This states that in order to prove Theorem \ref{thm:reduction_to_kcut} it suffices
to prove the analogous inequality for the partition formulation
\eqref{eq:comb_reformulation_monge_sep}.

We now start the proof of Theorem \ref{thm:reduction_to_kcut}. 
In the case where $c(\cdot, \cdot)$
is a metric we prove both of the results together, as many of the components
are shared. The case where $c(\cdot, \cdot)$ is induced by a kernel
is handled separately, as the triangle inequality is lost, and
na\"ive application of the doubled triangle inequality results in a 
worse  guarantee.

We start by proving the following upper bound on twice
$\min_{\mathcal{X} \in \mathcal{P}_n^r} \mathcal{J}(\mathcal{X}, \sigma(\mathcal{X}))$, which holds for arbitrary metrics. 

\begin{lemma}
\label{lemma:metric_intra_ub}
    Let $\mathcal{X} =\{X_1,\ldots, X_K\}, \mathcal{Y} = \{Y_1, \ldots, Y_K\}$ be a feasible solution
    to the optimization problem \eqref{eq:comb_reformulation_monge_sep}
    and suppose that $c(\cdot, \cdot)$ is a metric. Then, for any
    permutation $\sigma$, 
    \begin{equation}
         \mathcal{J}_1+\mathcal{J}_2\leq 2M_\sigma+\sum_{k=1}^K\frac{1}{\lvert X_k \rvert}\sum_{i,j\in X_k} c(x_i, x_{j})+\sum_{k=1}^K\frac{1}{\lvert Y_k \rvert}\sum_{i,j\in Y_k} c(y_i, y_{j}),
    \end{equation}
    where $M_\sigma = \sum_{i=1}^nc(x_i,y_{\sigma(i)})$, $\mathcal{J}_1=\mathcal{J}(\sigma^{-1}(\mathcal{Y}),\mathcal{Y})$, and $\mathcal{J}_2 = \mathcal{J}(\mathcal{X},\sigma(\mathcal{X}))$.
\end{lemma}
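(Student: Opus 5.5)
The plan is to bound $\mathcal{J}_1$ and $\mathcal{J}_2$ separately, each by a single application of the triangle inequality that routes through the Monge correspondence $i \mapsto \sigma(i)$. Adding the two bounds gives the claim. Feasibility of $(\mathcal{X},\mathcal{Y})$ is used only to know that $\mathcal{X}$ and $\mathcal{Y}$ are partitions; the bounds on $\mathcal{J}_2$ and $\mathcal{J}_1$ do not otherwise interact. Since $c$ is a metric on the common space $M$ containing $X$ and $Y$, we may freely use symmetry and the triangle inequality across points of $X$ and $Y$.

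\textbf{Step 1: bound $\mathcal{J}_2$.} Since $\lvert\sigma(X_k)\rvert = \lvert X_k\rvert$,
\[
\mathcal{J}_2 = \mathcal{J}(\mathcal{X},\sigma(\mathcal{X})) = \sum_{k=1}^K \frac{1}{\lvert X_k\rvert}\sum_{i\in X_k}\sum_{j\in X_k} c(x_i, y_{\sigma(j)}).
\]
For each pair I apply the triangle inequality through $x_j$:
\[
c(x_i,y_{\sigma(j)}) \le c(x_i,x_j) + c(x_j,y_{\sigma(j)}).
\]
The first term gives exactly $\sum_k \lvert X_k\rvert^{-1}\sum_{i,j\in X_k} c(x_i,x_j)$. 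The second term does not depend on $i$, so summing over $i\in X_k$ produces a factor $\lvert X_k\rvert$ that cancels the normalization. What remains is $\sum_k\sum_{j\in X_k} c(x_j,y_{\sigma(j)})$, which equals $M_\sigma$ because $\mathcal{X}$ partitions $[n]$.

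\textbf{Step 2: bound $\mathcal{J}_1$.} Write
\[
\mathcal{J}_1 = \sum_k \lvert Y_k\rvert^{-1}\sum_{i\in\sigma^{-1}(Y_k)}\sum_{j\in Y_k} c(x_i,y_j).
\]
This uses $\lvert\sigma^{-1}(Y_k)\rvert=\lvert Y_k\rvert$, so the normalization in $\mathcal{J}$ matches. Substituting $i=\sigma^{-1}(l)$ with $l\in Y_k$, I route through $y_l$:
\[
c(x_{\sigma^{-1}(l)},y_j)\le c(x_{\sigma^{-1}(l)},y_l)+c(y_l,y_j).
\]
The second term yields the $Y$ intra-cluster sum. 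The first term is independent of $j$, so it again cancels the $\lvert Y_k\rvert^{-1}$. It then sums to $\sum_{l} c(x_{\sigma^{-1}(l)},y_l)=M_\sigma$, because $\mathcal{Y}$ partitions $[n]$ and $\sigma$ is a bijection.

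\textbf{Step 3: combine.} Adding the two bounds gives the stated inequality $\mathcal{J}_1+\mathcal{J}_2\le 2M_\sigma+(\text{intra-}X)+(\text{intra-}Y)$. There is no real obstacle. The only point needing care is the reindexing in Step 2: the intermediate point must be chosen as $y_l$ with $l=\sigma(i)$, the Monge partner, so that the cross term collapses to $M_\sigma$ rather than to some other matching cost.
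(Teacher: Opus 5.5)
Your proposal is correct and follows the paper's proof: the paper also bounds $\mathcal{J}_1$ via $c(x_i,y_j)\le c(x_i,y_{\sigma(i)})+c(y_{\sigma(i)},y_j)$ and $\mathcal{J}_2$ via $c(x_i,y_j)\le c(x_i,x_{\sigma^{-1}(j)})+c(x_{\sigma^{-1}(j)},y_j)$, so that the cross terms collapse to $M_\sigma$ and the two bounds are added. The only difference is that you reindex through $\sigma$ before applying the triangle inequality rather than after, which is purely cosmetic.
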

\begin{proof}
    Consider the solution $\sigma^{-1}(\mathcal{Y}) =\{\sigma^{-1}
    (Y_k)\}_{k=1}^K, \mathcal{Y} = \{Y_k\}_{k=1}^K$ to the 
    optimization problem
    \eqref{eq:comb_reformulation_monge_sep}: this is a feasible solution
    as $\sigma^{-1}$ preserves the size of sets. Using the triangle inequality, we have that $c(x_i, y_j) \leq c(x_i, z_i) + c(z_i, y_{j})$, so that taking $z_i := y_{\sigma(i)}$ we can bound the cost of $\mathcal{J}_1$ as:
\begin{align}
\mathcal{J}_1 &=\sum_{k=1}^K\frac{1}{\lvert Y_k \rvert}\sum_{i\in \sigma^{-1}(Y_k)}\sum_{j \in Y_k}c( x_i, y_j) \nonumber
\\
&\leq \sum_{k=1}^K\frac{1}{\lvert Y_k \rvert}\sum_{i\in \sigma^{-1}(Y_k)}\sum_{j \in Y_k}[c( x_i, y_{\sigma(i)}) + c(y_{\sigma(i)}, y_{j})] \nonumber \\
&= \sum_{k=1}^K\frac{\lvert Y_k \rvert}{\lvert Y_k \rvert}
\sum_{i\in \sigma^{-1}(Y_k)}c( x_i, y_{\sigma(i)}) + \sum_{k=1}^K\frac{1}{\lvert Y_k \rvert}\sum_{i\in \sigma^{-1}(Y_k)}\sum_{j \in Y_k}c(y_{\sigma(i)}, y_{j}).
\label{eq:metric_upper_bound}
\end{align}
Using the fact that $\sigma^{-1}(\mathcal{Y})$ partitions $\{1, \ldots, n\}$ and
performing a change of variables with $\sigma$, the upper bound 
\eqref{eq:metric_upper_bound} becomes
\begin{align}\label{eq:J1_bound_Y}
    \mathcal{J}_1 \leq \sum_{i=1}^n c( x_i, y_{\sigma(i)}) + \sum_{k=1}^K\frac{1}{\lvert Y_k \rvert}\sum_{i, j \in Y_k}c(y_{i}, y_{j}).
\end{align}
We then apply a symmetric argument to the feasible solution $(\mathcal{X}, \sigma(\mathcal{X}))$ of \eqref{eq:comb_reformulation_monge_sep} by using the bound $c(x_i, y_j) \leq c(x_i, x_{\sigma^{-1}(j)}) + c(x_{\sigma^{-1}(j)}, y_j)$. This yields
\begin{align}\label{eq:J2_bound_Y}
    \mathcal{J}_2 \leq \sum_{i=1}^n c( x_i, y_{\sigma(i)}) + \sum_{k=1}^K\frac{1}{\lvert X_k \rvert}\sum_{i, j \in X_k}c(x_{i}, x_{j}),
\end{align}
and adding the bounds together completes the proof.
\end{proof}
The preceding result yields the aforementioned upper bound as $\min_{\mathcal{X} \in \mathcal{P}_n^K} \mathcal{J}(\mathcal{X}, \sigma(\mathcal{X})) \leq \min\{\mathcal{J}_1, \mathcal{J}_2\}$.
We now state two well-known folklore results relating the sum of intra- and 
inter-dataset distances. For completeness, we provide proofs of both 
statements.

Metrics of negative type form an interesting
class of metrics as they satisfy the
following relationship between the intra-cluster and
inter-cluster variances.
\begin{restatable}{lemma}{cndbound}
\label{lemma:intra_cndbound}
    Suppose $c(\cdot,\cdot)$ is conditionally negative semidefinite. Then, for all 
    sets of points ${X}=\{x_1\ldots,x_n\}$ and 
    ${Y} = \{y_1, \ldots, y_n\}$,
    \begin{equation}\sum_{i=1}^n\sum_{j=1}^nc(x_i,x_j) + \sum_{i=1}^n\sum_{j=1}^nc(y_i,y_j) \leq 2\cdot\sum_{i=1}^n\sum_{j=1}^nc(x_i,y_j).
    \end{equation}
\end{restatable}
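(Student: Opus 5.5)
The plan is to apply the definition of conditional negative semidefiniteness directly, to a single well-chosen configuration of $2n$ points with weights summing to zero. Consider the combined point set $Z = \{z_1,\ldots,z_{2n}\}$ with $z_i = x_i$ for $i \le n$ and $z_{n+i} = y_i$ for $i \le n$. Choose coefficients $\alpha_i = 1$ for $i \le n$ and $\alpha_{n+i} = -1$ for $i \le n$, so that $\sum_{i=1}^{2n}\alpha_i = 0$. The hypothesis on $c(\cdot,\cdot)$ then gives
\begin{equation*}
\sum_{i=1}^{2n}\sum_{j=1}^{2n}\alpha_i\alpha_j\, c(z_i,z_j) \leq 0 .
\end{equation*}

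Next I will split this double sum into its four blocks: $X\times X$ with sign $+$, $Y\times Y$ with sign $+$, and the two cross blocks $X\times Y$ and $Y\times X$, each with sign $-$. This gives
\begin{equation*}
\sum_{i,j} c(x_i,x_j) + \sum_{i,j} c(y_i,y_j) - \sum_{i,j} c(x_i,y_j) - \sum_{i,j} c(y_i,x_j) \leq 0 .
\end{equation*}
Symmetry of $c$ identifies the two cross sums, and rearranging yields exactly the claimed inequality. If $c$ is not assumed symmetric, the same argument still works, because the conditional negative semidefiniteness condition only sees the symmetrization of $c$. In that case I would state the result with the symmetrized cross term $\sum_{i,j}[c(x_i,y_j)+c(y_i,x_j)]$, which equals $2\sum_{i,j}c(x_i,y_j)$ when $c$ is symmetric. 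For the costs considered in the paper (metrics and kernel costs), $c$ is symmetric, so nothing more is needed.

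The only step requiring care is checking that the definition allows repeated points or points coming from both datasets. The definition quantifies over all finite lists $x_1,\ldots,x_N$ with arbitrary zero-sum weights, so concatenating $X$ and $Y$, even with coincidences, is admissible. I expect no real obstacle beyond this bookkeeping.
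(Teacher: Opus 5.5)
Your proposal is correct and is the paper's own proof: concatenate $X$ and $Y$ into $2n$ points, put weight $+1$ on $X$ and $-1$ on $Y$ (these sum to zero since $|X|=|Y|$), and expand the quadratic form into the two intra-dataset blocks minus twice the cross block, using symmetry of $c$. You treat $Z$ as a list, which allows coincident points $x_i = y_j$; this is slightly more careful than the paper's set-union-with-indicators wording, but the argument is the same.
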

\begin{proof}
    Let ${Z} = {X} \cup {Y}$. Define the matrix $D \in \mathbb{R}^{2n \times 2n}$ by $D_{z, z'} = c(z,z')$. Then, since $c(\cdot, \cdot)$ is 
    conditionally negative semidefinite, $\alpha^\top D\alpha\leq 0$ for
    all $\alpha\,\,\text{s.t.}\,\,\alpha^\top \mb{1}_{2n}=0$. Set $\bar{\alpha}_{z} = \mathbbm{1}(z \in {X}) - \mathbbm{1}(z \in {Y})$. Then, since $\lvert{X}\rvert = \lvert{Y}\rvert$ we have $\bar{\alpha}^\top \mb{1}_{2n} = 0$ and therefore
    \begin{equation*}
        \bar{\alpha}^\top D \bar{\alpha} = \sum_{i=1}^n\sum_{j=1}^nc(x_i,x_j)+\sum_{i=1}^n\sum_{j=1}^nc(y_i,y_j)-2\cdot\sum_{i=1}^n\sum_{j=1}^nc(x_i,y_j)\leq 0.
    \end{equation*}
    This completes the proof.
\end{proof}

For arbitrary metrics, the preceding bound
holds with an extra factor of $2$.
\begin{lemma}\label{lemma:folklore_metric_bounds}
    Suppose $c(\cdot, \cdot)$ is a metric. Then, for all sets of points
    ${X} = \{x_1,\ldots,x_n\}$ and ${Y} = \{y_1,\ldots, y_n\}$,
    \begin{equation}\label{eq:intra_UB_cross_metrics}
        \sum_{i=1}^n\sum_{j=1}^nc(x_i,x_j) + \sum_{i=1}^n\sum_{j=1}^nc(y_i,y_j) \leq 2(1+\rho)\cdot\sum_{i=1}^n\sum_{j=1}^nc(x_i,y_j),
    \end{equation}
    for $\rho \in [0, 1]$ defined to be the asymmetry coefficient of 
    intra-dataset distances:
    \begin{equation}
        \rho \defeq \min\left\{
        \frac{\sum_{i=1}^n\sum_{j=1}^nc(x_i,x_j)}
        {\sum_{i=1}^n\sum_{j=1}^nc(y_i,y_j)}, \,\, 
        \frac{\sum_{i=1}^n\sum_{j=1}^nc(y_i,y_j)}
        {\sum_{i=1}^n\sum_{j=1}^nc(x_i,x_j)}
    \right\}.
    \end{equation}
\end{lemma}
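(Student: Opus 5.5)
The plan is to prove the two one-sided bounds
\[
\sum_{i,j}c(x_i,x_j)\le 2\sum_{i,j}c(x_i,y_j), \qquad \sum_{i,j}c(y_i,y_j)\le 2\sum_{i,j}c(x_i,y_j)
\]
separately, each from the triangle inequality alone, and then combine them using the definition of $\rho$. Throughout, write $S_{XX}=\sum_{i,j}c(x_i,x_j)$, $S_{YY}=\sum_{i,j}c(y_i,y_j)$ and $S_{XY}=\sum_{i,j}c(x_i,y_j)$.

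\textbf{Step 1 (bound on $S_{XX}$).} For every triple $(i,j,k)$, the triangle inequality and symmetry of the metric give
\[
c(x_i,x_j)\le c(x_i,y_k)+c(y_k,x_j).
\]
Summing over all $i,j,k\in[n]$, the left side equals $n\,S_{XX}$. Each of the two terms on the right sums to $n\,S_{XY}$, since one free index contributes a factor $n$. Hence $n S_{XX}\le 2nS_{XY}$, that is, $S_{XX}\le 2S_{XY}$. The same argument with the roles of $X$ and $Y$ exchanged, routing through $x_k$, gives $S_{YY}\le 2S_{XY}$. This is where $\lvert X\rvert=\lvert Y\rvert=n$ is used: it makes the averaging factors match.

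\textbf{Step 2 (combine with $\rho$).} By symmetry, assume $S_{XX}\ge S_{YY}$. If $S_{XX}=0$, then both intra sums vanish and the claim is trivial; for this case we adopt a convention such as $\rho=0$ or $\rho=1$, either of which works. Otherwise $\rho=S_{YY}/S_{XX}$, so
\[
S_{XX}+S_{YY}=(1+\rho)\,S_{XX}\le 2(1+\rho)\,S_{XY}.
\]
The last inequality is Step 1 applied to the larger of the two intra sums.

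There is no real obstacle here. The only points needing care are the index bookkeeping when summing the triple inequality, so as to obtain exactly the factor $2$, and the degenerate case where a denominator in the definition of $\rho$ vanishes.
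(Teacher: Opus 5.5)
Your proof is correct and takes essentially the same approach as the paper: sum the triangle inequality $c(x_i,x_j)\le c(x_i,y_k)+c(y_k,x_j)$ over all triples to bound each intra-dataset sum by $2\sum_{i,j}c(x_i,y_j)$, then bound the smaller intra sum by $\rho$ times the larger one, which gives $2\rho\sum_{i,j}c(x_i,y_j)$. Your explicit handling of the case where a denominator in $\rho$ vanishes is a small piece of care that the paper leaves implicit.
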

\begin{proof}
    Applying the triangle inequality gives the two inequalities
    \begin{align}
        \label{eq:triangle_inequalities_metric}
        c(x_i,x_j) \leq c(x_i,y_k) + c(y_k,x_j) \quad\text{and}\quad c(y_i,y_j) \leq c(y_i,x_k) + c(x_k,y_j)
    \end{align}
    for all $i, j, k \in \{1,\ldots, n\}$. 
    Taking the sum over $i, j, k$ from $1$ to $n$ and applying symmetry of the cost $c(\cdot, \cdot)$ to the first inequality in \eqref{eq:triangle_inequalities_metric} yields
    \begin{equation*}
        n\cdot\sum_{i=1}^n\sum_{j=1}^nc(x_i,x_j) \leq  n\cdot\sum_{i=1}^n\sum_{k=1}^n c(x_i,y_k) + n\cdot\sum_{k=1}^n\sum_{j=1}^n c(y_k,x_j)
    \end{equation*}
    which holds if and only if
    \begin{equation}
        \sum_{i=1}^n\sum_{j=1}^n c(x_i,x_j) \leq 2\cdot\sum_{i=1}^n\sum_{j=1}^n c(x_i, y_j).
    \end{equation}
    Applying a symmetric argument to the second inequality in 
    \eqref{eq:triangle_inequalities_metric} and adding the
    two inequalities shows
    \begin{align*}
        \sum_{i=1}^n\sum_{j=1}^n c(x_i,x_j) + \sum_{i=1}^n\sum_{j=1}^n c(y_i,y_j) \leq 4\cdot \sum_{i=1}^n\sum_{j=1}^n c(x_i, y_j).
    \end{align*}
    When asymmetry $\rho \neq 1$ is present, the preceding
    bound is tightened by refining the bound
    on the smaller term in the left hand side of the preceding
    equation to $2\rho\cdot\sum_{i=1}^n\sum_{j=1}^n c(x_i, y_j)$. 
    This completes the proof.
\end{proof}

The proof of Theorem \ref{thm:reduction_to_kcut} in the metric case
is then a corollary of Lemma \ref{lemma:comb_reformulation_equivalence}, \ref{lemma:metric_intra_ub}, \ref{lemma:intra_cndbound}, and
\ref{lemma:folklore_metric_bounds}. The details are described
in the subsequent proof.

\begin{proof}[Proof of Theorem \ref{thm:reduction_to_kcut} (Metric Costs)]\label{proof:metric}
    Applying the fact that $(\sigma^{-1}(\mathcal{Y}^*), \mathcal{Y}^*)$ and $(\mathcal{X}^*, \sigma(\mathcal{X}^*))$
    are feasible solutions together with the inequality $2\cdot\min\{a, b\} \leq a + b$, we have
    \begin{equation*}
    \min_{\mathcal{X} \in \mathcal{P}_n^K} \mathcal{J}(\mathcal{X}, \sigma(\mathcal{X})) \leq \min\{ \mathcal{J}_1, \mathcal{J}_2\} \leq (1/2) (\mathcal{J}_1 + \mathcal{J}_2).
    \end{equation*}
    Applying Lemma~\ref{lemma:metric_intra_ub} with the optimal solution of $\mathcal{X}^{*} = \{X_1^*,\ldots, X_K^*\}$ and $ \mathcal{Y}^{*} = \{Y_1^*,\ldots,Y_K^*\}$ to the right
    hand side of the preceding inequality then yields the bound
    \begin{equation}
        \min_{\mathcal{X} \in \mathcal{P}_n^K} \mathcal{J}(\mathcal{X}, \sigma(\mathcal{X})) \leq M_{\sigma} + \sum_{k=1}^K\frac{1}{2\lvert X_k^* \rvert}\sum_{i, j \in X_k^*}c(x_{i}, x_{j}) + \sum_{k=1}^K\frac{1}{2\lvert Y_k^* \rvert}\sum_{i, j \in Y_k^*}c(y_{i}, y_{j}), \label{eq:metric_proof_intra_ub}
    \end{equation}
    which upper bounds the cost in terms of the intra-dataset
    costs of $\mathcal{X}^*$ and $\mathcal{Y}^*$. 

    When $c(\cdot,\cdot)$ is negative semidefinite, applying
    Lemma \ref{lemma:intra_cndbound} to \eqref{eq:metric_proof_intra_ub} shows that
    \begin{equation}
        \min_{\mathcal{X} \in \mathcal{P}_n^K} \mathcal{J}(\mathcal{X}, \sigma(\mathcal{X})) \leq M_{\sigma} + \sum_{k=1}^K\frac{1}{\lvert X_k^*\rvert}\sum_{i \in X_k^*}\sum_{j\in Y_k^*}c(x_{i}, y_{j}) = M_\sigma+\mathcal{J}(\mathcal{X}^*,\mathcal{Y}^*),
    \end{equation}
    proving the claim. When $c(\cdot, \cdot)$ is a metric,
    applying Lemma \ref{lemma:folklore_metric_bounds} to \eqref{eq:metric_proof_intra_ub} yields the weaker bound
    \begin{equation}
        \min_{\mathcal{X} \in \mathcal{P}_n^K} \mathcal{J}(\mathcal{X}, \sigma(\mathcal{X})) \leq M_\sigma+(1 + \rho)\cdot\mathcal{J}(\mathcal{X}^*,\mathcal{Y}^*).
    \end{equation}
    Combining these two bounds with the equivalence in Lemma 
    \ref{lemma:comb_reformulation_equivalence} completes the
    proof.
\end{proof}

For kernel costs of the form $c(x_i, y_j) = \lVert \phi(x_i)-\phi(y_j)\lVert^2_2,$ the squared norm $\lVert \cdot \rVert^2_2$ is not 
a metric and the preceding argument no longer applies. While the proof of Theorem \ref{thm:reduction_to_kcut} does go through upon replacing applications of the triangle inequality with applications of the doubled triangle inequality $\lVert x - y\rVert^2_2 \leq 2(\lVert x - z\rVert^2_2 + \lVert z - y\rVert^2_2)$, it reduces the quality of the bound
to $2 + 2\gamma$.
To slightly improve this bound, we derive an analog of Lemma \ref{lemma:metric_intra_ub} for
kernel costs by applying Young's inequality at a different point in the argument and optimizing the bound over the introduced parameter $t$.


As a preliminary,
we will make use of the following relationship between the cross cluster
cost between ${X}$ and ${Y}$ to the intra-cluster cost of 
${Y}$.
\begin{lemma}
\label{lemma:sqeuc_decomposition}
    Let ${X} = \{x_1,\ldots, x_n\}$ and ${Y} = \{y_1,\ldots, y_n\}$. Then,
    \begin{equation}
        \frac{1}{n}\sum_{i=1}^n\sum_{j=1}^n\lVert x_i-y_j\rVert^2_2=\sum_{i=1}^n\lVert x_i-\bm{\mu}({Y})\rVert^2_2+\frac{1}{2n}\sum_{i=1}^n\sum_{j=1}^n\lVert y_i-y_j\rVert^2_2,
    \end{equation}
    where $\bm{\mu}({Y}) = \frac{1}{n}\sum_{i=1}^ny_i$.
\end{lemma}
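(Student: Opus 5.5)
The plan is to prove the identity by the standard bias--variance (parallel axis) decomposition about the centroid $\bm{\mu}({Y})$: first for a single point $x_i$ averaged over $Y$, then summed over $i$, and finally by rewriting the variance of $Y$ in pairwise (mean-free) form, exactly as was done for \eqref{eq:pairwise_KMeans}.

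\textbf{Step 1 (single point).} Fix $x_i$ and write $x_i - y_j = (x_i - \bm{\mu}({Y})) + (\bm{\mu}({Y}) - y_j)$. Expanding the square gives three terms:
\begin{equation*}
\frac{1}{n}\sum_{j=1}^n\lVert x_i-y_j\rVert_2^2 = \lVert x_i-\bm{\mu}({Y})\rVert_2^2 + \frac{2}{n}\Bigl\langle x_i-\bm{\mu}({Y}), \sum_{j=1}^n(\bm{\mu}({Y})-y_j)\Bigr\rangle + \frac{1}{n}\sum_{j=1}^n\lVert y_j-\bm{\mu}({Y})\rVert_2^2 .
\end{equation*}
The cross term vanishes, because $\sum_{j}(\bm{\mu}({Y})-y_j)=0$ by the definition of the mean.

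\textbf{Step 2 (sum over $i$).} Summing over $i=1,\ldots,n$ turns the first term into $\sum_i \lVert x_i-\bm{\mu}({Y})\rVert_2^2$. The last term does not depend on $i$, so it becomes $n$ times itself, namely $\sum_j\lVert y_j-\bm{\mu}({Y})\rVert_2^2$.

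\textbf{Step 3 (mean-free form).} We identify $\sum_j\lVert y_j-\bm{\mu}({Y})\rVert_2^2$ with $\frac{1}{2n}\sum_{i,j}\lVert y_i-y_j\rVert_2^2$. This is the same identity the paper invoked in the background section, $|\mathcal{C}|\sum\lVert x-\bm{\mu}\rVert^2=\frac12\sum\lVert x_i-x_j\rVert^2$, here with $\mathcal{C}=Y$. Alternatively, it follows by applying Steps 1--2 with $X$ replaced by $Y$, which gives
\begin{equation*}
\frac1n\sum_{i,j}\lVert y_i-y_j\rVert_2^2 = 2\sum_j\lVert y_j-\bm{\mu}({Y})\rVert_2^2 .
\end{equation*}
Substituting this into the result of Step 2 yields the claimed identity.

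There is no real obstacle. The only care needed is in tracking the normalization factors $1/n$ versus $1/(2n)$, and in noting that the cross term cancels only because we center at the mean of $Y$, not at the mean of $X$.
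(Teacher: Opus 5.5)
Your proof is correct and follows essentially the same route as the paper: add and subtract $\bm{\mu}(Y)$, expand the square, kill the cross term using $\sum_j(\bm{\mu}(Y)-y_j)=0$, and convert the variance of $Y$ to pairwise form via $\sum_j\lVert y_j-\bm{\mu}(Y)\rVert_2^2=\frac{1}{2n}\sum_{i,j}\lVert y_i-y_j\rVert_2^2$. Your version is slightly more careful in two respects: it keeps the factor $2$ on the cross term, which the paper drops harmlessly since that term vanishes, and it derives the pairwise identity from your Steps 1--2 instead of only citing it.
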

\begin{proof}
    Inserting $\bm{\mu}({Y}) - \bm{\mu}({Y})$ into the left hand side summation and
    expanding the result yields:
    \begin{align*}
        \frac{1}{n}\sum_{i=1}^n\sum_{j=1}^n\lVert x_i-y_j\rVert^2_2 &=  \frac{1}{n}\sum_{i=1}^n\sum_{j=1}^n\lVert x_i-\bm{\mu}({Y})+\bm{\mu}({Y})-y_j\rVert^2_2\\
        &= \frac{1}{n}\sum_{i=1}^n\sum_{j=1}^n\lVert x_i-\bm{\mu}({Y})\rVert^2_2+\frac{1}{n}\sum_{i=1}^n\sum_{j=1}^n\lVert \bm{\mu}({Y})-y_j\rVert^2_2
        \\
        &+ \frac{1}{n}\sum_{i=1}^n\sum_{j=1}^n\left\langle x_i-\bm{\mu}({Y}), \bm{\mu}({Y})-y_j\right\rangle
        \\
        &= \sum_{i=1}^n\lVert x_i-\bm{\mu}({Y})\rVert^2_2+\frac{1}{2n}\sum_{i=1}^n\sum_{j=1}^n\lVert y_i-y_j\rVert^2_2.
    \end{align*}
    The second equality follows from the identity $\sum_{i=1}^n\lVert y_i - \bm{\mu}({Y})\rVert^2_2 = \frac{1}{2n}\sum_{i=1}^n\sum_{j=1}^n\lVert y_i - y_j\rVert_2^2$
    and the identity:
    \begin{equation*}
        \frac{1}{n}\sum_{i=1}^n\sum_{j=1}^n\left\langle x_i-\bm{\mu}({Y}), \bm{\mu}({Y})-y_j\right\rangle=\sum_{i=1}^n\left\langle x_i-\bm{\mu}({Y}), \frac{1}{n}\sum_{j=1}^n(\bm{\mu}({Y})-y_j)\right\rangle = 0.
    \end{equation*}
    This completes the proof.
\end{proof}
Next, we have the following analog of Lemma \ref{lemma:metric_intra_ub} specialized to the
squared Euclidean cost. The key trick is to expand one of the inner-products, and to apply both Cauchy-Schwarz and Young's inequality term-wise only
after performing the decomposition from Lemma \ref{lemma:sqeuc_decomposition} to obtain
an improved constant.
\begin{lemma}
\label{lemma:euclidean_intra_ub}
    Let $\mathcal{X} =\{X_1,\ldots, X_K\}, \mathcal{Y} = \{Y_1, \ldots, Y_K\}$ be a feasible solution
    to the optimization problem \eqref{eq:comb_reformulation_monge_sep}
    and suppose that $c(x_i, y_j) = \lVert x_i - y_j\rVert^2_2$. 
    Then, for all $t >0$ 
    \begin{align}
         \mathcal{J}_1+\mathcal{J}_2& \leq 
         2 \cdot \left(1 + 1/t \right) M_{\sigma} +
         \left( 2 + t \right) \left[ \sum_{k=1}^{K}\frac{1}{2\lvert X_k\rvert }\sum_{i, j \in X_k}\lVert x_i - x_j\rVert_2^2 
            +\sum_{k=1}^{K}\frac{1}{2\lvert Y_k\rvert }\sum_{i, j \in Y_k}\lVert y_i - y_j\rVert_2^2
            \right]
    \end{align}
    In the preceding, $M_\sigma = \sum_{i=1}^nc(x_i,y_{\sigma(i)})$, $\mathcal{J}_1=\mathcal{J}(\sigma^{-1}(\mathcal{Y}),\mathcal{Y})$, and $\mathcal{J}_2 = \mathcal{J}(\mathcal{X},\sigma(\mathcal{X}))$.
\end{lemma}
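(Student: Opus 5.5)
The plan is to mirror the proof of Lemma~\ref{lemma:metric_intra_ub}, but to replace the triangle inequality by an exact expansion of the squared norm. Young's inequality is then applied only to the cross term, after it has been centered at cluster means. I will bound $\mathcal{J}_1$ and $\mathcal{J}_2$ separately, showing
\[
\mathcal{J}_1 \leq (1+1/t)M_\sigma + (2+t)V_Y, \qquad \mathcal{J}_2 \leq (1+1/t)M_\sigma + (2+t)V_X,
\]
where $V_Y = \sum_k \frac{1}{2|Y_k|}\sum_{i,j\in Y_k}\lVert y_i-y_j\rVert_2^2$ and $V_X$ is defined analogously. Adding the two bounds gives the statement.

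For $\mathcal{J}_1$, fix $k$, take $i \in \sigma^{-1}(Y_k)$ and $j\in Y_k$, and write $x_i - y_j = a_i + b_{ij}$ with $a_i = x_i - y_{\sigma(i)}$ and $b_{ij} = y_{\sigma(i)} - y_j$. Expanding $\lVert a_i+b_{ij}\rVert_2^2 = \lVert a_i\rVert^2 + \lVert b_{ij}\rVert^2 + 2\langle a_i, b_{ij}\rangle$ and summing with the weight $1/|Y_k|$ produces three terms.
\begin{itemize}
\item The first term sums over $j\in Y_k$ to $|Y_k|\lVert a_i\rVert^2$. After dividing by $|Y_k|$ and summing over all $i$ (since $\sigma^{-1}(\mathcal{Y})$ partitions $[n]$), it gives exactly $M_\sigma$.
\item The second term, after the change of variables $i' = \sigma(i)$ ranging over $Y_k$, equals $\sum_k \frac{1}{|Y_k|}\sum_{i',j\in Y_k}\lVert y_{i'}-y_j\rVert^2 = 2V_Y$.
\item For the cross term, averaging over $j$ first gives $2\sum_k\sum_{i\in\sigma^{-1}(Y_k)}\langle a_i,\, y_{\sigma(i)} - \bm{\mu}(Y_k)\rangle$, where $\bm{\mu}(Y_k)$ is the mean of the $y_j$ with $j\in Y_k$.
\end{itemize}
The centering in the cross term is the key step, and it is precisely the manipulation underlying Lemma~\ref{lemma:sqeuc_decomposition}.

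The main point is bounding the centered cross term. By Cauchy--Schwarz and Young's inequality, $2\langle a_i, y_{\sigma(i)}-\bm{\mu}(Y_k)\rangle \leq \frac{1}{t}\lVert a_i\rVert^2 + t\lVert y_{\sigma(i)} - \bm{\mu}(Y_k)\rVert^2$. Summing the first part gives $M_\sigma/t$. Summing the second part gives $t\sum_k\sum_{i'\in Y_k}\lVert y_{i'}-\bm{\mu}(Y_k)\rVert^2$, which equals $t V_Y$ by the variance identity $\sum_{i\in Y_k}\lVert y_i-\bm{\mu}(Y_k)\rVert^2 = \frac{1}{2|Y_k|}\sum_{i,j\in Y_k}\lVert y_i-y_j\rVert^2$ used in the proof of Lemma~\ref{lemma:sqeuc_decomposition}. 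Altogether this yields $\mathcal{J}_1\leq (1+1/t)M_\sigma + (2+t)V_Y$.

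The symmetric argument handles $\mathcal{J}_2 = \mathcal{J}(\mathcal{X},\sigma(\mathcal{X}))$. There I decompose $x_i - y_j = (x_i - x_{\sigma^{-1}(j)}) + (x_{\sigma^{-1}(j)} - y_j)$ for $i\in X_k$ and $j\in\sigma(X_k)$, and center the cross term at $\bm{\mu}(X_k)$ by averaging over $i$. This gives $\mathcal{J}_2\leq(1+1/t)M_\sigma+(2+t)V_X$, and summing the two bounds completes the proof. The only delicate step is making sure the averaging is taken over the index that is free in the intra-cluster difference, so that the cross term actually centers; doing this before applying Young's inequality is what yields the constant $2+t$ rather than the doubled-triangle constant.
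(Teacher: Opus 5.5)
Your proof is correct and is essentially the paper's argument. The paper first rewrites $\mathcal{J}_1$ via Lemma~\ref{lemma:sqeuc_decomposition} and then expands $\lVert x_i-\bm{\mu}(Y_k)\rVert_2^2$ around $y_{\sigma(i)}$, whereas you expand $\lVert x_i-y_j\rVert_2^2$ around $y_{\sigma(i)}$ and average over $j$; both routes yield the same $M_\sigma + 2V_Y$ plus the same centered cross term, followed by the same Cauchy--Schwarz/Young step, and your symmetric treatment of $\mathcal{J}_2$ centered at $\bm{\mu}(X_k)$ likewise matches.
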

\begin{proof}
    By Lemma \ref{lemma:sqeuc_decomposition}, the cost $\mathcal{J}_1$ is equal to
    \begin{align}
        \mathcal{J}_1 &= \sum_{k=1}^K\frac{1}{\lvert Y_k\rvert}\sum_{i\in\sigma^{-1}(Y_k)}\sum_{j\in Y_k}\lVert x_i-y_j\rVert^2\nonumber\\&= \sum_{k=1}^K\left[\sum_{i\in\sigma^{-1}(Y_k)}\lVert x_i-\bm{\mu}(Y_k)\rVert^2 +  \sum_{i \in Y_k}\lVert y_i - \bm{\mu}(Y_k)\rVert^2_2\right] ,\label{eq:euclidean_intra_ub_1}
    \end{align}
    where $\bm{\mu}(Y_k)\defeq \frac{1}{\lvert Y_k\rvert}\sum_{i\in Y_k}y_i$. Next, we expand the inner-product of the left hand side term in \eqref{eq:euclidean_intra_ub_1}. This yields
    \begin{align*}&\sum_{k=1}^K\sum_{i\in\sigma^{-1}(Y_k)}\lVert x_i-\bm{\mu}(Y_k)\rVert^2 =
        \sum_{k=1}^K\sum_{i\in\sigma^{-1}(Y_k)}\lVert x_i-y_{\sigma(i)}+y_{\sigma(i)}-\bm{\mu}(Y_k)\rVert^2\\
        &=
        \sum_{k=1}^K\sum_{i\in\sigma^{-1}(Y_k)}(\lVert x_i-y_{\sigma(i)} \rVert_{2}^{2} +\lVert y_{\sigma(i)}-\bm{\mu}(Y_k)\rVert^2 + 2 \left\langle x_i-y_{\sigma(i)}, y_{\sigma(i)}-\bm{\mu}(Y_k)
        \right\rangle) \\
        &= M_{\sigma} + \sum_{k=1}^K\sum_{i\in\sigma^{-1}(Y_k)}\lVert y_{\sigma(i)}-\bm{\mu}(Y_k)\rVert^2 + \sum_{k=1}^K\sum_{i\in\sigma^{-1}(Y_k)} 2 \left\langle x_i-y_{\sigma(i)}, y_{\sigma(i)}-\bm{\mu}(Y_k)
        \right\rangle
        \end{align*}
        By an application of the Cauchy-Schwarz inequality to each inner product term followed by an application of Young's inequality, we obtain
        \begin{align*}
    \sum_{k=1}^K\sum_{i\in\sigma^{-1}(Y_k)} 2 \left\langle x_i-y_{\sigma(i)}, y_{\sigma(i)}-\bm{\mu}(Y_k)
        \right\rangle &\leq \sum_{k=1}^K\sum_{i\in\sigma^{-1}(Y_k)} 2 \left\lVert x_i-y_{\sigma(i)} \right\rVert_{2} \left\lVert y_{\sigma(i)}-\bm{\mu}(Y_k)
        \right\rVert_{2} \\
        & \leq\sum_{k=1}^K\sum_{i\in\sigma^{-1}(Y_k)} \left(\frac{1}{t} \left\lVert x_i-y_{\sigma(i)} \right\rVert^{2} + t \left\lVert y_{\sigma(i)}-\bm{\mu}(Y_k)
        \right\rVert^{2} \right) \\&= \frac{M_{\sigma}}{t} + t \cdot \sum_{k=1}^K\sum_{i\in\sigma^{-1}(Y_k)} \left\lVert y_{\sigma(i)}-\bm{\mu}(Y_k)
        \right\rVert^{2},
        \end{align*}
        for any parameter $t > 0$.
        Combining with \eqref{eq:euclidean_intra_ub_1}, $\mathcal{J}_1$
        is upper bounded as
        \begin{align*}
            \mathcal{J}_{1} &\leq \left( 2 + t \right)\cdot\sum_{k=1}^{K}\sum_{i \in Y_k}\lVert y_i - \bm{\mu}(Y_k)\rVert^2_2 
            +
           (1 + 1/t ) M_{\sigma} \\
           &= \left( 2 + t \right)\cdot \sum_{k=1}^{K}\frac{1}{2\lvert Y_k\rvert }\sum_{i, j \in Y_k}\lVert y_i - y_j\rVert_2^2 
            +
           (1 + 1/t ) M_{\sigma}
        \end{align*}
The latter part of the equality follows from the identity: 
    \begin{equation*}
        \sum_{i \in Y_k}\lVert y_i - \bm{\mu}(Y_k)\rVert^2_2 = \frac{1}{2\lvert Y_k\rvert }\sum_{i, j \in Y_k}\lVert y_i - y_j\rVert_2^2.
    \end{equation*}
Applying a symmetric argument to derive an upper bound on $\mathcal{J}_2$ yields
\begin{equation*}
\mathcal{J}_{2} \leq \left( 2 + t \right)\cdot \sum_{k=1}^{K}\frac{1}{2\lvert X_k\rvert }\sum_{i, j \in X_k}\lVert x_i - x_j\rVert_2^2 
            +
           (1 + 1/t ) M_{\sigma}
    \end{equation*}
Summing the two bounds then completes the proof.
\end{proof}

\begin{proof}[Proof of Theorem \ref{thm:reduction_to_kcut} (Squared Euclidean and Kernel Costs)]\label{proof:sq_Euclidean}
Replicating the argument of the metric case of Theorem \ref{thm:reduction_to_kcut}, we evaluate $\mathcal{J}_{1}$ and $\mathcal{J}_{2}$ on the sets $(\sigma^{-1}(\mathcal{Y}^*), \mathcal{Y}^*)$ and $(\mathcal{X}^*, \sigma(\mathcal{X}^*))$ where the optimal solution to \eqref{eq:comb_reformulation_monge_sep} is $\mathcal{X}^{*} = \{X_1^*,\ldots, X_K^*\}$ and $ \mathcal{Y}^{*} = \{Y_1^*,\ldots,Y_K^*\}$. This yields
\begin{equation*}
    \min_{\mathcal{X} \in \mathcal{P}_n^K} \mathcal{J}(\mathcal{X}, \sigma(\mathcal{X})) \leq \frac{1}{2}\cdot(\mathcal{J}_1(\sigma^{-1}(\mathcal{Y}^*), \mathcal{Y}^*) + \mathcal{J}_2(\mathcal{X}^*, \sigma(\mathcal{X}^*))).
    \end{equation*}
By Lemma~\ref{lemma:euclidean_intra_ub} and Lemma~\ref{lemma:intra_cndbound}, the right-hand side is upper-bounded by
\begin{align}
    &\phantom{\leq} \left(1 + 1/t \right) \cdot M_{\sigma} +
         \left( 1 + t/2 \right)\cdot \left[ \sum_{k=1}^{K}\frac{1}{2\lvert X_k^*\rvert }\sum_{i, j \in X_k^*}\lVert x_i - x_j\rVert_2^2 
            +\sum_{k=1}^{K}\frac{1}{2\lvert Y_k^*\rvert }\sum_{i, j \in Y_k^*}\lVert y_i - y_j\rVert_2^2
            \right] \nonumber\\
    &\leq \left(1 + 1/t \right) \cdot  M_{\sigma} +
         \left( 1 + t/2 \right) \cdot \sum_{k=1}^{K}\frac{1}{\lvert X_k^*\rvert } \sum_{i\in X_k^*, j\in Y_{k^*}}\lVert x_i - y_j\rVert_2^2 \nonumber\\
         &=\left(1 + 1/t \right) \cdot M_{\sigma} + (1+t/2)\cdot \mathcal{J}(\mathcal{X}^{*}, \mathcal{Y}^{*}).\nonumber
\end{align}
Recalling that $\gamma = \mathrm{OPT}_{n}/\mathrm{OPT}_{r}$, where $\mathrm{OPT}_{n} = M_{\sigma}$ corresponds to the cost of the optimal rank $n$ coupling and $\mathcal{J}(\mathcal{X}^{*}, \mathcal{Y}^{*})$ the optimal rank $r$ cost. Since $\gamma \mathcal{J}(\mathcal{X}^{*}, \mathcal{Y}^{*}) =  \gamma \mathrm{OPT}_{r} = \mathrm{OPT}_{n} = M_{\sigma}$, substituting yields the upper-bound coefficient
\begin{align}
     \left(1 + \frac{1}{t}\right) \cdot \gamma + \left(1+\frac{t}{2} \right) = 1 + \gamma + \frac{\gamma}{t} + \frac{t}{2}
\end{align}
As this holds for arbitrary $t>0$, we optimize the bound with respect to $t$ to find the optimal value $t = \sqrt{2 \gamma}$. Thus, the tightest bound has coefficient
\begin{align*}
    \left(1 + 1/\sqrt{2 \gamma} \right) \gamma + (1+\sqrt{2 \gamma}/2) =1+\gamma + \sqrt{2\gamma},
\end{align*}
concluding the proof.
\end{proof}

Next, we show that the approximation guarantees of Theorem \ref{thm:reduction_to_kcut} hold with an additional $(1 + \epsilon)$
factor for kernel costs (resp. metric costs) when using Algorithm \ref{alg:gen_kmeans_initialization} to solve step (ii) 
in Algorithm \ref{alg:monge_KMeans}. For both metric and kernel costs,
the proof of the approximation guarantees follows
from analyzing the proof of Theorem \ref{thm:reduction_to_kcut}.
\vspace{0.5em}
\begin{algorithm}[]
   \caption{Transport Registered Initialization}
  \label{alg:gen_kmeans_initialization}
\begin{algorithmic}[1]
   \STATE \textbf{input:} Datasets $X$ and $Y$, cost matrix $\mb{C}$, rank $K$, and oracle $\mathcal{A}_1$ (resp. $\mathcal{A}_2$).
   \STATE $\mb{Q} \leftarrow \mathcal{A}_1(X)$
   \STATE $\mb{R} \leftarrow \mathcal{A}_1(Y)$
   \IF{$\langle \mb{C}, \mb{Q}\diag(1/\mb{Q}^{\top}\mb{1}_n)(\mb{P}_{\sigma^*}\mb{Q})\rangle_F\leq \langle \mb{C}, (\mb{P}_{\sigma^*}\mb{R})\diag(1/\mb{R}^{\top}\mb{1}_n)\mb{R}^{\top}\rangle_F$}
   \STATE $\mb{Q}^{(0)} \leftarrow \mb{Q}$
   \ELSE
   \STATE $\mb{Q}^{(0)} \leftarrow \mb{P}_{\sigma^*}\mb{R}$
   \ENDIF
   \STATE \textbf{output} $(\mb{Q}^{(0)},  \mb{P}_{\sigma^{\star}}^\top\mb{Q}^{(0)})$
\end{algorithmic}
\end{algorithm}

We start with the proof for squared Euclidean and kernel cost functions.
\begin{proof}[Proof of Theorem \ref{thrm:K_means_init_guarantee} (Squared Euclidean and Kernel Costs)]
    Let $\mathcal{X}=\{X_1,\ldots, X_K\}$ and $\mathcal{Y}=\{Y_1,\ldots,Y_K\}$ be $(1 + \epsilon)$ 
    optimal solutions to the
    $K$-means problem emitted by $\mathcal{A}_1$ for $X$ and $Y$ respectively. Let $\mathcal{X}^{*} = \{X_1^*,\ldots, X_K^*\}$ and $ \mathcal{Y}^{*} = \{Y_1^*,\ldots,Y_K^*\}$ be the optimal solution to \eqref{eq:comb_reformulation_monge_sep}.
    Then, by the $(1+\epsilon)$ optimality of $\mathcal{X}$ and $\mathcal{Y}$ and Lemma \ref{lemma:intra_cndbound}, we have
    \begin{align*}
        &\sum_{k=1}^{K}\frac{1}{2\lvert Y_k\rvert }\sum_{i, j \in X_k}\lVert x_i - x_j\rVert_2^2 
            +\sum_{k=1}^{K}\frac{1}{2\lvert Y_k\rvert }\sum_{i, j \in Y_k}\lVert y_i - y_j\rVert_2^2 \\\leq\quad 
            &(1+\epsilon)\cdot(\sum_{k=1}^{K}\frac{1}{2\lvert X^*_k\rvert }\sum_{i, j \in X_k^*}\lVert x_i - x_j\rVert_2^2 
            +\sum_{k=1}^{K}\frac{1}{2\lvert Y_k^*\rvert }\sum_{i, j \in Y_k^*}\lVert y_i - y_j\rVert_2^2)\\
            \leq\quad&(1+\epsilon)\cdot\mathcal{J}(\mathcal{X}^*,\mathcal{Y}^*).
    \end{align*}
    
    By Lemma \ref{lemma:euclidean_intra_ub}, and the preceding 
    inequality it then follows that for all $t > 0$
    \begin{equation*}
        \frac{1}{2}(\mathcal{J}(\mathcal{X}, \sigma(\mathcal{X})) + \mathcal{J}(\sigma^{-1}(\mathcal{Y}),\mathcal{Y}))\leq(1+1/t)\cdot M_{\sigma}+(1+\epsilon)(1+t/2)\cdot\mathcal{J}(\mathcal{X}^*,\mathcal{Y}^*).
    \end{equation*}
    Optimizing the parameter $t$ as in the proof of Theorem \ref{thm:reduction_to_kcut} and taking the minimum
    over the two solutions completes the proof.
\end{proof}

An analogous proof technique to Theorem 2 applies to the 
metric case with the additional
application of the basic inequality:
\begin{equation*}
    \min_{j}\sum_{i=1}^nc(x_i,x_{j})\leq\frac{1}{n}\sum_{i=1}^n\sum_{j=1}^nc(x_i,x_j)\leq 2\cdot\min_{j}\sum_{i=1}^nc(x_i,x_{j}).
\end{equation*}
This inequality relates the $K$-medians objective to the 
intra-cluster cost $\sum_{i=1}^n\sum_{j=1}^nc(x_i,x_j)$ 
used in Lemma \ref{lemma:metric_intra_ub} and picks
up an additional factor of $2$. \qed

\subsection{Lower bounds for Theorem \ref{thm:reduction_to_kcut}}
\label{appendix:lowerbounds}

\begin{figure}[t]
  \centering
  \begin{subfigure}{0.65\linewidth}
    \centering
    \raisebox{7mm}{
    \resizebox{\linewidth}{!}{%
      \begin{tikzpicture}[x=3cm,y=3cm,>=stealth]
          \def\eps{0.35}
          \newcommand{\bball}[2]{\draw[thick,fill=white] (#1,#2) circle (\rad);}
          \newcommand{\wball}[2]{\draw[thick,fill=black] (#1,#2) circle (\rad);}
          \tikzset{ticklabel/.style={above=-1pt, fill=white, inner sep=1pt, outer sep=0pt, rounded corners=1pt}}
        
          \def\rad{0.04}
          \def\dx{0.28}    
          \def\dy{0.3}  
          \newcommand{\pokeball}[2]{%
            \draw[thick,fill=white] ($(#1,#2)+(-\dx/2,0)$) circle (\rad);
            \draw[thick,fill=black] ($(#1,#2)+(\dx/2,0)$)  circle (\rad);
            \draw[thick]
              ($(#1,#2)+(-\dx/2+\rad,0)$) -- ($(#1,#2)+(\dx/2-\rad,0)$);
          }
        
          \draw[densely dotted] (0,0) -- (0,\eps) node[midway,left=6pt] {$\epsilon$};
          \draw[densely dotted] (2,0) -- (2,\eps) node[midway,right=6pt] {$\epsilon$};
        
          \draw[densely dotted] (0,0) -- (0,-\eps/3) node[midway,left=-2.5pt] {\tiny $\frac{\epsilon}{3}$};
          \draw[densely dotted] (0,-\eps/3) -- (0,-0.666*\eps) node[midway,left=-2.5pt] {\tiny $\frac{\epsilon}{3}$};
          \draw[densely dotted] (0,-0.666*\eps) -- (0,-\eps) node[midway,left=-2.5pt] {\tiny $\frac{\epsilon}{3}$};
        
          \draw[densely dotted] (2.0,0) -- (2.0,-\eps/3) node[midway,left=-2.5pt] {\tiny $\frac{\epsilon}{3}$};
          \draw[densely dotted] (2.0,-\eps/3) -- (2.0,-0.666*\eps) node[midway,left=-2.5pt] {\tiny $\frac{\epsilon}{3}$};
          \draw[densely dotted] (2.0,-0.666*\eps) -- (2.0,-\eps) node[midway,left=-2.5pt] {\tiny $\frac{\epsilon}{3}$};
          
          \draw[densely dotted] (1,0) -- (1,-\eps) node[midway,left=-1pt,yshift=4pt] {$\epsilon$};
          
          \wball{0}{\eps};
          \bball{2}{\eps};
          \node[label=above:{\footnotesize $P$}] at (0,\eps) {};
          \node[label=above:{\footnotesize $Q$}] at (2,\eps) {};
          
          \pokeball{1.0}{-\eps};
          \node[label=above:{\tiny $M_W$}] at (1.0-\dx/2,-\eps) {};
          \node[label=above:{\tiny $M_B$}] at (1.0+\dx/2,-\eps) {};
        
          \pokeball{0.0}{-\eps/3};
          \node[label=left:{\tiny $L_W^1$}] at (0.0-\dx/2,-\eps/3) {};
          \node[label=right:{\tiny $L_B^1$}] at (0.0+\dx/2,-\eps/3) {};
          \pokeball{0.0}{-0.666*\eps};
          \node[label=left:{\tiny $L_W^2$}] at (0.0-\dx/2,-0.666*\eps) {};
          \node[label=right:{\tiny $L_B^2$}] at (0.0+\dx/2,-0.666*\eps) {};
          \pokeball{0.0}{-1.0*\eps};
          \node[label=left:{\tiny $L_W^3$}] at (0.0-\dx/2,-1.0*\eps) {};
          \node[label=right:{\tiny $L_B^3$}] at (0.0+\dx/2,-1.0*\eps) {};
        
          \pokeball{2.0}{-\eps/3};
          \node[label=left:{\tiny $R_W^1$}] at (2.0-\dx/2,-\eps/3) {};
          \node[label=right:{\tiny $R_B^1$}] at (2.0+\dx/2,-\eps/3) {};
          \pokeball{2.0}{-0.666*\eps};
          \node[label=left:{\tiny $R_W^2$}] at (2.0-\dx/2,-0.666*\eps) {};
          \node[label=right:{\tiny $R_B^2$}] at (2.0+\dx/2,-0.666*\eps) {};
          \pokeball{2.0}{-1.0*\eps};
          \node[label=left:{\tiny $R_W^3$}] at (2.0-\dx/2,-1.0*\eps) {};
          \node[label=right:{\tiny $R_B^3$}] at (2.0+\dx/2,-1.0*\eps) {};
        
          \draw[<->] (-0.4,0) -- (2.4,0);
          \foreach \x/\lab in {0, 0.5, 1, 1.5, 2}
          \draw (\x,0) -- ++(0,0.06) node[ticklabel, above=2.5pt] {$\lab$};
      \end{tikzpicture}
    }
    }
  \end{subfigure}
  \hfill
  \begin{subfigure}{0.34\linewidth}
    \centering
    \resizebox{\linewidth}{!}{%
      \begin{tikzpicture}[x=3cm,y=3cm,>=stealth]
  \def\eps{0.20}
  \def\epsp{0.13}

  \def\rad{0.04}
  \newcommand{\bball}[2]{\draw[thick,fill=black] (#1,#2) circle (\rad);}
  \newcommand{\wball}[2]{\draw[thick,fill=white] (#1,#2) circle (\rad);}
  \tikzset{ticklabel/.style={above=-1pt, fill=white, inner sep=1pt, outer sep=0pt, rounded corners=1pt}}
    
  \draw[decorate, decoration={brace, raise=2pt}, line width=0.9pt] (2-0.02,-1+0.02) -- (2-0.02,0-0.02)
  node[midway, left=10pt, yshift=-3pt, ticklabel] {$1$};
  
  \draw[decorate, decoration={brace, mirror, raise=5pt}, thick] (1+0.02,0) -- (1+\eps-0.02,0)
  node[midway, below=15pt, ticklabel] {$\epsilon$};
  \draw[decorate, decoration={brace, mirror, raise=5pt}, thick] (1-0.02,-2) -- (1-0-\eps+0.02,-2)
  node[midway, above=10pt, ticklabel] {$\epsilon$};
  
  \def\ticklen{0.07}
  \draw[-, line width=0.7pt] (0,0) -- (2,0) -- (2, -2) -- (0, -2) -- cycle;
  \draw (1,+0) -- ++(0,-\ticklen);
  \draw (1,-2) -- ++(0,+\ticklen);
  \draw (2,-1) -- ++(-\ticklen,0);
  \draw (0,-1) -- ++(+\ticklen,0);
  
  \draw[-, thick] (+0.5*\epsp, +0.5*\epsp) -- (-0.5*\epsp, -0.5*\epsp);
  \node[label=right:{\small $L_B^1$}] at (+0.4*\epsp,+0.55*\epsp) {};   
  \node[label=below:{\small $L_W^1$}] at (-0.75*\epsp,-0.4*\epsp) {}; 
  \bball{+0.5*\epsp}{+0.5*\epsp};
  \wball{-0.5*\epsp}{-0.5*\epsp};

  \draw[-, thick] (0, +1.1*\epsp) -- (-1.1*\epsp, 0);
  \bball{+0}{+1.1*\epsp};
  \wball{-1.1*\epsp}{0};
  \node[label=above:{\small $L_B^2$}] at (+0.7*\epsp,+0.7*\epsp) {};   
  \node[label=left:{\small $L_W^2$}] at (-0.5*\epsp,-0.5*\epsp) {};   
  
  \draw[-, thick] (2-0.5*\epsp, -0.5*\epsp-2) -- (2+0.5*\epsp, +0.5*\epsp-2);
  \bball{2+0.5*\epsp}{+0.5*\epsp-2};
  \node[label=above:{\small $R_B^1$}] at (2+0.75*\epsp,+0.4*\epsp-2) {};   
  \wball{2-0.5*\epsp}{-0.5*\epsp-2};
  \node[label=left:{\small $R_W^1$}] at (2-0.3*\epsp,-0.55*\epsp-2) {};

  \draw[-, thick] (2+1.1*\epsp, -2) -- (2, -1.1*\epsp-2);
  \bball{2+1.1*\epsp}{-2};
  \node[label=right:{\small $R_B^2$}] at (2+0.7*\epsp,+0.7*\epsp-2) {};   
  \wball{2}{-1.1*\epsp-2};
  \node[label=below:{\small $R_W^2$}] at (2-0.5*\epsp,-0.7*\epsp-2) {};

  \bball{1+\eps}{0};
  \node[label=above:{$P_1$}] at (1+\eps,0) {};  
  
  \wball{2}{-1};
  \node[label=right:{$Q_1$}] at (2,-1) {};  
  
  \bball{1-\eps}{-2};
  \node[label=below:{$P_2$}] at (1-\eps,-2) {};  

  \wball{0}{-1};
  \node[label=left:{$Q_2$}] at (0,-1) {}; 
\end{tikzpicture}
    }
  \end{subfigure}
  \caption{\label{fig:geometric_lower_bounds}
    Geometric constructions providing 
  lower bounds for Theorem \ref{thm:reduction_to_kcut} in the case of \textbf{(left)}
  Euclidean cost ($k$ = 3) and \textbf{(right)} squared Euclidean cost ($k$ = 2). Points in
  ${X}$ are colored black and points in ${Y}$ are colored white. Points connected by a line segment have identical coordinates and are separated
  for ease of visualization.
  }
\end{figure}

In this section, we construct an explicit family of examples that realize the upper
bounds in Theorem \ref{thm:reduction_to_kcut} for the Euclidean
and squared Euclidean cost functions. Both constructions rely on unstable
arrangements of points, where upon slight perturbation, the Monge map changes
dramatically. Making use of this instability, the constructions
are set up to have the optimal Monge map be a poor choice for co-clustering
while there is a near-optimal non-Monge map that is substantially better
for co-clustering.

\paragraph{Euclidean Metric Cost.}

Fix $\epsilon > 0$.
The construction consists of two datasets ${X}$ and ${Y}$ 
each with $2k + 2$ points placed near the line segment $[0, 2] \times \{0\}$.
The first two points $P = (0, \epsilon)$ and 
$Q = (2, \epsilon)$ are near the ends of the line segment.
The next pair of points $M_W = M_B = (1, -\epsilon)$ are slightly below the middle
of the segment. Finally, $2k$ points 
$L_W^i = L_B^i=(0, -\frac{i\epsilon}{k})$ are at the left end of the segment 
and  $2k$ points
$R_W^i = R_W^i=(2, -\frac{i\epsilon}{k})$ are at the right end of
the segment. Datasets ${X}$ and ${Y}$
are then defined as 
${X} = \{P, M_B\}\cup\{L_B^i\}_{i=1}^k\cup\{R_B^i\}_{i=1}^k$ and 
${Y} = \{Q, M_W\}\cup\{L_W^i\}_{i=1}^k\cup\{R_W^i\}_{i=1}^k$.
A diagram of the construction is provided in Figure \ref{fig:geometric_lower_bounds} 
for the case of $k = 3$.

First, observe that under the Euclidean cost metric, the points have a unique
Monge map $\sigma : {X} \rightarrow {Y}$ defined as 
$\sigma(P) = Q, \sigma(M_B) = M_W, \sigma(L_B^i)=L_W^i,$ and $\sigma(R_B^i)=R_W^i$. 
The preceding Monge map $\sigma$ has cost $2$
since the distance between $P$ and $Q$ is $2$ while the distance between
the remaining mapped points is $0$. Next, consider an optimal Monge map $\sigma'$
with $\sigma'(P) \neq Q$. 
Since there are $k + 1$ black points $\{L_B^i\}_{i=1}^k$ and $P$
and $k$ points $\{L_W^i\}_{i=1}^k$, at least one black point $\{L_B^i\}_{i=1}^k$ or $P$ must map to $M_W$ or $\{R_W^i\}_{i=1}^k$ via $\sigma'$. If
$P$ is such a point, we must have that $P$ maps to $M_W$, as otherwise it would
obtain a cost greater than $2$. However, this yields a contradiction as $M_B$ must 
map to a point of distance at least $1$ and the cost of mapping $\sigma'(P) = M_W$ 
is strictly greater than $1$. Applying a similar argument to the point $L_B^i$
together with the fact that the cost of mapping $\sigma'(P) = L_B^i$ is greater
than $\epsilon$ proves the optimality and uniqueness of $\sigma$.

Second, consider an optimal solution $\mathcal{X}^* = \{X_1, X_2\}, \mathcal{Y}^* = \{Y_1, Y_2\}$ to the
partition reformulation \eqref{eq:comb_reformulation_monge_sep}
of the ($K=2$) low-rank OT problem where
$\sigma(X_i) = Y_i$, $i = 1, 2$, and the cluster sizes are balanced:
$\lvert X_1 \rvert = \lvert X_2\rvert = \lvert Y_1 \rvert = \lvert Y_2\rvert$.
We will argue that the cost of such a solution $\mathcal{J}(\mathcal{X}^*, \mathcal{Y}^*)$ is lower bounded by 
$\frac{4k + 2}{k+1}$. In contrast, taking the solution 
$X_1 = \{P\} \cup \{L_B^i\}_{i=1}^k, X_2 = \{M_B\} \cup \{R_B^i\}_{i=1}^k$ and $Y_1 = \{M_W\} \cup \{L_W^i\}_{i=1}^k, Y_2 = \{Q\} \cup \{R_W^i\}_{i=1}^k$,
which does not satisfy $\sigma(X_i) = Y_i$,
we have that the cost 
\begin{equation*}
    \mathcal{J}(\{X_1, Y_1\}, \{X_2, Y_2\}) = \frac{1}{k+1}\left(\sum_{x\in X_1}\sum_{y\in Y_1}\lVert x - y\rVert_2 + \sum_{x\in X_2}\sum_{y\in Y_2}\lVert x - y\rVert_2 \right) = 2 + \mathcal{O}(\epsilon).
\end{equation*}
Consequently, taking the limits $\epsilon \rightarrow 0$ and
$k \rightarrow \infty$ shows that the constant factor $(1 + \gamma)$ 
stated in Theorem \ref{thm:reduction_to_kcut} is tight
and arbitrarily close to $2$.

Finally, we argue that the cost of any solution $\mathcal{X}^* = \{X_1, X_2\}, \mathcal{Y}^* = \{Y_1, Y_2\}$ with $\sigma(X_i) = Y_i$ has $\mathcal{J}(\mathcal{X}^*, \mathcal{Y}^*) \geq \frac{4k + 2}{k+1}$. Without loss of generality, assume that
$P \in X_1$. Since $\sigma(P) = Q$ and $\sigma(X_1) = Y_1$, it follows that $Q \in Y_1$. Let $l$ denote the size of the set $\{i : L_B^i \in X_1\}$. We analyze
the two cases $M_B \in X_1$ and $M_B\notin X_1$ 
separately.

\textbf{Case 1} ($M_B \in X_1$)\textbf{.} Since the
set sizes are balanced and $M_B \in X_1$, 
we have $p = k + 1 - (l + 2)$
is equal to the size of the set $\{i : R_B^i \in X_1\}$.
Consequently, the cost of the solution is lower bounded
by:
\begin{align*}
    (k+1)\cdot\mathcal{J}(\mathcal{X}^*, \mathcal{Y}^*) &\geq (4+3p+2l(p+1)) + 2(k-p)(k-l)\\
            &= -4 l^2 - (5-4k)l + (5k+1)
\end{align*}
To see this, note that the cost of mapping point $P \in X_1$ to $Y_1$ is at least $3 + 2p$ as $P$ must map to the $p$ points $R_W^i$, $M_W$, and $Q$. The cost of mapping the $l$ points $L_B^i$ to $Y_1$ is at least $2l(p+1)$ 
and the cost of mapping $M_B$ to $Y_1$ is at least 
$p + 1$. Since the size of $X_2 \cap \{L_B^i\}_{i=1}^k$ is $k - l$
and the size of $X_2 \cap \{R_B^i\}_{i=1}^k$ is
$k - p$, the cost of mapping $X_2$ to $Y_2$ is at least 
$2(k-p)(k-l)$. Adding the lower bounds gives the first bound and algebraic manipulation the second.
Since the lower bound is a
concave quadratic function in $l$, it is minimized
at either $l = 0$ or $l = k -1$. Evaluating both
yields the lower bound $\mathcal{J}(\mathcal{X}^*,\mathcal{Y}^*)\geq \frac{4k+2}{k+1}$.

\textbf{Case 2} ($M_B \notin X_1$)\textbf{.} 
Since the
set sizes are balanced and $M_B \notin X_1$, we have $p = k - l$
is equal to the size of the set $\{i : R_B^i \in X_1\}$.
By a similar argument to the previous case, we have that
the cost of the solution is lower bounded by:
\begin{align*}
(k+1)\cdot\mathcal{J}(\mathcal{X}^*, \mathcal{Y}^*) &\geq 2+2p+2l(p+1)+2lp+2(k-l+k-p)+4(k-l)(k-p)\\
                                          &= -8l^2+8kl+(4k+2).
\end{align*}
Since the lower bound is again a concave quadratic function in $l$, it is minimized
at either $l = 0$ or $l = k$. Evaluating both yields the lower bound $\mathcal{J}(\mathcal{X}^*, \mathcal{Y}^*) \geq \frac{4k+2}{k+1}$. This completes the proof of the first part of
Proposition \ref{prop:lowerbound}. \qed

\paragraph{Squared Euclidean Cost.}

Fix $1 > \epsilon > 0$.
The construction consists of two datasets ${X}$ and ${Y}$ 
each with $2k + 2$ points placed along the edges of the square
$[0, 2]\times[0,2]$. The first two points $P_1 = (1 + \epsilon, 2)$ and $P_2 = (1 - \epsilon, 0)$
are on the top and bottom edges of the square. The second two points
$Q_1 = (0, 1)$ and $Q_2 = (2, 1)$ are set on the left and right edges of the square.
Finally, $2k$ points $L_W^i = L_B^i = (0, 2)$ and $R_W^i = R_B^i = (2, 0)$
are placed along the top left and bottom right corners of the square. The
sets $X$ and $Y$ are then defined as 
$X =\{P_1, P_2\}\cup\{L_B^i\}_{i=1}^k\cup\{R_B^i\}_{i=1}^k$
and $Y =\{Q_1, Q_2\}\cup\{L_W^i\}_{i=1}^k\cup\{R_W^i\}_{i=1}^k$. 
A diagram of the construction is provided in Figure \ref{fig:geometric_lower_bounds} 
for the case when $k = 2$.

First, we show that under the squared Euclidean cost function
there is a unique Monge map $\sigma : X\rightarrow Y$
defined as $\sigma(P_i)=Q_i$, $\sigma(L_B^i)=L_W^i$, and $\sigma(R_B^i)=R_B^i$,
up to a relabeling of the corner points $\{L_B^i\}_{i=1}^k\cup \{R_B^i\}_{i=1}^k$.
The preceding Monge map has cost equal to $4 + 2\epsilon^2-4\epsilon < 4$
as $\epsilon^2 < \epsilon$. Next, suppose that there is a distinct (up to a relabeling of the corner points) Monge map $\sigma'$
with  equal (or lower) cost. Note that since $\sigma'$
is an optimal Monge map, then $\sigma'(L_B^i) \neq R_W^j$ for any $j$, as the
cost of mapping point $L_B^i$ to $R_W^j$ is $8$. Similarly, $L_B^i$ cannot 
map to $Q_1$. Therefore, if $\sigma' \neq \sigma$
 it must be the case that $\sigma'(L_B^i)=Q_2$
for some $i$. Then, either $\sigma'(P_1) = Q_1$ or $\sigma'(P_2) = Q_1$, but in either
case mapping the remaining point results in a cost of at least $4$, a contradiction
to the optimality of $\sigma'$.

Second, consider an optimal solution $\mathcal{X}^* = \{X_1, X_2\}, \mathcal{Y}^* = \{Y_1, Y_2\}$ to the
partition reformulation \eqref{eq:comb_reformulation_monge_sep}
of the ($K=2$) low-rank OT problem where
$\sigma(X_i) = Y_i$, $i = 1, 2$, and the cluster sizes are balanced:
$\lvert X_1 \rvert = \lvert X_2\rvert = \lvert Y_1 \rvert = \lvert Y_2\rvert$.
We will argue that the cost of such a solution $\mathcal{J}(\mathcal{X}^*, \mathcal{Y}^*)$ is lower bounded by 
$\frac{12k+4}{k+1}+\mathcal{O}(\epsilon)$. In contrast, taking the solution 
$X_1 = \{P_1\} \cup \{L_B^i\}_{i=1}^k, X_2 = \{P_2\} \cup \{R_B^i\}_{i=1}^k$ and $Y_1 = \{Q_2\} \cup \{L_W^i\}_{i=1}^k, Y_2 = \{Q_1\} \cup \{R_W^i\}_{i=1}^k$,
which does not satisfy $\sigma(X_i) = Y_i$,
we obtain the cost 
\begin{equation*}
    \mathcal{J}(\{X_1, Y_1\}, \{X_2, Y_2\}) = 4+\mathcal{O}(\epsilon).
\end{equation*}
Consequently, taking the limits $\epsilon \rightarrow 0$ and
$k \rightarrow \infty$ shows that the constant factor 
stated in Theorem \ref{thm:reduction_to_kcut} is lower bounded by $3$
in the worst case.

Finally, we argue that the cost of any solution $\mathcal{X}^* = \{X_1, X_2\}, \mathcal{Y}^* = \{Y_1, Y_2\}$ with $\sigma(X_i) = Y_i$ has $\mathcal{J}(\mathcal{X}^*, \mathcal{Y}^*) \geq \frac{12k + 4}{k+1}$. Without loss of generality, assume that
$P_1 \in X_1$. Since $\sigma(P_1) = Q_1$ and $\sigma(X_1) = Y_1$, it follows that $Q_1 \in Y_1$. Let $l$ denote the size of the set $\{i : L_B^i \in X_1\}$. We analyze
the two cases $P_2 \in X_1$ and $P_2 \notin X_1$ 
separately.

\textbf{Case 1} ($P_2 \notin X_1$)\textbf{.}
In this case,
the size of the set $\{i : R_B^i \in X_1\}$ is $p = k -l$
following the fact that $\lvert X_1 \rvert = k + 1$
and $P_2 \notin X_1$. Then, the cost of the solution
is lower bounded by:
\begin{align*}
    (k+1)\cdot\mathcal{J}(\mathcal{X}^*, \mathcal{Y}^*) &\geq &&(8lp+5l)+(8lp+p)+(2+l+5p)+(8(k-l)(k-p)+k-l)+\\&\phantom{\geq} &&(8(k-l)(k-p)+5(k-p))+(2+5(k-l)+k-p)\\
    &= &&- 32 l^2 + 32 k l +4 (1 + 3 k).
\end{align*}
To derive the previous bound, we explicitly tabulate the
cost between all types of points. Specifically, the
cost of mapping $\{L_B^{i}:L_B^i\in X_1\}$ to all points in $Y_1$
is at least $8lp+5l$. The cost of mapping $\{R_B^{i}:R_B^i\in X_1\}$ to all points in $Y_1$
is at least $8lp+p$. The cost of mapping $P_1$ in $Y_1$ is $2 + l + 5p$. Proceeding
in this way for the points in $X_2$ yields the first inequality. Algebra with the substitution $p = k - l$ yields the equality. Since
the lower bound is a concave quadratic function in $l$, it is either optimized
at $l = 0$ or $l = k$. Evaluating the lower bound at these points yields
the inequality $\mathcal{J}(\mathcal{X}^*, \mathcal{Y}^*) \geq \frac{12k+4}{k+1}.$

\textbf{Case 2} ($P_2 \in X_1$)\textbf{.} In this case,
the size of the set $\{i : R_B^i \in X_1\}$ is $p = k - l - 1$
following the fact that $\lvert X_1 \rvert = k + 1$
and $P_2 \notin X_1$.
Then, the cost of the solution
is lower bounded by:
\begin{align*}
    (k+1)\cdot\mathcal{J}(\mathcal{X}^*, \mathcal{Y}^*) &\geq (l+5l+8lp)+(4+l+5p)+(4+5l+p)\\& +(8lp+5p+p)+2\cdot8(k-p)(l-l)\\
    &= - 25 l^2 + (-25 + 25 k) l+ 28 k -4.
\end{align*}
This follows an explicit tabulation of the cost between all types of points.
Since
the lower bound is a concave quadratic function in $l$, it is either optimized
at $l = 0$ or $l = k$ - 1. Evaluating the lower bound at these points yields
the inequality $\mathcal{J}(\mathcal{X}^*, \mathcal{Y}^*) \geq \frac{28k+4}{k+1}\geq\frac{12k+4}{k+1}.$ This completes the proof of the second part of
Proposition \ref{prop:lowerbound}. \qed

\subsection{Proof of connection between low-rank optimal transport and $K$-means}

We provide a brief proof of
the connection between low-rank optimal 
transport and $K$-means stated in the main text. The
statement and proof are attached below.

\begin{remark}\label{remark:LOT_as_2KMeans}
    Suppose $\bm{\mu}_{k}^{X} = \frac{1}{|\mathcal C_k|} \sum_{i \in \mathcal{C}_{X,k}} x_{i}, \, \bm{\mu}_{k}^{Y} = \frac{1}{|\mathcal C_k|} \sum_{j \in \mathcal{C}_{Y,k}} y_{j}$ where $|\mathcal{C}_{X,k} | =|\mathcal{C}_{Y,k}| = |\mathcal{C}_k|$. Then, for the squared Euclidean cost \eqref{eq:hardLOT_partition} is equal to a pair of $K$-means distortions on $X,Y$ and a term quantified the separation between assigned means $\bm{\mu}_{k}^{X}, \bm{\mu}_{k}^{Y}$
    {
    \begin{align*}
        &\sum_{k=1}^K \frac{1}{\lvert X_k \rvert}\sum_{i\in X_k}\sum_{j \in Y_k} \lVert x_i - y_j\rVert_{2}^{2}
        \\
        &= \sum_{k=1}^{K}\biggl(\sum_{ i \in \mathcal{C}_{X,k} } \lVert x_{i} - \bm{\mu}_{k}^{X} \rVert_{2}^{2} + \sum_{j \in \mathcal{C}_{Y,k}}\lVert y_{j} - \bm{\mu}_{k}^{Y} \rVert_{2}^{2}
+|\mathcal{C}_k | \lVert \bm{\mu}_{k}^{X} - \bm{\mu}_{k}^{Y} \rVert_{2}^{2}\biggr)
    \end{align*}
    }
\end{remark}
\begin{proof}
Starting from the definition of the partition form of the
low-rank cost $\mb{C}^{\ell_{2}^{2}}$, we have
\begin{align*}
\sum_{k=1}^K\frac{1}{\lvert X_k \rvert}\sum_{i\in X_k}\sum_{j \in Y_k} \lVert x_i - y_j\rVert_{2}^{2}
= \sum_{k=1}^{K}  \sum_{i \in \mathcal{C}_{X,k}}  \lVert x_{i} \rVert_{2}^{2} + 
\sum_{j \in \mathcal{C}_{Y,k}}\lVert y_{j} \rVert_{2}^{2}
- 2\cdot|\mathcal{C}_{k}| \langle \bm{\mu}_{k}^{X}, \bm{\mu}_{k}^{Y} \rangle
\end{align*}
By adding and subtracting $|\mathcal{C}_k |\lVert \bm{\mu}_{k}^{X} \rVert_{2}^{2}$ and $|\mathcal{C}_k | \lVert \bm{\mu}_{k}^{Y} \rVert_{2}^{2}$, we find the right hand side is equal to
\begin{align*}
\sum_{k=1}^{K} \biggl( \sum_{i \in \mathcal{C}_{X,k}}  \lVert x_{i} \rVert_{2}^{2} 
  - |\mathcal{C}_k | \lVert \bm{\mu}_{k}^{X} \rVert_{2}^{2}
  + &
\sum_{j \in \mathcal{C}_{Y,k}}\lVert y_{j} \rVert_{2}^{2} - |\mathcal{C}_k | \lVert \bm{\mu}_{k}^{Y} \rVert_{2}^{2} \\
&+|\mathcal{C}_k | \left( \lVert \bm{\mu}_{k}^{X} \rVert_{2}^{2}
- 2\cdot \langle \bm{\mu}_{k}^{X}, \bm{\mu}_{k}^{Y} \rangle + \lVert \bm{\mu}_{k}^{Y} \rVert_{2}^{2} \right) \biggr)
\end{align*}
We conclude by observing $\sum_{k=1}^{K}  \sum_{i \in \mathcal{C}_{X,k}}  \lVert x_{i} \rVert_{2}^{2} 
  - |\mathcal{C}_k | \lVert \bm{\mu}_{k}^{X} \rVert_{2}^{2} = \sum_{k=1}^{K}  \sum_{i \in \mathcal{C}_{X,k}}  \lVert x_{i} - \bm{\mu}_{k}^{X} \rVert_{2}^{2}$ (resp. for $Y$) and identifying $\lVert \bm{\mu}_{k}^{X} \rVert_{2}^{2}
- 2\cdot \langle \bm{\mu}_{k}^{X}, \bm{\mu}_{k}^{Y} \rangle + \lVert \bm{\mu}_{k}^{Y} \rVert_{2}^{2}$ as a difference between means. 
This results in the following form for the right hand side:
\begin{align*}
\sum_{k=1}^{K} \biggl(\sum_{i \in \mathcal{C}_{X,k}}  \lVert x_{i} - \bm{\mu}_{k}^{X} \rVert_{2}^{2} 
  + 
\sum_{j \in \mathcal{C}_{Y,k}}\lVert y_{j} - \bm{\mu}_{k}^{Y} \rVert_{2}^{2} 
+|\mathcal{C}_k | \lVert \bm{\mu}_{k}^{X} - \bm{\mu}_{k}^{Y} \rVert_{2}^{2}\biggr),
\end{align*}
and completes the proof.
\end{proof}

\subsection{Generalized $K$-Means  Algorithms}

\subsubsection{Mirror Descent (\texttt{GKMS})}
\label{sec:GKMS}

Here we present an algorithm for generalized $K$-means -- which we call (\texttt{GKMS}) -- that solves generalized $K$-means locally using mirror-descent. \texttt{GKMS} consists of a sequence of mirror-descent steps with the neg-entropy mirror map $\psi(q) = -\sum q_{ij} \log q_{ij}$ with $\mathrm{KL}$ as the proximal function. This results in a sequence of exponentiated gradient steps with Sinkhorn projections onto a single marginal \cite{sinkhorn1966relationship}. Notably, Lloyd's algorithm for $K$-means, which is the most popular local heuristic for minimizing the $K$-means objective, alternates between an update to means $\{\bm{\mu}_{k}\}_{k=1}^K \subset \mathbb{R}^{d}$ and hard-cluster assignments $\bm{Z} \in \{0, 1\}^{n \times K}$. This algorithm only optimizes cluster assignments for a fixed cost $\mb{C}$, lacking an explicit notion of points or centers in $\mathbb{R}^{d}$. Moreover, it permits dense initial conditions $\mb{Q}^{(0)}$ and represents $\mb{Q} \in \mathbb{R}^{n \times K}_{+}$. As the loss lacks entropic regularization, in theory the sequence $(\mb{Q}^{(n)})_{n=1}^\infty$ converges in $\ell_{2}$ to sparse solutions, but requires a final
rounding step to ensure it lies in the set of hard couplings.


We state the generalized $K$-means problem with its constraints explicitly as:
\begin{align*}
    &\min_{\mb{Q} \in \mathbb{R}^{n \times K}} \,\,\left\langle \mb{C}^{\dagger}, 
    \mb{Q} \diag(1/\mb{Q}^{\top} \mb{1}_{n}) \mb{Q}^{\top}
    \right\rangle_{F} \\
      &\quad\textrm{s.t.} \quad \mb{Q}\mb{1}_{K} = \bm{u}_{n}, \mb{Q} \geq \mb{0}_{n \times K}.
\end{align*}
To derive the associated KKT conditions, one introduces the associated dual variables $\bm{\lambda} \in \mathbb{R}^{n}$ and a non-negative matrix $\bm{\Omega} \in \mathbb{R}_{+}^{n \times K}$. From this, we derive a lower bound to the primal by constructing the Lagrangian $L$ as
\begin{align*}
    L(\mb{Q}, \,\,\bm{\Omega},\,\, \bm{\lambda}) = \left\langle \mb{C}^{\dagger}, 
    \mb{Q} \diag(1/\mb{Q}^{\top} \mb{1}_{n}) \mb{Q}^{\top}
    \right\rangle_{F} + \left\langle 
    \bm{\lambda}, \mb{Q}\mb{1}_{K} - \bm{u}_{n}
    \right\rangle - \tr\bm{\Omega}^{\top}\mb{Q}
\end{align*}
Denote $\mb{D}^{-1} = \diag(1/\mb{Q}^{\top} \mb{1}_{n})$. For arbitrary directions $\mb{V} \in \mathbb{R}^{n \times K}$ one has the direction derivative in $\mb{Q}$ for $\mathcal{F}$ is
\begin{align*}
    D \left\langle \mb{C}^{\dagger}, 
    \mb{Q} \mb{D}^{-1} \mb{Q}^{\top}
    \right\rangle_{F} [\mb{V}] = \left\langle \mb{C}^{\dagger}, 
    \mb{V} \mb{D}^{-1} \mb{Q}^{\top}
    \right\rangle_{F} + \left\langle \mb{C}^{\dagger}, 
    \mb{Q} \mb{D}^{-1} \mb{V}^{\top}
    \right\rangle_{F} + \left\langle \mb{C}^{\dagger}, 
    \mb{Q} \mb{D}\diag(1/\mb{Q}^{\top} \mb{1}_{n})[\mb{V}] \mb{Q}^{\top}
    \right\rangle_{F} 
\end{align*}
by symmetry in $\mb{Q}$, this is
\begin{align*}
    &= \left\langle \mb{C}^{\dagger, \top} + \mb{C}^{\dagger},  
    \mb{Q} \mb{D}^{-1} \mb{V}^{\top}
    \right\rangle_{F} + \frac{1}{2}\left\langle (\mb{C}^{\dagger} + \mb{C}^{\dagger, \top}), 
    \mb{Q} D\diag(1/\mb{Q}^{\top} \mb{1}_{n})[\mb{V}] \mb{Q}^{\top}
    \right\rangle_{F} \\
    &= \left\langle \mb{S} \mb{Q} \mb{D}^{-1}, \mb{V} \right\rangle - \frac{1}{2}\left\langle \mb{1}_{n}\diag^{-1}( \mb{Q}^{\top} \mb{D}^{-1} \mb{S} 
    \mb{Q} \mb{D}^{-1} ) \mb{V}
    \right\rangle_{F}
\end{align*}
So that for $\mb{S} = (1/2)(\mb{C}^{\dagger, \top} + \mb{C}^{\dagger})$ the symmetrization of $\mb{C}^{\dagger}$ we find
\begin{align*}
    \nabla_{\mb{Q}} \,L =  \,\mb{S} \mb{Q} \mb{D}^{-1} - \frac{1}{2}\mb{1}_{n}\diag^{-1}( \mb{Q}^{\top} \mb{D}^{-1} \mb{S} 
    \mb{Q} \mb{D}^{-1} )  - \bm{\lambda}\mb{1}_{K}^{\top} - \bm{\Omega}
\end{align*}
Thus, one may summarize the KKT conditions for generalized $K$-means by
\begin{align*}
     \,\mb{S} \mb{Q} \mb{D}^{-1} - \frac{1}{2}\mb{1}_{n} \diag^{-1}( \mb{Q}^{\top} \mb{D}^{-1} \mb{S} 
    \mb{Q} \mb{D}^{-1} )  & - \bm{\lambda}\mb{1}_{K}^{\top} - \bm{\Omega} = \mb{0}_{n \times K}, \\
    \mb{Q}\mb{1}_{K} = \frac{1}{n} \mathbf{1}_{n},& \\
    \mb{Q} \geq \mb{0}_{n \times K},& \\
    \bm{\Omega} \odot \mb{Q} = \mb{0}_{n \times K}.
\end{align*}

Next, let us suppose we consider a mirror-descent on the generalized $K$-means loss with respect to the mirror-map given by the neg-entropy $\psi=-H(p)$, and associated divergence $D_{\psi}(p \mid q ) = \mathrm{KL}(p \mid\mid q)$.

\begin{proposition}\label{prop:GKMS_Derivation}
Suppose $(\gamma_k)_{k \geq 0}$ is a positive sequence of step-sizes for a mirror-descent with respect to the $\mathrm{KL}$ divergence on the variable $\mb{Q}$ in \eqref{def:GenKMeans}. That is, using
the update rule
\begin{align}\label{eq:MD_GKM}
\mb{Q}^{(k+1)} \defeq \argmin _{ \mb{Q} \in \Pi(\bm{u}_{n},\cdot)} \langle \mb{Q}, \nabla_\mb{Q}\mathcal{F}\mid_{\mb{Q}^{(k)}}
    \rangle_{F} + \frac{1}{\gamma_k} \mathrm{KL} ( \mb{Q} \| \mb{Q}^{(k)} ),
\end{align}
where we define the generalized $K$-means loss function as \[\mathcal{F}(\mb{Q})\defeq\left\langle \mb{C}^{\dagger}, 
    \mb{Q} \diag(1/\mb{Q}^{\top} \mb{1}_{n}) \mb{Q}^{\top}
    \right\rangle_{F}.
    \]
    Then the updates are of exponentiated-gradient form with one-sided Sinkhorn projections,
\begin{align}\label{alg:gen_KMeans_supp}
\mb{Q}^{(k+1)}\gets P_{\bm{u}_{n}, \cdot}\left(
\mb{Q}^{(k)} \,
\odot \, \exp\left(-
\gamma_{k} \nabla_{\mb{Q}}\mathcal{F}\mid_{\mb{Q}^{(k)}}
\right)
\,\,\right),
\end{align}
where $P_{\bm{u}_{n}, \cdot} \left( \mb{X}\right) = \diag(\bm{u}_{n}/ \mb{X}\mb{1}_K)\mb{X}$.
\end{proposition}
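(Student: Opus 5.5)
The plan is to treat the update \eqref{eq:MD_GKM} as a strictly convex problem over the affine set $\Pi(\bm{u}_n,\cdot)$ and read off its solution from first-order optimality. Write $\mb{G} \defeq \nabla_{\mb{Q}}\mathcal{F}\mid_{\mb{Q}^{(k)}}$, a fixed matrix once $\mb{Q}^{(k)}$ is fixed. The objective $\mb{Q}\mapsto \langle \mb{Q},\mb{G}\rangle_F + \gamma_k^{-1}\mathrm{KL}(\mb{Q}\|\mb{Q}^{(k)})$ is a linear term plus a strictly convex entropy term. Here $\mathrm{KL}(\mb{Q}\|\mb{Q}^{(k)}) = \sum_{ij} \mb{Q}_{ij}\log(\mb{Q}_{ij}/\mb{Q}^{(k)}_{ij}) - \mb{Q}_{ij} + \mb{Q}^{(k)}_{ij}$ is the generalized KL divergence. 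The constraints are only the row sums $\mb{Q}\mb{1}_K=\bm{u}_n$ and nonnegativity. I will assume $\mb{Q}^{(k)}>0$ entrywise, which is maintained inductively from a positive initialization. The entropy then has infinite slope at the boundary, so any minimizer lies in the interior and the nonnegativity multipliers vanish. Existence and uniqueness follow from strict convexity and compactness of the feasible set.

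Next I introduce a multiplier $\bm{\lambda}\in\mathbb{R}^n$ for the row constraint and form the Lagrangian $\langle \mb{Q},\mb{G}\rangle_F + \gamma_k^{-1}\mathrm{KL}(\mb{Q}\|\mb{Q}^{(k)}) + \langle \bm{\lambda}, \mb{Q}\mb{1}_K - \bm{u}_n\rangle$. Setting the entrywise derivative to zero gives $\mb{G}_{ij} + \gamma_k^{-1}\log(\mb{Q}_{ij}/\mb{Q}^{(k)}_{ij}) + \lambda_i = 0$. Hence $\mb{Q}_{ij} = \mb{Q}^{(k)}_{ij}\exp(-\gamma_k\mb{G}_{ij})\exp(-\gamma_k\lambda_i)$, which in matrix form reads $\mb{Q} = \diag(\bm{v})\,(\mb{Q}^{(k)}\odot\exp(-\gamma_k\mb{G}))$ with $v_i = e^{-\gamma_k\lambda_i}>0$.

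Finally, the row constraint determines $\bm{v}$. Writing $\mb{X} = \mb{Q}^{(k)}\odot\exp(-\gamma_k\mb{G})$, the constraint becomes $\diag(\bm{v})\mb{X}\mb{1}_K=\bm{u}_n$, so $\bm{v} = \bm{u}_n/\mb{X}\mb{1}_K$. This gives exactly $\mb{Q}^{(k+1)} = P_{\bm{u}_n,\cdot}(\mb{X})$. Because there is only one marginal constraint, the "Sinkhorn projection" is a single row rescaling and no iteration is needed. Convexity makes the KKT point the global minimizer, and the result is again strictly positive, which closes the induction.

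The only mildly delicate point is justifying interiority and the vanishing of the $\bm{\Omega}$ multipliers. I would handle it by the standard argument that the derivative of $q\log q$ tends to $-\infty$ as $q\downarrow 0$. If some entries of $\mb{Q}^{(k)}$ are zero, I would restrict to the support, where the same formula holds with zeros preserved. I would also remark that the lack of convexity of $\mathcal{F}$ does not matter, since the subproblem is linearized in $\mathcal{F}$.
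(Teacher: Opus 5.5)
Your proposal is correct and follows essentially the same route as the paper. Both arguments write the stationarity condition of the KL-proximal subproblem with a multiplier for $\mb{Q}\mb{1}_K=\bm{u}_n$, obtain $\mb{Q}^{(k)}\odot\exp(-\gamma_k\nabla_{\mb{Q}}\mathcal{F}\mid_{\mb{Q}^{(k)}})$ up to a row scaling fixed by that constraint, and propagate strict positivity inductively. The differences are cosmetic: you establish interiority up front so the nonnegativity multipliers never appear, whereas the paper carries $\bm{\Omega}$ through and removes it by complementary slackness, and your existence, uniqueness, and global-optimality remarks via strict convexity slightly tighten the paper's argument.
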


\begin{proof}
From the first-order stationary condition, we have that
\begin{align*}
    &\,\nabla_{\mb{Q}} \mathcal{F} \mid_{\mb{Q} \,=\,\mb{Q}^{(k)}}   - \bm{\lambda}\mb{1}_K^{\top} - \bm{\Omega} + \frac{1}{\gamma_{k}} \log \left[\frac{\mb{Q}}{\mb{Q}^{(k)}} \right]
    = \mb{0}_{n \times K} \\
    & \mb{Q} = \mb{Q}^{(k)} \odot \exp\left( \gamma_{k} \left( -\nabla_{\mb{Q}} \mathcal{F} \mid_{\mb{Q} \,=\,\mb{Q}^{(k)}}   + \bm{\lambda}\mb{1}_{K}^{\top} + \bm{\Omega} \right) \right)
\end{align*}
For notational simplicity, denote $\bm{K}^{\Omega}= e^{-\gamma_{k} \nabla_{\mb{Q}} \mathcal{F} \mid_{\mb{Q}^{(k)}}} \odot e^{\gamma_{k} \bm{\Omega}} $.
From the constraint, we deduce
\begin{align*}
    & \mb{Q}\mb{1}_{K} =\diag(e^{\gamma_{k}\bm{\lambda}}) \,\,\,\left[ \mb{Q}^{(k)} \odot \bm{K}^{\Omega} \,\, \right]\mb{1}_{K} = \bm{u}_{n} \\
    & \mb{Q}^{(k)} \odot \bm{K}^{\Omega}\mb{1}_K = \diag(e^{-\gamma_{k}\bm{\lambda}})\bm{u}_{n} = \frac{1}{n}\,e^{-\gamma_{k}\bm{\lambda}} 
\end{align*}
So that we find the exponential of the dual variable to be
\begin{align*}
&e^{\gamma_{k}\bm{\lambda}}  = (1/n) \mb{1}_{n} \oslash \left( \mb{Q}^{(k)} \odot \bm{K}^{\Omega}\,\mb{1}_K \right) = \bm{u}_{n} \oslash \left( \mb{Q}^{(k)} \odot \bm{K}^{\Omega}\,\mb{1}_{K} \right)
\end{align*}
Thus, in the identification of $\mb{Q} \in \Pi_{\bm{u}_{n}, \cdot}$ we may evaluate the value of $e^{\gamma_{k}\bm{\lambda}}$ for dual variable $\bm{\lambda} \in \mathbb{R}^{n}$ and find the following update
\begin{align*}
    &\mb{Q} = \mb{Q}^{(k)} \odot \exp\left( \gamma_{k} \left( -\nabla_{\mb{Q}} \mathcal{F} \mid_{\mb{Q} \,=\,\mb{Q}^{(k)}}   + \bm{\lambda}\mb{1}_K^{\top} + \bm{\Omega} \right) \right) \\
    &=\diag(e^{\gamma_{k}\bm{\lambda}}) \mb{Q}^{(k)} \odot \exp\left( \gamma_{k} \left( -\nabla_{\mb{Q}} \mathcal{F} \mid_{\mb{Q} \,=\,\mb{Q}^{(k)}} + \,\,\bm{\Omega} \right) \right) \\
    &= \diag\left( \bm{u}_{n} \,\,/ \left( \mb{Q}^{(k)} \odot \bm{K}^{\Omega} \,\mb{1}_K \right) \right) \left( \mb{Q}^{(k)} \odot \bm{K}^{\Omega} \right)
\end{align*}
Supposing $Q_{ij}^{(0)} > 0$ and supposing $\nabla_{Q}\mathcal{F}$ is bounded, it directly follows that the entries of $\mb{Q}$ are positive and thus from the complementary slackness condition, $\bm{\Omega}_{ij} Q_{ij} = 0$, we find that the dual multiplier $\bm{\Omega}_{ij} = 0$. It follows that $Q_{ij}^{(k)}>0$ for all iterations $k$, and likewise $\bm{\Omega}^{(k)}_{ij} = 0$, so that $\bm{K}^{\Omega} = e^{-\gamma_{k} \nabla_{\mb{Q}} \mathcal{F} \mid_{\mb{Q}^{(k)}}} \odot e^{\gamma_{k} \bm{0}} = e^{-\gamma_{k} \nabla_{\mb{Q}} \mathcal{F} \mid_{\mb{Q}^{(k)}}}$. Thus, for
\begin{align}
    \bm{K}^{(k)} = \mb{Q}^{(k)} \odot e^{-\gamma_{k} \nabla_{\mb{Q}} \mathcal{F} \mid_{\mb{Q}^{(k)}}}
\end{align}
we conclude that the update is given by
\begin{equation}\label{eq:projection_GKMS}
   \mb{Q}^{(k+1)} = \diag\left( \bm{u}_{n} \,\,/ \bm{K}^{(k)}\mb{1}_K \right) \bm{K}^{(k)}
\end{equation}
\end{proof}
Since the mirror-descent \eqref{eq:MD_GKM} is a case of the classical exponentiated gradient and Bregman-projection on the KL-proximal function \cite{peyre2019computational}, one can also derive this by the stationary condition for the kernel
    \begin{align*}
    &0= \nabla_{\mb{Q}} \mathcal{F} \mid_{\mb{Q} \,=\,\mb{Q}^{(k)}}+ \frac{1}{\gamma_{k}} \log\bm{K} \oslash \mb{Q}^{(k)} \\
    &- \frac{1}{\gamma_{k}}\log\bm{K} \oslash \mb{Q}^{(k)} =  \nabla_{\mb{Q}} \mathcal{F} \mid_{\mb{Q} \,=\,\mb{Q}^{(k)}} 
    \end{align*}
    With an update for the positive kernel $\bm{K}$ given by $\bm{K} := \mb{Q}^{(k)} \, \odot \,\exp\left(-
\gamma_{k} \nabla_{\mb{Q}}\mathcal{F}(\mb{Q}^{(k)})
\right)$, The associated Bregman projection with respect to the KL-divergence \cite{peyre2019computational} is therefore $\min_{\mb{Q} \in \Pi_{\bm{u}_{n}, \cdot}} \mathrm{KL}(\mb{Q} \mid \mid \bm{K})$, which coincides with the projection in \eqref{eq:projection_GKMS}. 

To address convergence, let us recall the definition of relative smoothness.
\begin{definition}[Relative smoothness]\label{def:beta_smooth}
    Let $L > 0$ and let $f \in \mathcal{C}^{1}(\mathbb{R}^n, \mathbb{R})$. Additionally, for reference-function $\omega$ let $D_{\omega}$ be its associated distance generating (prox) function. $f$ is then \emph{$L$-smooth relative to $\omega$} if:
    $$
    f(y) \leq f(x) + \langle \nabla f(x), x - y \rangle + L D_{\omega}(y,x)
    $$
\end{definition}

In general, if an objective $f$ is $L$-relatively smooth to $\psi$, the descent lemma applied to mirror-descent guarantees that for $\gamma_{k} \leq 1/L$ one decreases the loss. In particular, one has
\begin{align}
    f(x^{k+1}) \leq f(x^{k}) + \langle \nabla f(x^{k}) , \, x^{k+1} - x^{k} \rangle + L D_{\psi}(x^{k+1},x^{k})
\end{align}
Where, since we have
\begin{align*}
    &x^{k+1} := \argmin_{x} \langle \nabla f(x^{k}), x \rangle + \frac{1}{\gamma_{k}}D_{\psi}(x,x^{k}) \\
    &\langle \nabla f(x^{k}), x^{k+1} \rangle + \frac{1}{\gamma_{k}}D_{\psi}(x^{k+1},x^{k}) \leq \langle \nabla f(x^{k}), x^{k} \rangle + \frac{1}{\gamma_{k}}D_{\psi}(x^{k},x^{k}) = \langle \nabla f(x^{k}), x^{k} \rangle
\end{align*}
The property of $L$-smoothness and taking $\gamma_{k} \leq 1/L$ implies descent, as
\begin{align*}
    &f(x^{k+1}) \leq f(x^{k}) + \langle \nabla f(x^{k}) , \, x^{k+1} \rangle + \frac{1}{\gamma_{k}} D_{\psi}(x^{k+1},x^{k}) - \langle  \nabla f(x^{k}) , \, x^{k} \rangle + \left(L - \frac{1}{\gamma_{k}} \right) D_{\psi}(x^{k+1},x^{k}) \\
    &\leq f(x^{k}) +
    \langle \nabla f(x^{k}), x^{k} \rangle - \langle  \nabla f(x^{k}) , \, x^{k} \rangle + \left(L - \frac{1}{\gamma_{k}} \right) D_{\psi}(x^{k+1},x^{k}) \\
    &= f(x^{k}) + \left(L - \frac{1}{\gamma_{k}} \right) D_{\psi}(x^{k+1},x^{k})
\end{align*}
Where $D_{\psi}(x^{k+1},x^{k})\geq0$ and $\gamma_{k} \leq 1/L$ implies a decrease. Thus, we next aim to show that the proposed mirror-descent, under light regularity conditions, is $L$-smooth and thus guarantees local descent for appropriate choice of step-size $\gamma_{k}$.

\begin{proposition}\label{prop:descent_GKMS}
Suppose that for the neg-entropy mirror-map, $\psi(\mb{Q}) = \sum_{ij} \mb{Q}_{ij} \log \mb{Q}_{ij}$, one considers the loss $\mathcal{F}:=\left\langle \mb{S}, \mb{Q} \mb{D}^{-1} \mb{Q}^{\top}\right\rangle_{F}$ for $\mb{Q}$ in the set $\Pi(\bm{u}_{n}, \cdot)$ and $\mb{D}^{-1}=\diag(1/\mb{Q}^{\top} \mb{1}_{n})$. Moreover, suppose the following floor conditions hold: $\mb{Q}_{ij} \geq \epsilon > 0, \,\,
        \left( \mb{Q}^{\top}\mb{1}_{n} \right)_{k} \geq \delta >0 $. Then, $\mathcal{F}$ is $L$-smooth relative to $\psi$:
\begin{align}
    &\lVert \nabla_\mb{Q}^{(k+1)}-\nabla_\mb{Q}^{(k)} \rVert_{F}
    \leq \left( L_{A} + \frac{L_{A} \sqrt{n}}{\delta} \right)\,\, \lVert \nabla \psi^{(k+1)}-\nabla \psi^{(k)} \rVert_{F}, \\
    & L_{A}:= \left( \frac{\lVert\mb{S} \rVert_{F}}{\delta} + \frac{\sqrt{n}\,\,\lVert\mb{S} \rVert_{F}}{\delta^{2}}\right)
\end{align}
\end{proposition}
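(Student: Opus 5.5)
We bound the difference of gradients directly, treating $\nabla_{\mb{Q}}\mathcal{F}$ as a composition of the two maps that appear in the derivation above: $\nabla_{\mb{Q}}\mathcal{F} = A(\mb{Q}) - \tfrac12 \mb{1}_n\, b(\mb{Q})^\top$ (up to the factor convention for the symmetrization). Here
\[
A(\mb{Q}) = \mb{S}\mb{Q}\mb{D}^{-1}, \qquad b(\mb{Q}) = \diag^{-1}\bigl(\mb{D}^{-1}\mb{Q}^\top\mb{S}\mb{Q}\mb{D}^{-1}\bigr).
\]
The plan is to show that each term is Lipschitz in $\mb{Q}$ in Frobenius norm, with constant $L_A$ for the first term and $L_A\sqrt n/\delta$ for the second. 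We then convert a Lipschitz bound in $\mb{Q}$ into one in $\nabla\psi = \log \mb{Q} + \mb{1}$ using the floor $\mb{Q}_{ij}\ge \epsilon$. Since $\mb{Q}_{ij}\le 1/n$ on $\Pi(\bm{u}_n,\cdot)$, the mean value theorem gives $|q - q'| \le \max(q,q')\,|\log q - \log q'| \le |\log q-\log q'|$. Hence $\lVert \mb{Q}^{(k+1)}-\mb{Q}^{(k)}\rVert_F \le \lVert \nabla\psi^{(k+1)}-\nabla\psi^{(k)}\rVert_F$, and this step costs no constant. (The $\epsilon$ floor only guarantees the logarithms are finite; the upper bound $1/n$ is what gives the constant $1$.)

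\textbf{Step 1 (first term).} Write
\[
A(\mb{Q}) - A(\mb{Q}') = \mb{S}(\mb{Q}-\mb{Q}')\mb{D}^{-1} + \mb{S}\mb{Q}'(\mb{D}^{-1}-\mb{D}'^{-1}).
\]
The first piece is bounded by $\lVert\mb{S}\rVert_F\lVert \mb{Q}-\mb{Q}'\rVert_F/\delta$, using $\lVert\mb{D}^{-1}\rVert_2\le 1/\delta$. For the second, note that $|1/d_k - 1/d'_k| \le |d_k - d'_k|/\delta^2$ and $|d_k-d'_k| = |\mb{1}_n^\top(\mb{q}_k-\mb{q}'_k)| \le \sqrt n\,\lVert \mb{q}_k - \mb{q}'_k\rVert$. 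This gives $\lVert \mb{D}^{-1}-\mb{D}'^{-1}\rVert_F \le \sqrt n\lVert\mb{Q}-\mb{Q}'\rVert_F/\delta^2$. Combined with $\lVert\mb{Q}'\rVert_2\le \lVert\mb{Q}'\rVert_F\le 1$, this yields exactly $L_A = \lVert\mb{S}\rVert_F/\delta + \sqrt n\lVert\mb{S}\rVert_F/\delta^2$.

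\textbf{Step 2 (second term).} Observe that $\mb{D}^{-1}\mb{Q}^\top\mb{S}\mb{Q}\mb{D}^{-1} = A(\mb{Q})^\top\,(\mb{Q}\mb{D}^{-1})$. We take the difference by the product rule: one factor differs by at most $L_A\lVert\Delta\mb{Q}\rVert_F$ (Step 1), and the other, $\mb{Q}\mb{D}^{-1}$, is bounded and Lipschitz via the same $1/\delta$, $\sqrt n/\delta^2$ estimates. The $\diag^{-1}$ extraction is a contraction, and the outer $\mb{1}_n$ contributes a factor $\sqrt n$. Bookkeeping these pieces should give a constant of the form $L_A\sqrt n/\delta$. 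Any lower-order terms (e.g.\ $\lVert\mb{S}\rVert_F\sqrt n/\delta^3$) get absorbed by using $\lVert \mb{Q}\mb{D}^{-1}\rVert_F \le \sqrt K$-type bounds, or, if necessary, by accepting that the stated constant is meant up to such absorption. We then sum Steps 1 and 2 and apply the $\log$-Lipschitz conversion to finish. Finally, we note that gradient Lipschitzness relative to $\nabla\psi$ implies the relative-smoothness inequality of Definition~\ref{def:beta_smooth}, since $\psi$ is $1$-strongly convex in $\ell_1$ on the simplex-like set (Pinsker).

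\textbf{Main obstacle.} The main obstacle is Step 2. The quadratic-in-$\mb{Q}$ structure together with two factors of $\mb{D}^{-1}$ makes it delicate to land precisely on the constant $L_A\sqrt n/\delta$ rather than something with higher powers of $1/\delta$. The first thing to try is to factor the term as $A(\mb{Q})^\top\mb{Q}\mb{D}^{-1}$. We then use that the columns of $\mb{Q}\mb{D}^{-1}$ are probability vectors, so $\lVert\mb{Q}\mb{D}^{-1}\rVert_2\le \lVert\mb{Q}\mb{D}^{-1}\rVert_F\le\sqrt K$ (or bounded by $1$ columnwise in $\ell_1$). This keeps the extra factor down to a single $1/\delta$.
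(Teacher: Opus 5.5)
Your argument follows the paper's proof. You use the same split of the gradient into $\mb{S}\mb{Q}\mb{D}^{-1}$ and $-\tfrac12\mb{1}_n b(\mb{Q})^\top$, and the same product-rule estimate giving $L_A$ for the first piece. You factor the second piece through $A(\mb{Q})$ and $\mb{Q}\mb{D}^{-1}$ with a $\sqrt n$ from the outer $\mb{1}_n$, and you use the same mean-value conversion to $\nabla\psi$ via the upper bound on the entries of $\mb{Q}$.

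The one unfinished spot is Step 2, where you write ``should give'' and keep ``absorption'' in reserve. Nothing needs absorbing, because the bookkeeping closes exactly. With $A=A(\mb{Q})$, $A'=A(\mb{Q}')$ and $\Delta=\lVert\mb{Q}-\mb{Q}'\rVert_F$,
\[
A^\top\mb{Q}\mb{D}^{-1}-A'^\top\mb{Q}'\mb{D}'^{-1}=(A-A')^\top\mb{Q}\mb{D}^{-1}+A'^\top\bigl(\mb{Q}\mb{D}^{-1}-\mb{Q}'\mb{D}'^{-1}\bigr).
\]
The first summand has Frobenius norm at most $L_A\Delta\,\lVert\mb{Q}\rVert_2\lVert\mb{D}^{-1}\rVert_2\le L_A\Delta/\delta$. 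For the second, $\lVert A'\rVert_2\le\lVert\mb{S}\rVert_F/\delta$. Your Step 1 computation with $\mb{S}$ deleted gives $\lVert\mb{Q}\mb{D}^{-1}-\mb{Q}'\mb{D}'^{-1}\rVert_F\le(1/\delta+\sqrt n/\delta^2)\Delta=(L_A/\lVert\mb{S}\rVert_F)\Delta$, so the second summand is also at most $L_A\Delta/\delta$. Diagonal extraction, the factor $\sqrt n$ and the prefactor $\tfrac12$ then give exactly $\tfrac12\sqrt n\cdot 2L_A/\delta=L_A\sqrt n/\delta$. The $\sqrt n\lVert\mb{S}\rVert_F/\delta^3$ term you worried about is not a stray remainder: it is the second summand of $L_A/\delta$. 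This is precisely how the paper finishes, and no $\sqrt K$ bounds are needed.

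Three side remarks:
\begin{itemize}
\item Your operator-norm bounds on $\mb{D}^{-1}$ and $\mb{Q}$ are what actually justify the $1/\delta$ factors. The paper writes $\lVert\mb{D}^{-1}\rVert_F$ at those places, which taken literally costs an extra $\sqrt K$.
\item Your ``factor convention'' caveat is warranted. For the literal loss $\langle\mb{S},\mb{Q}\mb{D}^{-1}\mb{Q}^\top\rangle$ with symmetric $\mb{S}$, the gradient is $2\mb{S}\mb{Q}\mb{D}^{-1}-\mb{1}_n b(\mb{Q})^\top$, twice the expression you and the paper differentiate. Strictly, the constant therefore doubles, and the paper's proof carries the same factor.
\item Your Pinsker observation is a cleaner bridge to the relative-smoothness definition than the paper's appeal to prior work: total mass $1$ on $\Pi(\bm{u}_n,\cdot)$ gives $\mathrm{KL}\ge\tfrac12\lVert\cdot\rVert_F^2$.
\end{itemize}
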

\begin{proof}
    Following \cite{Scetbon2021LowRankSF} or \cite{FRLC}, by either strictly enforcing a lower-bound on the entries of $\bm{g}$ or adding an entropic regularization (e.g. a KL-divergence to a fixed marginal, such as $\bm{u}_{r}$, with a sufficiently high penalty $\tau$), one may assume floors of the form
    \begin{align*}
        &\mb{Q}_{ij} \geq \epsilon > 0, \quad
        \left( \mb{Q}^{\top}\mb{1}_{n} \right)_{k} \geq \delta >0 \\
        &\lVert 
        \diag(\mb{Q}^{\top}\mb{1}_{n})^{-1}
        \rVert_{\mathrm{op}} \leq \frac{1}{\delta}
    \end{align*}
    Additionally, note that $\mb{Q}_{ik} \in [0,1/n]$ and $\lVert \mb{Q} \rVert_{F}^{2} = \sum_{ik} \mb{Q}_{ik}^{2} < \sum_{ik} \mb{Q}_{ik} =1$. Now, starting from \begin{align*}\nabla_{\mb{Q}}:=\nabla_{\mb{Q}}^{(A)} + \nabla_{\mb{Q}}^{(B)}= \mb{S} \mb{Q} \mb{D}^{-1} - \frac{1}{2}\mb{1}_{n} \diag^{-1}( \mb{Q}^{\top} \mb{D}^{-1} \mb{S} 
    \mb{Q} \mb{D}^{-1} )
    \end{align*}
    We see
    \begin{align*}
        &\lVert \nabla_{\mb{Q}^{(k+1)}} - \nabla_{\mb{Q}^{(k)}} \rVert_{F} \leq \underbrace{\lVert \mb{S} \mb{Q}^{(k+1)} \mb{D}_{k+1}^{-1} - \mb{S} \mb{Q}^{(k)} \mb{D}_{k}^{-1} \rVert_{F}}_{\text{Term 1}} \\
        &+ \frac{1}{2}\underbrace{\left\lVert \mb{1}_{n} \left( 
        \diag^{-1}( \mb{Q}^{(k+1),\top} \mb{D}_{k+1}^{-1} \mb{S} 
    \mb{Q}^{(k+1)} \mb{D}_{k+1}^{-1} ) -
        \diag^{-1}( \mb{Q}^{(k),\top} \mb{D}_{k}^{-1} \mb{S} 
    \mb{Q}^{(k)} \mb{D}_{k}^{-1} )
        \right) \right\rVert_{F}}_{\text{Term 2}}
    \end{align*}
    From the first term, one finds
    \begin{align*}
        &\lVert \mb{S} \mb{Q}^{(k+1)} \mb{D}_{k+1}^{-1} - \mb{S} \mb{Q}^{(k)} \mb{D}_{k}^{-1} \rVert_{F} \leq \lVert \mb{S} \rVert_{F} \lVert \mb{Q}^{(k+1)} \mb{D}_{k+1}^{-1} - \mb{Q}^{(k)} \mb{D}_{k}^{-1} \rVert_{F} \\
        & \leq \lVert \mb{S} \rVert_{F} \left( \lVert \mb{Q}^{(k+1)} \rVert_{F} \lVert \mb{D}_{k+1}^{-1} - \mb{D}_{k}^{-1} \rVert_{F} +  \lVert \mb{D}_{k}^{-1} \rVert_{F} \lVert \mb{Q}^{(k+1)} - \mb{Q}^{(k)} \rVert_{F} \right)
    \end{align*}
    As one has $\lVert \mb{D}_{k+1}^{-1} - \mb{D}_{k}^{-1} \rVert_{F} = \lVert \mb{D}_{k}^{-1}\mb{D}_{k+1}^{-1} (
    \mb{D}_{k+1} - \mb{D}_{k}) \rVert_{F} \leq \delta^{-2} \lVert \mb{D}_{k+1} - \mb{D}_{k} \rVert_{F}$ and since
    \begin{align}
        \lVert (\mb{Q}^{(k+1)}-\mb{Q}^{(k)})^{\top}\mb{1}_{n} \rVert_{2} \leq \sqrt{n} \lVert \mb{Q}^{(k+1)}-\mb{Q}^{(k)} \rVert_{F} 
    \end{align}
    One collects a bound on the first term of the form
    \begin{align}\label{eq:term1_bound}
        \leq  \left( \frac{\lVert\mb{S} \rVert_{F}}{\delta} + \frac{\sqrt{n}\,\,\lVert\mb{S} \rVert_{F}}{\delta^{2}}\right) \lVert \mb{Q}^{(k+1)}-\mb{Q}^{(k)} \rVert_{F} := L_{A} \lVert \mb{Q}^{(k+1)}-\mb{Q}^{(k)} \rVert_{F}
    \end{align}
For the second, observe that
\begin{align*}
\lVert \bm{1}_{n} \diag^{-1}\mathbf{X} \rVert_{F}^{2} &= \tr(\bm{1}_{n} \mathrm{diag}^{-1}\mathbf{X})^\top(\bm{1}_{n} \mathrm{diag}^{-1}\mathbf{X}) \\
&= n \lVert \mathrm{diag}^{-1}\mathbf{X} \rVert_{2}^{2} \leq n \lVert \mathbf{X} \rVert_{F}^{2}
\end{align*}
So that $\lVert \bm{1}_{n} \diag^{-1}\mathbf{X} \rVert_{F} \leq  \sqrt{n} \lVert \mathbf{X} \rVert_{F}$. Thus, we find the second term is bounded by
\begin{align*}
    &\leq \frac{\sqrt{n}}{2}\left\lVert  \mb{Q}^{(k+1),\top} \mb{D}_{k+1}^{-1} \mb{S} 
    \mb{Q}^{(k+1)} \mb{D}_{k+1}^{-1}  -
        \mb{Q}^{(k),\top} \mb{D}_{k}^{-1} \mb{S} 
    \mb{Q}^{(k)} \mb{D}_{k}^{-1} 
    \right\rVert_{F} \\
    &=\frac{\sqrt{n}}{2}\left\lVert  \mb{Q}^{(k+1),\top} \mb{D}_{k+1}^{-1} \nabla_{\mb{Q}}^{(k+1,A)}  -
        \mb{Q}^{(k),\top} \mb{D}_{k}^{-1} \nabla_{\mb{Q}}^{(k,A)}
    \right\rVert_{F} \\
    & \leq \frac{\sqrt{n}}{2} \lVert \mb{Q}^{(k+1)} \rVert_{F} \lVert \mb{D}_{k+1}^{-1} \rVert_{F} \lVert \nabla_{\mb{Q}}^{(k+1,A)} - \nabla_{\mb{Q}}^{(k,A)} \rVert_{F} + \frac{\sqrt{n}}{2} \lVert \nabla_{\mb{Q}}^{(k,A)} \rVert_{F} \lVert \mb{Q}^{(k+1),\top} \mb{D}_{k+1}^{-1} - \mb{Q}^{(k),\top} \mb{D}_{k}^{-1} \rVert_{F}
\end{align*}
Now, since we have already quantified the difference $\lVert \nabla_{\mb{Q}}^{(k+1,A)} - \nabla_{\mb{Q}}^{(k,A)} \rVert_{F}$ with smoothness constant $L_{A}$ in \eqref{eq:term1_bound}, and also quantified $\lVert \mb{Q}^{(k+1),\top} \mb{D}_{k+1}^{-1} - \mb{Q}^{(k),\top} \mb{D}_{k}^{-1} \rVert_{F}$, we simply invoke both bounds from above to conclude the bound on the second term as
\begin{align*}
    &\leq \frac{L_{A}\sqrt{n} }{2 \delta} \lVert \mb{Q}^{(k+1)}-\mb{Q}^{(k)} \rVert_{F}  + \frac{\sqrt{n}}{2} \frac{1}{\delta} \left( \frac{\lVert \mb{S} \rVert_{F}}{\delta} + \frac{\sqrt{n} \lVert \mb{S} \rVert_{F}}{\delta^{2}} \right) \lVert \mb{Q}^{(k+1)}-\mb{Q}^{(k)} \rVert_{F} \\
    &= \left( \frac{L_{A}\sqrt{n}}{2\delta} +\frac{L_{A} \sqrt{n}}{2\delta} \right) \lVert \mb{Q}^{(k+1)}-\mb{Q}^{(k)} \rVert_{F} = \frac{L_{A} \sqrt{n}}{\delta}\lVert \mb{Q}^{(k+1)}-\mb{Q}^{(k)} \rVert_{F}
\end{align*}
Thus, we find the objective to be $L$-smooth with constant given in terms of $L_{A}$ \eqref{eq:term1_bound}
\begin{align*}
    \leq L\,\, \lVert \mb{Q}^{(k+1)}-\mb{Q}^{(k)} \rVert_{F}, \quad L = \left( L_{A} + \frac{L_{A} \sqrt{n}}{\delta} \right)
\end{align*}
Lastly, for the entropy mirror-map $\psi$ observe that for $\nabla \psi (x) = \log x$ one has for $\xi \in [\epsilon, 1]$ by the mean value theorem that $\log'(\xi) |u-v|=\xi^{-1}|u-v| = |\log u - \log v|$, so that following \cite{Scetbon2021LowRankSF} one concludes relative smoothness in $\psi$ via the upper bound
\begin{align*}
\leq L\,\, \lVert \nabla \psi^{(k+1)}-\nabla \psi^{(k)} \rVert_{F}&\qedhere
\end{align*}
\end{proof}

\begin{corollary}[Guaranteed Descent]
    By Proposition~\ref{prop:descent_GKMS} one can ensure that $\mathcal{F}$ is smooth relative to the entropy mirror-map $\psi$ with constant $L$ given in Proposition~\ref{prop:descent_GKMS}. For $\gamma_{k} \leq 1/L$, this guarantees descent on the objective and ensures the initialization guarantees of Theorem~\ref{thrm:K_means_init_guarantee} are upper bounds on the final solution cost.
\end{corollary}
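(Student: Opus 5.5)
The statement to prove is the Guaranteed Descent corollary. The plan is to combine Proposition~\ref{prop:descent_GKMS} with the standard mirror-descent descent lemma recalled just before it, and then chain the resulting monotonicity with Theorem~\ref{thrm:K_means_init_guarantee}.

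\textbf{Step 1 (relative smoothness).} First I would check that Proposition~\ref{prop:descent_GKMS} really gives relative smoothness in the sense of Definition~\ref{def:beta_smooth}, and not only a Lipschitz bound on the gradient. Take two iterates $\mb{Q},\mb{Q}'$ in the floored region, meaning $\mb{Q}_{ij}\geq\epsilon$ and column sums $\geq\delta$. The proposition gives $\lVert\nabla\mathcal{F}(\mb{Q}')-\nabla\mathcal{F}(\mb{Q})\rVert_F\leq L\lVert\mb{Q}'-\mb{Q}\rVert_F$.

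By the fundamental theorem of calculus along the segment joining them, this yields
\[
\mathcal{F}(\mb{Q}')\leq\mathcal{F}(\mb{Q})+\langle\nabla\mathcal{F}(\mb{Q}),\mb{Q}'-\mb{Q}\rangle+\tfrac{L}{2}\lVert\mb{Q}'-\mb{Q}\rVert_F^2 .
\]
The floor region is convex, so the segment stays inside it and the Lipschitz bound applies all along.

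Next, the neg-entropy restricted to $\Pi(\bm{u}_n,\cdot)$ is $1$-strongly convex with respect to $\lVert\cdot\rVert_1$, because each row has mass $1/n$ and the total mass is $1$; this is Pinsker's inequality. Since $\lVert\cdot\rVert_F\leq\lVert\cdot\rVert_1$, we get
\[
\tfrac12\lVert\mb{Q}'-\mb{Q}\rVert_F^2\leq \mathrm{KL}(\mb{Q}'\|\mb{Q}).
\]
Combining the two displays gives relative $L$-smoothness with respect to $\psi$.

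\textbf{Step 2 (monotone descent).} Apply the descent computation displayed before Proposition~\ref{prop:descent_GKMS} with $\gamma_k\leq1/L$. This gives
\[
\mathcal{F}(\mb{Q}^{(k+1)})\leq\mathcal{F}(\mb{Q}^{(k)})+(L-1/\gamma_k)\,\mathrm{KL}(\mb{Q}^{(k+1)}\|\mb{Q}^{(k)})\leq\mathcal{F}(\mb{Q}^{(k)}),
\]
using the variational optimality of the update~\eqref{eq:MD_GKM} against the comparator $\mb{Q}=\mb{Q}^{(k)}$. Induction then gives $\mathcal{F}(\mb{Q}^{(k)})\leq\mathcal{F}(\mb{Q}^{(0)})$ for all $k$.

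\textbf{Step 3 (chaining with Theorem~\ref{thrm:K_means_init_guarantee}).} Initialize at the output $\mb{Q}^{(0)}$ of Algorithm~\ref{alg:gen_kmeans_initialization}. Because $\mb{P}_{\sigma^\star}$ is a permutation, the low-rank cost of the pair $(\mb{Q},\mb{P}_{\sigma^\star}^\top\mb{Q})$ equals $\mathcal{F}(\mb{Q})$ on the registered cost $\tilde{\mb{C}}$. So the bound of Theorem~\ref{thrm:K_means_init_guarantee} on $\mathcal{F}(\mb{Q}^{(0)})$ transfers to every iterate, and hence to the final solution.

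\textbf{Main obstacle.} The hard step is the mismatch between the floor assumptions and the setting of the theorem. Algorithm~\ref{alg:gen_kmeans_initialization} outputs a hard assignment, which has zero entries and so violates $\mb{Q}_{ij}\geq\epsilon$. My first attempt would be to mix it slightly, $\mb{Q}^{(0)}_\eta=(1-\eta)\mb{Q}^{(0)}+\eta\,\bm{u}_n\bm{u}_K^\top$, and use continuity of $\mathcal{F}$ where column sums are bounded below to absorb an $O(\eta)$ additive loss.

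Two further issues need care here:
\begin{itemize}
\item \emph{Staying in the floored region.} I would keep the iterates in the region either through the entropic or marginal-floor regularization cited in the proof of Proposition~\ref{prop:descent_GKMS}, or by noting that finitely many exponentiated steps with bounded gradients keep the entries positive.
\item \emph{Rounding.} The final fractional iterate must be turned back into a hard plan. I would handle this with a rounding argument, e.g.\ a conditional-expectation rounding row by row that does not increase $\mathcal{F}$. I expect this rounding step to be the least routine part of the argument.
\end{itemize}
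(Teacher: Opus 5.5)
Your Steps~1--3 are correct and follow the paper's skeleton: relative smoothness from Proposition~\ref{prop:descent_GKMS}, then the descent computation with $\gamma_k\le 1/L$, then monotonicity from the output of Algorithm~\ref{alg:gen_kmeans_initialization}, whose low-rank cost equals $\mathcal{F}(\mb{Q}^{(0)})$ on $\tilde{\mb{C}}$. The genuine difference is Step~1. The paper takes the Frobenius--Lipschitz estimate from the proof of Proposition~\ref{prop:descent_GKMS} and converts it into a bound against $\lVert\nabla\psi(\mb{Q}')-\nabla\psi(\mb{Q})\rVert_F$ with the mean-value bound $|u-v|\le|\log u-\log v|$. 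That is a one-liner in the style of the \LOT analysis. It then cites the descent lemma, which actually uses the Bregman-form inequality of Definition~\ref{def:beta_smooth} (printed there with the sign of the linear term reversed). Your fundamental-theorem-of-calculus plus Pinsker argument produces exactly that inequality, with the same constant $L$, so it supplies the link the paper leaves implicit.

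Your Step~1 also buys more than you use, and most of your ``main obstacle'' dissolves.

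\emph{The entrywise floor is not needed.} Step~1 never uses $\mb{Q}_{ij}\ge\epsilon$: the Lipschitz estimate needs only the column floor $\delta$, and Pinsker holds for any two distributions.

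\emph{The hard initialization is admissible.} The hard output of Algorithm~\ref{alg:gen_kmeans_initialization} has column sums at least $1/n$ when its clusters are nonempty. The multiplicative update preserves supports, and each row has a single nonzero entry, so the row normalization returns $\mb{Q}^{(k+1)}=\mb{Q}^{(k)}$. The corollary then holds exactly, though trivially.

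\emph{Mixing is what costs exactness.} Mixing (your $\eta$, or the $\lambda=1/2$ centering in the paper's implementation) is needed only to make \texttt{GKMS} move. It is exactly what breaks the exact bound, so your $O(\eta)$ route proves a weaker statement than the one claimed. The real standing hypothesis is the column floor $\delta$ along the iterates. As in the paper, it must be assumed or enforced by regularization; entrywise positivity after finitely many steps does not give a uniform $\delta$.

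\emph{Rounding is not part of the claim.} The pair $(\mb{Q}^{(T)},\mb{P}_{\sigma^\star}^\top\mb{Q}^{(T)})$ is already feasible for \eqref{eq:primal_low_rank_ot_2}, with cost $\mathcal{F}(\mb{Q}^{(T)})$.

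\emph{Your row-wise rounding is not monotone in general.} Let $\mb{S}=\tfrac12(\tilde{\mb{C}}+\tilde{\mb{C}}^\top)$, let $\bm{q}$ be column $k$ with row $i$ zeroed, $g_0=\mb{1}_n^\top\bm{q}$, and $c=(\bm{q}-g_0\bm{e}_i)^\top\mb{S}(\bm{q}-g_0\bm{e}_i)$. With the other rows fixed, the $k$-th column term equals $\mb{S}_{ii}y+\mathrm{const}+c/y$ in the column mass $y$. This is concave, which makes rounding safe, when $c\le 0$, and that is guaranteed when $\mb{S}$ is conditionally negative semidefinite (note $\mb{1}_n^\top(\bm{q}-g_0\bm{e}_i)=0$). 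Registered costs need not be conditionally negative semidefinite, and for $c>0$ the row optimum can be interior, so rounding can increase $\mathcal{F}$.
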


\subsubsection{Semidefinite Programming}
\label{subsec:alg_sdp}

Here we present an algorithm for generalized $K$-means via
semidefinite programming. The basic idea is that the semidefinite programming approaches
for $K$-means \citep{peng2005new, peng2007approximating, zhuang2022sketch, zhuang2023statistically} apply immediately to the generalized $K$-means
problem. First, by analyzing the argument in 
\citep{peng2005new, peng2007approximating}
for constructing an equivalent form of $K$-means, one observes
that the generalized $K$-means problem \eqref{eq:GenKMeans} is
equivalent to:
\begin{equation}
\label{eq:symmetric_kcut_reformulation}
    \min_{\mb{P}\geq0}\, \biggl\{\left\langle\mb{P},\mb{C}\right\rangle_F : \tr(\mb{P})=K,\,\, \mb{P}\mb{1}_K=\mb{1}_n, \,\, \mb{P}^2=\mb{P}, \,\, \mb{P}=\mb{P}^\top \biggr\}.
\end{equation}
Replacing the non-convex constraint $\mb{P}^2 = \mb{P}$ with its relaxation $\mb{P} \succeq 0$,
yields the semidefinite relaxation of generalized $K$-means problem \eqref{eq:GenKMeans},
\begin{equation}
\label{eq:symmetric_kcut_semidefinite}
    \min_{\mb{P}\geq0}\, 
    \biggl\{\left\langle\mb{P},\mb{C}\right\rangle_F : \tr(\mb{P})=K,\,\, \mb{P}\mathbbm{1}_K=\mathbbm{1}_n,\,\, \mb{P} \succeq 0 \biggr\}.
\end{equation}
The only difference between the reformulation of generalized $K$-means 
\eqref{eq:symmetric_kcut_reformulation} and the reformulation of $K$-means studied in \citep{peng2005new, peng2007approximating}
is the structure of the cost matrix $\mb{C}$. The advantages of the semidefinite programming approach compared
to \texttt{GKMS}
is that it provides higher quality solutions, does not
depend on the initialization parameters, and provides a lower bound
on the optimal cost. The disadvantage
is the computational cost required to solve large 
semidefinite programming problems. Mildly alleviating the computational
burden, we apply recent approaches from \cite{zhuang2023statistically} for solving the semidefinite
programming problem \eqref{eq:symmetric_kcut_semidefinite}.

\subsection{Exact Reductions of Generalized $K$-Means by Class of Cost $\mb{C}$}\label{sec:exact_red_GenKMeans}

\subsubsection{Nearly negative semidefinite costs}

When $\mb{C}$ is negative semidefinite, the generalized $K$-means problem
\eqref{eq:symmetric_kcut_semidefinite} exactly coincides with the $K$-means problem. 
In these cases, approximation algorithms for $K$-means, such as established $(1+\epsilon)$ approximations \cite{1peps} and poly-time $\log K$ approximations (e.g. \texttt{k-means++} \cite{kmpp}), directly transfer to the low-rank OT setting. However, by definition, such costs express symmetric distances between a dataset and itself and are not relevant to optimal transport between distinct measures. 

Interestingly, direct reduction of generalized $K$-means to to $K$-means holds for a more general class of asymmetric distances which may express costs between distinct datasets. In Proposition~\ref{prop:cost_for_KMeans}, we show such a strong condition: it is sufficient for the symmetrization of any cost $\mb{C}$, $\mathrm{Sym}\mb{C}$, to be conditionally negative semi-definite.

\begin{proposition}
\label{prop:cost_for_KMeans}
    Suppose we are given a cost matrix $\mb{C} \in \mathbb{R}^{n \times n}$ where the symmetrization of $\mb{C}$, $\mathrm{Sym}(\mb{C})\,:= \, \mb{C}^{\top} + \, \mb{C}$ is conditionally negative-semidefinite so that $\mathrm{Sym}\mb{C} \preceq 0$ on $\mb{1}_{n}^{\perp}$. Denote the double-centering $J = \mathbbm{1} - \frac{1}{n} \mb{1}_{n}\mb{1}_{n}^{\top}$ and p.s.d. kernel $\mb{K} := - (1/2)\, J \,\mathrm{Sym}\mb{C} \,J \succeq 0$. Then Problem~\ref{def:GenKMeans} reduces to kernel $k$-means \cite{dhillon2004kernel} on $\mb{K}$:
    \begin{align}
        &\min_{\substack{\mb{Q} \in \Pi_{\bullet}(\bm{u}_{n},\,\bm{g}),\\\,\bm{g} \in \Delta_{r}}}  \,\, \langle \mb{Q}\,\mathrm{diag}(1/\bm{g})\mb{Q}^{\top}\,, \mb{C} \rangle_{F}  \,\equiv \max_{\mb{Q}\in \{ 0 , 1 \}^{n \times r}} \tr \mb{D}^{-1/2} \mb{Q}^{\top} \mb{K} \mb{Q} \mb{D}^{-1/2}
    \end{align}
    $\mb{D} := \mathrm{diag}(\bm{g})$ denotes the diagonal matrix of cluster sizes and $\mb{Q}$ the matrix of assignments.
\end{proposition}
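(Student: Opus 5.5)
The plan is to rewrite the generalized $K$-means objective using only the symmetric part of $\mb{C}$, then double-center it, and finally read off the kernel $K$-means trace form.

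\textbf{Symmetrization.} Since $\mb{M} := \mb{Q}\,\mathrm{diag}(1/\bm{g})\mb{Q}^{\top}$ is symmetric, $\langle \mb{M}, \mb{C}\rangle_F = \langle \mb{M}, \mb{C}^\top\rangle_F = \tfrac12\langle \mb{M}, \mathrm{Sym}\,\mb{C}\rangle_F$. So the objective depends on $\mb{C}$ only through $\mathrm{Sym}\,\mb{C}$, and we may work with that symmetric matrix throughout.

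\textbf{Double centering.} Write $\mathrm{Sym}\,\mb{C} = J(\mathrm{Sym}\,\mb{C})J + \bm{v}\mb{1}_n^\top + \mb{1}_n\bm{w}^\top$ for suitable vectors $\bm{v},\bm{w}$; this is the standard decomposition obtained by expanding $J = \mathbbm{1} - \frac1n\mb{1}\mb{1}^\top$. By the definition of $\mb{K}$, the first term equals $-2\mb{K}$. The rank-one terms contribute constants independent of the partition. Indeed, for a hard assignment with $n\mb{Q}\in\{0,1\}^{n\times r}$ we have $\mb{Q}\mb{1}_r=\bm{u}_n$ and $\mb{Q}^\top\mb{1}_n=\bm{g}$, so $\mb{M}\mb{1}_n = \mb{Q}\,\mathrm{diag}(1/\bm{g})\bm{g} = \bm{u}_n$. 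Hence $\langle \mb{M}, \bm{v}\mb{1}^\top\rangle = \bm{v}^\top \bm{u}_n$, which is constant, and similarly for the $\bm{w}$ term.

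This gives
\[
\langle \mb{M},\mb{C}\rangle_F = \mathrm{const} - \langle \mb{M}, \mb{K}\rangle_F.
\]
Minimizing the left-hand side is therefore equivalent to maximizing $\langle \mb{M},\mb{K}\rangle_F$.

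\textbf{Identification with kernel $K$-means.} Rescale to binary assignments $\mb{Z}=n\mb{Q}\in\{0,1\}^{n\times r}$ and cluster sizes $\mb{D}=\mathrm{diag}(n\bm{g})$. Then $\mb{M} = \frac1n \mb{Z}\mb{D}^{-1}\mb{Z}^\top$, so
\[
\langle \mb{M},\mb{K}\rangle_F = \tfrac1n \tr \mb{D}^{-1/2}\mb{Z}^\top \mb{K}\mb{Z}\mb{D}^{-1/2}.
\]
The right-hand side is exactly the kernel $K$-means objective of Dhillon et al. The factor $1/n$ and the normalization of $\mb{D}$ do not change the argmax, and we interpret the statement's $\mb{Q}$ and $\mb{D}$ in this rescaled sense. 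The hypothesis $\mathrm{Sym}\,\mb{C}\preceq 0$ on $\mb{1}^\perp$ is used only here: since $J$ projects onto $\mb{1}^\perp$, it gives $\mb{K}\succeq 0$, so $\mb{K}$ is a genuine Gram matrix $\Phi\Phi^\top$. Consequently kernel $K$-means on $\mb{K}$ is ordinary $K$-means on the rows of $\Phi$.

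\textbf{Main obstacle.} The delicate step is verifying that every non-centered correction term is constant over all hard partitions. This relies on $\mb{M}\mb{1}_n=\bm{u}_n$ and on $\mb{M}$ being symmetric, so that row and column corrections are both handled. It also requires careful bookkeeping of the $n$ and $1/2$ scalings between $\mb{Q}$, $\bm{g}$ and the binary $\mb{Q}$, $\mb{D}$ appearing in the statement.
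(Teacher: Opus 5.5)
Your proposal is correct and follows essentially the same route as the paper. You discard the skew-symmetric part using the symmetry of $\mb{Q}\,\mathrm{diag}(1/\bm{g})\mb{Q}^{\top}$, absorb the non-centered rank-one terms as partition-independent constants via the affine invariance of Lemma~\ref{lemma:affine_invar} (which rests on exactly your identity $\mb{Q}\,\mathrm{diag}(1/\bm{g})\mb{Q}^{\top}\mb{1}_n=\bm{u}_n$), and read off the kernel $K$-means trace with $\mb{K}=-\tfrac12 J\,\mathrm{Sym}\,\mb{C}\,J\succeq 0$. Your explicit tracking of the additive constant and of the $n$-rescaling between $(\mb{Q},\bm{g})$ and the binary assignments and cluster sizes is, if anything, more careful than the paper's proof, which writes the objective as $-2\langle\cdot,\mb{K}\rangle_F$ with the constant suppressed.
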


Cost matrices induced by kernels, such as the squared Euclidean distance, are classically characterized by being conditionally negative semidefinite \cite{CND1, CND2}. For a satisfying cost $\mb{C}$, Proposition
implies that \ref{def:GenKMeans} is equivalent to $K$-means with Gram matrix $\mb{K} = -(1/2)\, J\,\mathrm{Sym}\mb{C} \, J$. This is a stronger statement than requiring $\mb{C} \preceq 0$. Observe that the Monge-conjugated matrix, $\mathrm{Sym}\mb{C}^{\dagger}$ turns an asymmetric cost into a symmetric bilinear form on $(i,j)$. Moreover, as $\mathrm{Sym}\mb{C}$ plays the role of a distance matrix in the conversion to $\mb{K}$, we offer it an appropriate name

\begin{definition}[Monge Cross-Distance Matrix]
    Let $\mb{C}^{\dagger}=\mb{C}\mb{P}_{\sigma^{\star}}^\top$ for $\mb{P}_{\sigma^{\star}}$ the optimal Monge permutation. Denote its symmetrization by $\mathrm{Sym}(\mb{C}^{\dagger}) = \mb{C}\mb{P}_{\sigma^{\star}}^\top + \mb{P}_{\sigma^{\star}}\mb{C}^\top$. In the $\mb{1}_{n}^{\perp}$ subspace, each element may be expressed as the cross-difference
    \begin{align}
        \mb{M}^{\dagger}_{ij} = \langle x_{i} - x_{j}, T(x_{i}) - T(x_{j}) \rangle
    \end{align}
    Thus, we refer to $\mb{M}^{\dagger}$ as the Cross-Distance Matrix induced by the Monge map.
\end{definition}
 
Proposition~\ref{prop:cost_for_KMeans} implies that the reduction to $K$-means holds if and only if the bilinear forms of the Monge gram matrix admit an inner product in a Hilbert space $\mathcal{H}$. In other words, if there exists a function $\psi$ so $\langle x , T(y) \rangle:=\langle \psi(x), \psi(y) \rangle$. For clustering on any symmetric cost $\mb{C}$, one has that the Monge map is the identity $T=\mb{I}$, so that $\langle x_{i} - x_{j}, T(x_{i}) - T(x_{j}) \rangle $ immediately reduces to a EDM $\lVert x_{i} - x_{j} \rVert_{2}^{2}$. Notably, this also holds for a more general class of distributions without identity Monge maps -- multivariate Gaussians in Bures-Wasserstein space $\mathrm{BW}(\mathbb{R}^{d})$ \cite{chewi2024statistical}. These automatically admit CND cost matrices following Monge-conjugation for the squared Euclidean distance $\lVert \cdot \rVert_{2}^{2}$.

\begin{remark}
    $\langle x , T(y) \rangle:=\langle \psi(x), \psi(y) \rangle$ holds universally for transport maps between any two multivariate Gaussians \cite{peyre2019computational}. Let $\rho_{1} = \mathcal{N}(\bm{\mu}_{1}, \mb{\Sigma}_{1})$ and $\rho_{2} = \mathcal{N}(\bm{\mu}_{2}, \mb{\Sigma}_{2})$. The transport map $T$ such that $T_{\sharp} \rho_{1} = \rho_{2}$ is given by the affine transformation $T(x) = \mb{A}x + \bm{b}$ with $\mb{A} = \mb{\Sigma}_{1}^{-1/2} \left(\mb{\Sigma}_{1}^{1/2}
    \mb{\Sigma}_{2}
    \mb{\Sigma}_{1}^{1/2}
    \right) \mb{\Sigma}_{1}^{-1/2}
    \succeq 0$ and $\bm{b}=\bm{\mu}_{2} - \mb{A}\bm{\mu}_{1}$. Thus, for $\psi := \sqrt{\mb{A}}$
    \begin{align*}
    \langle x_{i} - x_{j}, \mb{A}x_{i} + \bm{b} - (\mb{A}x_{j} + \bm{b}) \rangle = \lVert \sqrt{\mb{A}} (x_{i} - x_{j}) \rVert_{2}^{2}
    \end{align*} 
\end{remark}
In general, the conjugated cost $\mb{M}^{\dagger}$ shifts $\mb{C}$ to be nearer to a clustering distance matrix after symmetrization: the diagonal entries are zero $\mb{M}^{\dagger}_{jj} = 0$, for squared Euclidean cost \cite{brenier1991polar} the entries $\langle x_{i} - x_{j}, T(x_{i}) - T(x_{j}) \rangle \geq 0$, and $\mb{M}^{\dagger}$ reduces to a matrix of kernel-distances on $x_{i} - x_{j}$ whenever $\mathrm{Sym}(\mb{C}^{\dagger})$ is CND. Moreover, for squared Euclidean cost one has $T= \nabla\varphi$ for a convex potential $\varphi \in \mb{cvx}(\mathbb{R}^{d})$ \cite{brenier1991polar}. Thus, to second-order, all entries may be expressed on $\mb{1}_{n}^{\perp}$ as PSD forms $\langle x_{i} - x_{j}, T(x_{i}) - T(x_{j}) \rangle \sim \langle x_{i} - x_{j}, \nabla^{2}\varphi(x_{j})(x_{i} - x_{j}) \rangle$ for $\nabla^{2}\varphi(x_{j}) \succeq 0$.

While $K$-means reduces to a special case of low-rank optimal transport where $\mb{Q} = \mb{R}$, as has been previously shown \cite{scetbon2022lowrank}, the other direction is significantly less obvious: it often appears that one can only gain by taking $\mb{Q} \neq \mb{R}$ and optimizing over a larger space of solutions when $\mb{C}$ is an asymmetric cost with respect to a pair of \emph{distinct} datasets ${X},{Y}$. We note that when the conditions of Proposition~\ref{prop:cost_for_KMeans} hold, generalized $k$-means exactly reduces to $K$-means, so that step (ii) inherits its existing algorithmic guarantees. In particular, suppose $\mb{C} \in \mathbb{R}^{n \times n }$ satisfies Proposition~\ref{prop:cost_for_KMeans} and one may also solve $K$-means to $(1+\epsilon)$ using algorithm $\mathcal{A}$. For the Gram-matrix $\mb{K} =-(1/2)J\,\mathrm{Sym}\mb{C} \, J$ one may yield the eigen-decomposition $\mb{K} = \mb{U}\mb{\Lambda}\mb{U}^{\top}$ and compute point $\mb{Z} = \mb{U}\mb{\Lambda}^{1/2}$. Then, given a solution to $K$-means on $\mb{Z}$, $\Bar{\mb{Q}} := \mathcal{A}(\mb{Z})$, one automatically inherits $(1+\epsilon)$-approximation of generalized $K$-means by the exact reduction. We detail the algorithm for this special case in Algorithm~\ref{alg:exact_GKMS_reduction} below.

\begin{algorithm}[]
   \caption{}
  \label{alg:exact_GKMS_reduction}
\begin{algorithmic}[1]
   \STATE \textbf{input:} Cost matrix $\mb{C}$ and rank $K$.
   \STATE Symmetrize the Monge-conjugated cost $\mathrm{Sym}(\tilde{\mb{C}})=\mb{C}\mb{P}_{\sigma^{\star}}^\top +\mb{P}_{\sigma^{\star}}\mb{C}^\top$
   \STATE Grammize as $G =-(1/2)\,\,J\,\mathrm{Sym}(\tilde{\mb{C}})J$ for double-centering $J = \mathbbm{1}_{n} - \frac{1}{n} \mb{1}_{n}\mb{1}_{n}^\top$
   \STATE Yield $Z$ from eigen-decomposition of $G = ZZ^{\top}$
   \STATE Run $K$-Means on $Z$ to yield $\mb{Q}$
   \STATE Output the pair $(\mb{Q}, \mb{P}_{\sigma^{\star}}^\top)$
\end{algorithmic}
\end{algorithm}


Thus, for this class of cost, Algorithm~\ref{alg:monge_KMeans} guarantees \emph{optimal} solutions to generalized $K$-means by reduction to optimal solvers for $K$-means.


Observe two valuable invariants of the optimization problem \eqref{def:GenKMeans}: first we have an affine invariance, naturally characterized by the optimal transport constraints; second, the symmetry of the coupling optimized introduces an invariance to asymmetric components of the cost itself, so that the optimization \eqref{def:GenKMeans} is equivalent to one on the symmetrization of the cost.

\begin{lemma}[Invariances of Generalized $K$-Means.]\label{lemma:affine_invar}
    Suppose we are given a cost matrix $\mb{C} \in \mathbb{R}^{n \times n}$. Then the generalized $K$-means problem\begin{align}\label{eq:LR_supp}
    \min_{\substack{\mb{Q} \in \Pi_{\bullet}(\bm{u}_{n},\,\bm{g}),\\\,\bm{g} \in \Delta_{K}}}  \,\, \langle \mb{Q}\,\mathrm{diag}(1/\bm{g})\mb{Q}^{\top}\,, \mb{C} \rangle_{F}
    \end{align}
    Exhibits the following invariances:
    \begin{enumerate}
        \item Invariance to asymmetric components\begin{align}\argmin_{\substack{\mb{Q} \in \Pi_{\bullet}(\bm{u}_{n},\,\bm{g}),\\\,\bm{g} \in \Delta_{K}}}  \,\, \langle \mb{Q}\,\mathrm{diag}(1/\bm{g})\mb{Q}^{\top}\,, \mb{C} \,  \rangle_{F}=\argmin_{\substack{\mb{Q} \in \Pi_{\bullet}(\bm{u}_{n},\,\bm{g}),\\\,\bm{g} \in \Delta_{K}}}  \,\, \langle \mb{Q}\,\mathrm{diag}(1/\bm{g})\mb{Q}^{\top}\,, \mb{S} \,  \rangle_{F} 
    \end{align}
        Where $\mb{C} = \mb{A} + \mb{S}$ for its symmetric component $\mb{S}\defeq(1/2)(\mb{C}+\mb{C}^{\top}) \in \mathbb{S}^{n}$ and its skew-symmetric component $\mb{A}:=(1/2)(\mb{C} -\mb{C}^{\top}) \in \mathbb{A}^{n}$.
        \item Invariance to affine offsets $\bm{f}\mb{1}_{n}^{\top} + \mb{1}_{n}\bm{h}^{\top}$ and shifts $\gamma\mb{1}_{n}\mb{1}_{n}^{\top}$
        \begin{align}&\min_{\substack{\mb{Q} \in \Pi_{\bullet}(\bm{u}_{n},\,\bm{g}),\\\,\bm{g} \in \Delta_{K}}}  \,\, \langle \mb{Q}\,\mathrm{diag}(1/\bm{g})\mb{Q}^{\top}\,, \mb{\Lambda} + \bm{f}\mb{1}_{n}^{\top} + \mb{1}_{n}\bm{h}^{\top} + \gamma\mb{1}_{n}\mb{1}_{n}^{\top} \rangle_{F}\label{eq:invar_affine}\\
        &= \min_{\substack{\mb{Q} \in \Pi_{\bullet}(\bm{u}_{n},\,\bm{g}),\\\,\bm{g} \in \Delta_{K}}}  \,\, \langle \mb{Q}\,\mathrm{diag}(1/\bm{g})\mb{Q}^{\top}\,, \mb{\Lambda}  \rangle_{F} + \bm{f}^{\top} \bm{u}_{n} + \bm{u}_{n}^{\top}\bm{h} + \gamma
        \end{align}
    \end{enumerate}
\end{lemma}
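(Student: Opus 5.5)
The plan is to expand the objective entrywise over the feasible set $\mb{Q}\in\Pi_{\bullet}(\bm{u}_n,\bm{g})$. Set $\mb{M}\defeq\mb{Q}\,\mathrm{diag}(1/\bm{g})\mb{Q}^{\top}$. Two structural facts drive everything. First, $\mb{M}$ is symmetric. Second, its row and column sums are $\mb{M}\mb{1}_n=\mb{Q}\,\mathrm{diag}(1/\bm{g})\mb{Q}^{\top}\mb{1}_n=\mb{Q}\,\mathrm{diag}(1/\bm{g})\bm{g}=\mb{Q}\mb{1}_K=\bm{u}_n$, and likewise $\mb{M}^{\top}\mb{1}_n=\bm{u}_n$. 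So $\mb{M}\in\Pi(\bm{u}_n,\bm{u}_n)$ is a symmetric coupling with uniform marginals, independent of the particular $\mb{Q}$. Both invariances should follow from these facts by linearity of $\langle\mb{M},\cdot\rangle_F$.

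For part 1, I would write $\mb{C}=\mb{S}+\mb{A}$ and note that $\langle\mb{M},\mb{A}\rangle_F=\tr(\mb{M}^{\top}\mb{A})$. Since $\mb{M}$ is symmetric and $\mb{A}$ is skew-symmetric, this pairing vanishes: $\langle\mb{M},\mb{A}\rangle=\langle\mb{M}^{\top},\mb{A}^{\top}\rangle=-\langle\mb{M},\mb{A}\rangle$. Hence the objectives with costs $\mb{C}$ and $\mb{S}$ agree pointwise on the whole feasible set. In particular their argmin sets coincide, as do their minimal values.

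For part 2, linearity gives three extra terms.
\begin{itemize}
\item $\langle\mb{M},\bm{f}\mb{1}_n^{\top}\rangle=\bm{f}^{\top}\mb{M}\mb{1}_n=\bm{f}^{\top}\bm{u}_n$.
\item $\langle\mb{M},\mb{1}_n\bm{h}^{\top}\rangle=\mb{1}_n^{\top}\mb{M}\bm{h}=\bm{u}_n^{\top}\bm{h}$.
\item $\langle\mb{M},\gamma\mb{1}_n\mb{1}_n^{\top}\rangle=\gamma\,\mb{1}_n^{\top}\bm{u}_n=\gamma$.
\end{itemize}
All three are constants independent of $(\mb{Q},\bm{g})$. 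Therefore they pull out of the minimization, which gives the stated identity \eqref{eq:invar_affine}, and again the minimizers are unchanged.

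The only point needing care is verifying the marginal identity $\mb{M}\mb{1}_n=\bm{u}_n$. It relies on $\bm{g}=\mb{Q}^{\top}\mb{1}_n$ exactly, which is the inner-marginal constraint built into $\Pi_{\bullet}(\bm{u}_n,\bm{g})$. I would also remark that nothing uses hardness of $\mb{Q}$, so the same argument holds over the soft set $\Pi(\bm{u}_n,\bm{g})$. There is no real obstacle; the argument is a direct calculation once these two properties of $\mb{M}$ are recorded.
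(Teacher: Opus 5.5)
Your proposal is correct and matches the paper's proof. For part 1 the paper uses the same fact, that symmetry of $\mb{Q}\,\mathrm{diag}(1/\bm{g})\mb{Q}^{\top}$ makes the objective blind to the skew part, phrased there as $\langle \mb{Q}\,\mathrm{diag}(1/\bm{g})\mb{Q}^{\top},\mb{C}\rangle_F=\langle \mb{Q}\,\mathrm{diag}(1/\bm{g})\mb{Q}^{\top},\mb{C}^{\top}\rangle_F$; for part 2 it evaluates the three offset terms one by one using $\mb{Q}^{\top}\mb{1}_n=\bm{g}$ and $\mb{Q}\mb{1}_K=\bm{u}_n$, exactly as you do, and your framing of $\mb{M}$ as a symmetric element of $\Pi(\bm{u}_n,\bm{u}_n)$ is simply a tidier organization of that computation.
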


\begin{proof}
    Observe that symmetry of the matrix $\mb{Q}\,\mathrm{diag}(1/\bm{g})\mb{Q}^{\top}$ implies the objective \eqref{eq:LR_supp} is equivalent to the objective on $\mb{C}^{\top}$
   \begin{align}
       \langle \mb{Q}\,\mathrm{diag}(1/\bm{g})\mb{Q}^{\top}\,, \mb{C} \rangle_{F} = \tr  \mb{Q}\,\mathrm{diag}(1/\bm{g})\mb{Q}^{\top} \mb{C} = \langle \mb{Q}\,\mathrm{diag}(1/\bm{g})\mb{Q}^{\top} , \mb{C}^{\top} \rangle
   \end{align}
   This directly implies (1). For (2), if $\mb{C} = \mb{\Lambda} +  \mb{X}\mb{Y}^{\top}$. Then, we have that
    \begin{align*}
        &\langle \mb{Q}\,\mathrm{diag}(1/\bm{g})\mb{Q}^{\top}\,, \mb{C} \rangle_{F} = \langle \mb{Q}\,\mathrm{diag}(1/\bm{g})\mb{Q}^{\top}\,, \mb{\Lambda} \rangle + 
        \tr \mb{X}^{\top}  \mb{Q}\,\mathrm{diag}(1/\bm{g})\mb{Q}^{\top}\,\mb{Y}
    \end{align*}
    Thus, the constraints on $\mb{Q}$ imply for each case of \eqref{eq:invar_affine} that
    \begin{align*}
        &\gamma\tr \mb{1}_{n}^{\top}  \mb{Q}\,\mathrm{diag}(1/\mb{Q}^{\top}\,\mb{1}_{n})\mb{Q}^{\top}\,\mb{1}_{n} = \gamma \mb{1}_{n}^{\top}  \mb{Q}\,\mb{1}_{n} = \gamma \\
        &\tr \bm{f}^{\top}  \mb{Q}\,\mathrm{diag}(1/\bm{g})\mb{Q}^{\top}\,\mb{1}_{n} = \bm{f}^{\top}  \mb{Q}\,\mathrm{diag}(1/\bm{g})\bm{g } =  \bm{f}^{\top}  \mb{Q}\,\mathrm{diag}(1/\bm{g})\bm{g } \, =   \bm{f}^{\top}  \mb{Q}\,\mb{1}_K = \bm{f}^{\top} \bm{u}_{n} \\
        & (\mb{Q} \mb{1}_{n})^{\top}\,\mathrm{diag}(1/\bm{g})\mb{Q}^{\top}\,\bm{h} = \,\,\,(\mathrm{diag}(1/\bm{g})\bm{g})^{\top}\mb{Q}^{\top}\,\bm{h} =  \,\,\,(\mb{Q}\mb{1}_K)^{\top}\,\bm{h} =  \bm{u}_{n}^{\top}\bm{h}
    \end{align*}
\end{proof}

\begin{proposition}[Reduction to $K$-Means for costs with conditionally negative semi-definite symmetrization.]\label{prop:cost_for_KMeans}
    Suppose we are given a cost matrix $\mb{C} \in \mathbb{R}^{n \times n}$. Then generalized $K$-means reduces to $K$-means if $\mathrm{Sym}\,\mb{C}\,\defeq (1/2)\, (\mb{C}^{\top} + \, \mb{C})$ is conditionally negative-semidefinite (CND) so that $\mathrm{Sym}\mb{C} \preceq 0$ on $\mb{1}_{n}^{\perp} = \{\xi: \langle \xi, \mb{1}_{n} \rangle = 0\}$.
\end{proposition}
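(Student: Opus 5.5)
Proof plan for Proposition~\ref{prop:cost_for_KMeans}. The argument has two reductions. First, use Lemma~\ref{lemma:affine_invar} to replace $\mb{C}$ by its symmetric part $\mb{S}=\mathrm{Sym}\,\mb{C}$ without changing the minimizers. Second, show that for a symmetric conditionally negative semidefinite $\mb{S}$ the generalized $K$-means objective equals, up to an additive constant and a positive factor, a kernel $K$-means objective with a PSD Gram matrix. That kernel objective is an ordinary $K$-means problem on explicit Euclidean points.

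\textbf{Step 1: centre the cost.} Set $J=\mathbbm{1}-\frac1n\mb{1}_n\mb{1}_n^\top$ and $\mb{K}\defeq-\frac12 J\mb{S}J$. Since $\mb{S}\preceq 0$ on $\mb{1}_n^\perp=\operatorname{range}(J)$, for every $v$ we get $v^\top\mb{K}v=-\frac12(Jv)^\top\mb{S}(Jv)\geq 0$, so $\mb{K}\succeq 0$. Expanding $J\mb{S}J$ gives
\[
\mb{S}=-2\mb{K}+\bm{f}\mb{1}_n^\top+\mb{1}_n\bm{h}^\top+\gamma\mb{1}_n\mb{1}_n^\top
\]
with $\bm{f}=\bm{h}=\frac1n\mb{S}\mb{1}_n$ and $\gamma=-\frac1{n^2}\mb{1}_n^\top\mb{S}\mb{1}_n$. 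By part (2) of Lemma~\ref{lemma:affine_invar}, the objective on $\mb{S}$ equals $-2\langle \mb{Q}\diag(1/\bm{g})\mb{Q}^\top,\mb{K}\rangle_F$ plus a constant independent of $\mb{Q}$.

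\textbf{Step 2: rewrite as kernel $K$-means.} Minimizing the objective is therefore equivalent to maximizing $\tr\,\diag(\bm{g})^{-1/2}\mb{Q}^\top\mb{K}\mb{Q}\,\diag(\bm{g})^{-1/2}$. This is the kernel $K$-means trace form of \citet{dhillon2004kernel}; rescaling $n\mb{Q}$ to a $\{0,1\}$ assignment matrix only contributes a positive factor. Factor $\mb{K}=\mb{Z}\mb{Z}^\top$, for instance through the eigendecomposition with $\mb{Z}=\mb{U}\mb{\Lambda}^{1/2}$, and let $z_i$ be the rows of $\mb{Z}$. The matrix $\mb{K}$ is then the Gram matrix of the $z_i$, and $-2\mb{K}$ differs from the matrix $\|z_i-z_j\|^2=\mb{K}_{ii}+\mb{K}_{jj}-2\mb{K}_{ij}$ only by the rank-one offsets $\mathrm{diag}(\mb{K})\mb{1}_n^\top+\mb{1}_n\mathrm{diag}(\mb{K})^\top$. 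Applying Lemma~\ref{lemma:affine_invar}(2) a second time shows that the generalized $K$-means objective on $\mb{C}$ equals the pairwise form \eqref{eq:Kmeans_Q} of ordinary $K$-means on the points $z_i$, up to an additive constant and a positive scale. Consequently the two problems have the same minimizers, and any $(1+\epsilon)$ guarantee on the distortion carries over. The additive constant is harmless because it is the same on both sides once each objective is written in distortion form.

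\textbf{Main obstacle.} The delicate step is the bookkeeping of the constants and offsets. I need the offsets contributed by $\mb{S}\mb{1}_n$ and by $\mathrm{diag}(\mb{K})$ to collapse to quantities that do not depend on $\mb{Q}$. This relies on the identities in Lemma~\ref{lemma:affine_invar}, namely $\mb{Q}\diag(1/\bm{g})\mb{Q}^\top\mb{1}_n=\mb{1}_n$, or the $\bm{u}_n$-scaled version of that identity. The second potential difficulty is the normalization convention: $\mb{Q}$ has entries $1/n$ while the kernel $K$-means form uses $\{0,1\}$ assignments. I would settle both points by computing explicitly with the partition form $\sum_k\frac{1}{|\mathcal{C}_k|}\sum_{i,j\in\mathcal{C}_k}$, where the offsets visibly contribute $\sum_i(f_i+h_i)+n\gamma$.
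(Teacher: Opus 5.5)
Your proposal is correct and follows the paper's proof. You discard the skew part by Lemma~\ref{lemma:affine_invar}(1), double-center with $J$ by Lemma~\ref{lemma:affine_invar}(2) to obtain the PSD kernel $\mb{K}=-\tfrac12 J\mb{S}J$, and recognize the kernel $K$-means trace form. Your factorization $\mb{K}=\mb{Z}\mb{Z}^\top$ only makes explicit the kernel-to-Euclidean step that the paper defers to Algorithm~\ref{alg:exact_GKMS_reduction}.

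One small refinement concerns your claim that any $(1+\epsilon)$ guarantee carries over. In the $\bm{u}_n$ normalization the objective equals $\tfrac2n$ times the $K$-means distortion of the $z_i$ plus $\tfrac1n\tr\mb{S}$. A multiplicative guarantee therefore passes to the raw objective only when $\tr\mb{S}\geq 0$, which holds for $\tilde{\mb{C}}$ with a nonnegative cost since $\tr\tilde{\mb{C}}=M_{\sigma^\star}$.
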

\begin{proof}
    Owing to invariance of the objective to $\mathrm{Skew}(\mb{C})$ we may replace the minimization in \eqref{eq:LR_supp} with a minimization over the symmetric component of $\mb{C}$, $(1/2)(\mb{C}+\mb{C}^{\top}) = \,\mb{S}$:
   \begin{align}
    &\argmin_{\substack{\mb{Q} \in \Pi_{\bullet}(\bm{u}_{n},\,\bm{g}),\\\,\bm{g} \in \Delta_{K}}}  \,\, \langle \mb{Q}\,\mathrm{diag}(1/\bm{g})\mb{Q}^{\top}\,, \mb{C} \rangle_{F} =\argmin_{\substack{\mb{Q} \in \Pi_{\bullet}(\bm{u}_{n},\,\bm{g}),\\\,\bm{g} \in \Delta_{K}}}  \,\, \langle \mb{Q}\,\mathrm{diag}(1/\bm{g})\mb{Q}^{\top}\,, \mb{S} \rangle_{F}
    \end{align}
    Observe that the solution of the objective is invariant to outer products between constant $\bm{f},\bm{h}\in \mathbb{R}^{n}$ with the one vector $\mb{1}_{n}$, i.e. components of the form $\bm{f}\mb{1}_{n}^{\top} + \mb{1}_{n}\bm{h}^{\top}$. Denote the double-centering $J = \mathbbm{1}_{n} - (1/n) \mb{1}_{n}\mb{1}_{n}^{\top}$. If $\mb{S}$ is conditionally negative semidefinite (CND), then applying this affine invariance implies the objective is equivalent to
    \begin{align}
        \min_{\substack{\mb{Q} \in \Pi_{\bullet}(\bm{u}_{n},\,\bm{g}),\\\,\bm{g} \in \Delta_{K}}}  \,\, \langle \mb{Q}\,\mathrm{diag}(1/\bm{g})\mb{Q}^{\top}\,, J\mb{S}J \rangle_{F} 
    \end{align}
    Thus, for $J 
    \mb{S}J \preceq 0$ we exhibit a positive semidefinite kernel matrix $\mb{K} = - (1/2)\, J \mb{S} J \succeq 0$ and have
    \begin{align}
        &\min_{\substack{\mb{Q} \in \Pi_{\bullet}(\bm{u}_{n},\,\bm{g}),\\\,\bm{g} \in \Delta_{K}}}  \,\, \langle \mb{Q}\,\mathrm{diag}(1/\bm{g})\mb{Q}^{\top}\,, \mb{C} \rangle_{F}  = -2\,\, \langle \mb{Q}\,\mathrm{diag}(1/\bm{g})\mb{Q}^{\top}\,,  \mb{K}\rangle_{F} \\&\equiv \max_{\mb{Q}\in \{ 0 , 1 \}^{n \times K}} \tr \mb{D}^{-1/2} \mb{Q}^{\top} \mb{K} \mb{Q} \mb{D}^{-1/2}
    \end{align}
    Where $\mb{D} \defeq \mathrm{diag}(\bm{g})$ denotes the conventional diagonal matrix of cluster sizes and $\mb{Q}$ the matrix of assignments. Thus, if $\mb{S}=\mathrm{Sym}\mb{C}$ is conditionally negative semidefinite, up to constants Problem~\eqref{def:GenKMeans} reduces to kernel $k$-means \citep{kernel_Kmeans}.
\end{proof}

\newpage
\section{Experimental Details}

\subsection{Implementation Details}
\label{appendix:implementation}

For the synthetic experiments we inferred the Monge map $\mb{P}_{\sigma^*}$
by applying the Sinkhorn algorithm implemented in \texttt{ott-jax} with the
entropy regularization parameter $\epsilon=10^{-5}$ and
a maximum iteration count of 10{,}000. For the
real data experiments, we inferred the Monge map $\mb{P}_{\sigma^*}$
using \texttt{HiRef} \citep{halmos2025hierarchical}, and used a low-rank version of \texttt{GKMS} which uses a factorization of the cost $\mb{C}=\mb{A B}^{\top}$ for scaling. The remaining 
implementation details
are consistent across the synthetic and real data experiments.

For the \texttt{GKMS}
algorithm, we used a \texttt{JAX} implementation of the \texttt{GKMS}
algorithm with step size $\gamma_k = 2$
for a fixed number $250$ of iterations. To construct an initial solution, we first applied 
the $K$-means algorithm implemented in \texttt{scikit-learn} 
on $X$ and $Y$ to obtain clustering matrices $\mb{Q}_X$ and $\mb{Q}_Y$.
Then, using the Monge registered initialization procedure in Algorithm \ref{alg:gen_kmeans_initialization}, we took the best of the two solutions
$\mb{Q}_X$ and $\mb{P}_{\sigma^*}\mb{Q}_Y$ as $\mb{Q}$.
Next, we performed a centering step by
setting $\mb{Q}^{(0)} = \lambda \mb{Q}+(1-\lambda)\mb{Q}'$
where $\mb{Q}'$ is a random matrix in $\Pi(\mb{u}_n,\cdot)$ 
generated from the initialization procedure in \cite{Scetbon2021LowRankSF}
with $\lambda = \frac{1}{2}$.
Finally, we ran \texttt{GKMS}
on the registered cost matrix $\tilde{\mb{C}}=\mb{C}\mb{P}_{\sigma^*}^\top$
with $\mb{Q}^{(0)}$ as the initial solution.

For the synthetic stochastic block model (SBM) example we applied the semidefinite
programming formulation of the generalized $K$-means problem
described in Appendix \ref{subsec:alg_sdp} with the solver from
\cite{zhuang2023statistically} to initialize $\mb{Q}^{(0)}$ prior
to running \texttt{GKMS}.

\subsection{Synthetic Experiments}
\label{appendix:synthetic}

We constructed three synthetic datasets to evaluate low-rank OT methods and evaluated
\ourmethod against three existing low-rank OT methods:
\LOT \citep{Scetbon2021LowRankSF}, \FRLC \citep{FRLC}, and \LIN \citep{lin2021making}. The three synthetic datasets are referred to as 2-Moons and 8-Gaussians (2M-8G) \citep{tong2023improving, Scetbon2021LowRankSF}, shifted Gaussians (SG), and the stochastic block model (SBM). 
The 2M-8G dataset contained three 
instances at noise levels $\sigma^2 \in \{0.1, 0.25, 0.5\}$, the SG dataset contained three instances at noise levels $\sigma^2 \in \{0.1, 0.2, 0.3\}$, and the
SBM dataset contained a single instance. 
Each instance contained 
$n = m = 5000$ points and methods were evaluated across
a range of ranks $K \in \{50, 75, \ldots, 250\}$ and $K \in \{10, \ldots, 100\}$
with five random seeds $s \in\{1,2,3,4,5\}$. In total,
each algorithm was ran on $64$ instances for $5$ random
seeds. 
As each dataset was constructed with $n = m = 5000$ datapoints, the resulting cost matrix $\mb{C} \in \mathbb{R}^{5000\times5000}$.

\textbf{2-Moons and 8-Gaussians (2M-8G).} In this experiment \cite{tong2023improving}, we generated two datasets $X,Y \subset \mathbb{R}^{2}$ representing two spirals ($X$) and 8 isotropic Gaussians ($Y$). In particular, we used the function \texttt{generate\_moons} from the package \texttt{torchdyn.datasets} to generate the two interleaving moons as the first dataset. These are defined as semi-circles with angles $\theta_{1} \sim \mathrm{Unif}(0, \pi ), \, \theta_{2} \sim \mathrm{Unif}(0, \pi )$ and $\begin{pmatrix}
    r \cos \theta_{1} & r \sin\theta_{1}
\end{pmatrix} - \bm{c}$ and $\begin{pmatrix}
    r \cos \theta_{2} & -r \sin\theta_{2}
\end{pmatrix} + \bm{c}$ for constant offset $\bm{c}$. We add isotropic Gaussian noise with variance $0.5$. As in \cite{tong2023improving}, one scales all points with $\Tilde{Y}=aY+b$ for $a=3, b=\begin{pmatrix}
    -1 & -1
\end{pmatrix}$ to overlap visually with the 8 Gaussians. For given variance $\sigma^{2}=1.0$, we generated $k\in[8]$ isotropic Gaussian clusters $\mathcal{N}(\bm{\mu}_{k}, \sigma^{2} \mb{I}_{2})$ with means on the unit circle $S^{2}$, given by
\begin{align*}
    \begin{pmatrix}
        \bm{\mu}_{1} \\
        \cdots \\
        \bm{\mu}_{8}
    \end{pmatrix} =\begin{cases}
    (1,0),\,\\
    (-1,0),\,\\
    (0,1),\, \\
    (0,-1),\,\\
    (\tfrac{1}{\sqrt{2}},\tfrac{1}{\sqrt{2}}),\,\\
    (\tfrac{1}{\sqrt{2}},-\tfrac{1}{\sqrt{2}}),\,\\
    (-\tfrac{1}{\sqrt{2}},\tfrac{1}{\sqrt{2}}),\,\\
    (-\tfrac{1}{\sqrt{2}},-\tfrac{1}{\sqrt{2}})
    \end{cases}
\end{align*}
The 2-moons constitutes a simple non-linear manifold and the 8-Gaussians constitutes a simple dataset with cluster structure.

\textbf{Shifted Gaussians (SG).} 
To construct the SG synthetic datasets we placed 
$K = 250$ Gaussian distributions with means $\bm{\mu}_1,\ldots,\bm{\mu}_k\in\mathbb{R}^{K}$
at the basis vectors $e_1,\ldots, e_K \in \mathbb{R}^K$. Similarly, we constructed
another set of means $\bm{\mu}_1',\ldots,\bm{\mu}_k'$ by perturbing the means $\bm{\mu}_i'=\bm{\mu}_i + \epsilon_i$ with $\epsilon_i \sim \mathcal{N}(0, \frac{0.1}{\sqrt{n}}\mb{I}_K)$.
Then, we randomly sampled groups
of size $m_1,\ldots, m_K$ with $\sum_{k=1}^Km_k=n$ by randomly sampling a partition
of $n$ of size $K$. Then, for both datasets $X$ and $Y$, we assigned cluster $1$ to the first 
$m_1$ points, cluster $2$ to the next $m_2$ points, ..., and cluster $K$ to the final $m_K$ points.
For points in cluster $k$ in dataset $X$, we sample $m_k$ points from 
$\mathcal{N}(\bm{\mu}_k, \frac{\sigma^2}{\sqrt{n}}\mb{I}_K)$. Similarly,
for points in cluster $k$ in dataset $Y$, we sample $m_k$ points from 
$\mathcal{N}(\bm{\mu}_k', \frac{\sigma^2}{\sqrt{n}}\mb{I}_K)$. 

To construct the cost matrix, we take $\mb{C}_{ij}=\lVert x_i - y_j\rVert^2_2$. 
We construct three instances using different noise values $\sigma^2\in\{0.1, 0.2, 0.3\}.$

\textbf{Stochastic Block Model (SBM).}
To construct the SBM instance, we generated a graph $G = (V, E)$ from a
stochastic block model using within cluster probability $p=0.5$ and
between cluster probability $q = 0.25$ over 
$K = 100$ clusters of fixed size $m = 50$. Edge weights $w_e$ were
generated by randomly sampling weights from $\text{Unif}(1.0, 2.0)$.
The cost matrix $\mb{C}_{ij} = d_G(i,j)$ was taken as 
the shortest path distance between vertices $i$ and $j$ in $G$ with 
the weight function $w$.

\subsection{CIFAR10}\label{sec:cifar}

We follow the protocol of \cite{zhuang2023statistically} in this experiment by comparing all low-rank OT methods on the CIFAR-10 dataset, containing 60{,}000 images of size $32 \times 32 \times 3$ across 10 classes. We use a ResNet (resnet18-f37072fd.pth) to embed the images to dimension $d=512$ \cite{he2016deep} and apply a PCA to $d=50$, following the procedure of \cite{zhuang2023statistically}. We then perform a stratified 50/50 split of the images into two datasets of 30{,}000 images with class-label distributions matched. We use a fixed seed for this, as well as for the low-rank OT solvers following the \texttt{ott-jax} implementation of \cite{Scetbon2021LowRankSF}. For low-rank OT, we set the rank to $K=10$ to match the number of class labels. To run \ourmethodshort\, we solve for the coupling $\mb{P}_{\sigma^{\star}}$ with Hierarchical Refinement due to the size of the dataset \cite{halmos2025hierarchical}, and solve generalized $K$-means with mirror-descent. In this experiment, we specialize to the squared-Euclidean cost $\lVert \cdot - \cdot \rVert_{2}^{2}$.

For our evaluation metrics, we first compute the primal OT cost of each low-rank coupling as our primary benchmark. We also evaluate AMI and ARI to the ground-truth marginal clusterings, given by annotated class labels. We compute our predicted labels via the argmax assignment of labels as $\hat{y}(i) = \argmax_{z} \mb{Q}_{i, z}$ and $\hat{y}'(j) = \argmax_{z} \mb{R}_{j, z}$. Lastly, we assess co-clustering performance by using the class-transfer accuracy (CTA). Given a proposed coupling $\mb{P}$, define the class-to-class density matrix for two ground-truth classes $k,k'$ (distinguished from the predicted classes of the arg-max of the low-rank factors) to be 
\[
(\rho)_{k,k'} = \sum_{ij} \mb{P}_{ij} \mathbbm{1}_{i \in \mathcal{C}_{k}} \mathbbm{1}_{j \in \mathcal{C}_{k'}}
\]
The class-transfer accuracy is then defined to be
\begin{equation}
\mathrm{CTA}(\mb{P}) =\frac{\tr \rho }{ \sum \rho_{k,k'} }
\end{equation}
in other words, the fraction of mass transferred between ground-truth classes (i.e. the diagonal of $\rho$) over the total mass transferred between all class pairs.

\subsection{Single-Cell Transcriptomics of Mouse Embryogenesis}\label{sec:single_cell}

We validate \ourmethodshort\ against \LOT \cite{Scetbon2021LowRankSF} and \FRLC \cite{FRLC} on a recent, massive-scale dataset of single-cell mouse embryogenesis measured across 45 timepoint bins with combinatorial indexing (sci-RNA-seq3) \cite{shendure2024}. In aggregate, this dataset contains 12.4 million nuclei across timepoints and various replicates. As our experiment, we align the first replicate across 7 timepoints (E8.5, E8.75, E9.0, E9.25, E9.5, E9.75, E10.0) for a total of 6 adjacent timepoint pairs. For each timepoint pair, we use \texttt{scanpy} to read the h5ad file and follow standard normalization procedures: \texttt{sc.pp.normalize\_total} to normalize counts, \texttt{sc.pp.log1p} to add pseudocounts for stability, and run \texttt{sc.tl.pca} to perform a PCA projection of the raw expression data to the first $d=50$ principle components (using SVD solver "randomized"). As we use \cite{halmos2025hierarchical} as the full-rank OT solver, subsampling each dataset slightly to ensure that $n$ has many divisors for hierarchical partitioning. Similarly to the CIFAR evaluation, we ensure that the two datasets have a balanced proportion of classes -- which, in this case, represent cell-types annotated from \texttt{cell\_id} in the \texttt{df\_cell.csv} metadata provided in \cite{shendure2024}. We set the rank $K$ to be the minimum of the number of cell-types present at timepoint 1 and timepoint 2. We run \LOT, \FRLC, and \ourmethodshort\ on this data with the squared Euclidean cost. In both cases, we input the data as point clouds $X,Y$ as opposed to instantiating the cost $\mb{C}$ explicitly and specialize to the squared Euclidean cost.

\subsection{Estimation of Wasserstein Distances}

We generate sample data from the fragmented hypercube of \cite{forrow19a} to evaluate the performance of low-rank OT techniques on recovering the ground-truth EMD or squared Wasserstein distance. $|\hat{W}_2^2 - W_2^2|$. The fragmented hypercube is generated with $P_{0}$ ($\mu$) given by
\[
    P_{0} = \mathrm{Unif}([-1,1]^{d})
\]
And $P_{1} = T_{\sharp} P_{0}$ the push-forward of a transport map $T$ applied to $P_{0}$. Following \cite{forrow19a}, we take
\[
T(X) = X + 2 \cdot  \mathrm{sgn}(X) \odot (\mathbf{e}_{1} + \mb{e}_{2})
\]
With $\mathrm{sgn}(X)$ defined elementwise. Brenier's theorem \cite{Villani2003} allows one to compute the Wasserstein distance in this case, with a target value of $W_2^2 = 8$. Additional details on this experimental setup can be found in Section 6.1 of \cite{forrow19a}.

\begin{table}[t]
\caption{Estimation error $|\hat{W}_2^2 - W_2^2|$ for Varying Sample Size (d=30, k=10).}
\label{tab:results_n}
\begin{center}
\begin{small}
\begin{sc}
\begin{tabular}{lrrrrrrrr}
\hline
N & 29 & 36 & 44 & 54 & 66 & 80 & 98 & 119 \\
\hline
Full-Rank OT & 14.520 & 14.121 & 13.887 & 13.485 & 13.213 & 12.863 & 12.523 & 12.249 \\
K-Means & 9.616 & 8.540 & 7.887 & 7.087 & 6.729 & 6.196 & 5.828 & 5.579 \\
FactoredOT & 5.560 & 4.493 & 3.742 & 2.774 & 2.173 & 1.899 & 1.291 & 0.827 \\
FRLC & \textbf{2.449} & 2.448 & 2.188 & 1.745 & 0.920 & 0.995 & 0.598 & 0.526 \\
TC (Ours) & 3.009 & \textbf{2.344} & \textbf{1.385} & \textbf{0.784} & \textbf{0.502} & \textbf{0.357} & \textbf{0.342} & \textbf{0.242} \\
\hline
\end{tabular}
\end{sc}
\end{small}
\end{center}
\end{table}

\subsection{Additional Ablations}\label{sec:additional_ablations}

We provide a number of ablations to (1) understand the impact of using an entropy-regularized coupling $\mb{P}_{\epsilon}$ in place of a permutation $\mb{P}_{\sigma}$ in the Monge-registration step (Table~\ref{tab:sensitivity}, Figure~\ref{fig:eps_sensitivity}), (2) evaluating the value of our proposed initialization for existing low-rank OT algorithms (Table~\ref{tab:TC_init}), and (3) verifying the empirical performance of varying $n \neq m$ for Kantorovich registration (Table~\ref{tab:Kant_registration}). While further details may be found in Supplement~\ref{sec:additional_ablations}, we identify that low values of entropy significantly improve the performance of \ourmethod, find using our initialization improves the performance of previous low-rank OT algorithms, and observe that the advantage of Kantorovich registration holds when increasing the asymmetry in the dataset sizes.

\textbf{Kanotorovich registration} While our theoretical results apply to Monge registration, we validate the efficacy of the proposed extension, Kanotorovich registration, empirically. In particular, to assess the effect of asymmetry in the size of dataset one ($n$) and dataset two ($m$), we fix $n=1024$ and vary $m \in \{ 1024, 512, 256, 128, 64 \}$ for samples generated by the dataset of \cite{tong2023improving} (Table~\ref{tab:Kant_registration}). The relative difference between \texttt{TC} and \texttt{FRLC} and \texttt{LOT} is similar for both Monge and Kantorovich registration, even for $m$ over an order of magnitude different from $n$, providing evidence that the performance of \ourmethod\ generalizes to the Kantorovich setting.

\begin{table}[h!]
\centering
\caption{Low-rank OT costs with fixed $|X|=1024$ and varying dataset size $|Y|$ for task of \cite{tong2023improving} (Kantorovich registration).}\label{tab:Kant_registration}
\begin{tabular}{r|ccc}
\toprule
$|Y|$ & TC & FRLC & LOT \\
\midrule
64   & 8.983 & 10.508 & 10.108 \\
128  & 9.407 & 11.211 & 10.607 \\
256  & 8.831 & 10.289 & 10.153 \\
512  & 8.931 & 10.595 & 10.168 \\
1024 & 8.794 & 10.235 & 10.460 \\
\bottomrule
\end{tabular}
\end{table}

\textbf{Effect of Entropy Regularization} 

In practice, the computational complexity of solving for an optimal permutation $\mb{P}_{\sigma}$ ($\mathrm{nnz}(\mb{P}_{\sigma})=n$) or more generally an optimal solution to primal OT with  $\mathrm{nnz}(\mb{P})\leq n+m-1$ \cite{peyre2019computational} is limiting, often requiring that one use entropic regularization \cite{sinkhorn} as an alternative. For an entropic regularization parameter $\epsilon > 0$, we denote its associated solution by $\mb{P}_{\epsilon} = \arg\min_{\,\mb{P} \in \Pi(\bm{a},\bm{b})} \left\langle \mb{C}, \mb{P} \right\rangle_{F} - \epsilon \mathrm{H}(\mb{P})$. We assess the effect of varying the entropy regularization parameter $\epsilon_i=10^{i}$ with magnitudes $i \in \{-5, \cdots, 0, 1\}$ for this Sinkhorn step. Here, as $\epsilon_i$ increases (i.e. as the coupling moves away from optimal), the quality of the transport registration decreases and
    we obtain a higher final cost (See Table~\ref{tab:sensitivity}). Meanwhile, as $\epsilon_i$
    decreases the total final cost decreases, with a 5-order of magnitude gap in $\epsilon_i$ improving solution cost by a factor of two and a 7-order of magnitude gap improving the cost by a factor of three.


\begin{table}[h]
    \centering
\caption{Low-rank OT cost of Transport Clustering as a function of the Sinkhorn regularization~$\varepsilon$ used in the registration step.}\label{tab:sensitivity}
\begin{tabular}{r r}
\toprule
$\varepsilon$ & LR-OT Cost \\
\midrule
$10^{-5}$ & 5.050 \\
$10^{-4}$ & 5.620 \\
$10^{-3}$ & 8.167 \\
$10^{-2}$ & 8.919 \\
$10^{-1}$ & 9.415 \\
$10^{0}$  & 9.576 \\
$10^{1}$  & 14.538 \\
\bottomrule
\end{tabular}
\end{table}

\textbf{Validation of the \ourmethod\ Initialization.} As noted in ``Guarantees from Transport registered initialization with $K$-means and $K$-medians,'' by applying $K$-means to obtain the clusterings $\mb{Q}_{X}$ and $\mb{R}_Y$ and taking the minimum of the transport registered co-clusterings $(\mb{Q}_{X}, \mb{P}_{\sigma^{\star}}^{\top} \mb{Q}_{X})$ and $(\mb{P}_{\sigma^{\star}} \mb{R}_{Y}, \mb{R}_{Y})$
 already ensures a constant factor guarantee and only requires that a solver monotonically decreases the cost from this initialization to a local minimum. As a result, this initialization $(\mb{Q}_{0}, \mb{R}_{0})$ may also be applicable to existing low-rank OT solvers such as \texttt{LOT} \cite{Scetbon2021LowRankSF} or \texttt{FRLC} \cite{FRLC} with $(\mb{Q}_{0}, \mb{R}_{0})$ will also maintain the guarantee. We have added an ablation study in Table~\ref{tab:TC_init} to quantify the effect of using the transport registered initialization in another low-rank OT solver (FRLC). We find transport registration of the initialization accounts for the majority of the practical improvement across the methods (Table~\ref{tab:TC_init}), which is as expected from the theory.

 \begin{table}[h!]
\centering
\caption{Low-rank OT costs on Planted Gaussians ($k=250$, $\sigma = 0.1$, $n=2500$) for varying rank $r$. TC denotes Transport Clustering; FRLC (rand) uses a standard random initialization for low-rank OT, and FRLC (TC-init) uses TC-derived initialization.}\label{tab:TC_init}
\begin{tabular}{r|ccc}
\toprule
$r$ & TC (ours) & FRLC (rand) & FRLC (TC-init) \\
\midrule
50 & 8.0022 & 8.4583 & 8.0088 \\
100 & 7.7327 & 8.4360 & 7.7713 \\
150 & 7.5021 & 8.4550 & 7.5679 \\
200 & 7.2859 & 8.4490 & 7.3745 \\
250 & 7.0762 & 8.4448 & 7.2210 \\
\bottomrule
\end{tabular}
\end{table}

\begin{table}[t]
\centering
\caption{Single-cell transcriptomics alignment on consecutive mouse embryo timepoints. 
We report OT cost (lower is better), AMI/ARI for each split (A/B), and class-transfer accuracy (CTA; higher is better).}
\label{tab:sc-lr-ot}
\resizebox{\linewidth}{!}{%
\begin{tabular}{llc cccccc c}
\toprule
Timepoints & Method & Rank & OT Cost $\downarrow$ & AMI (A/B) $\uparrow$ & ARI (A/B) $\uparrow$ & CTA $\uparrow$ & Runtime (s) \\
\midrule
\multirow{3}{*}{\shortstack{E8.5 $\to$ E8.75 \\ (18{,}819 cells)}} 
  & \ourmethodshort\ & 43 & \textbf{0.506} & \textbf{0.639} / \textbf{0.617} & \textbf{0.329} / \textbf{0.307} & \textbf{0.722} & 63.38 \\
  & \FRLC       & 43 & 0.553 & 0.556 / 0.531 & 0.217 / 0.199 & 0.525 & 16.45 \\
  & \LOT        & 43 & 0.520 & 0.605 / 0.592 & 0.283 / 0.272 & 0.611 & 8.77 \\
\midrule
\multirow{3}{*}{\shortstack{E8.75 $\to$ E9.0 \\ (30{,}240 cells)}} 
  & \ourmethodshort\ & 53 & \textbf{0.384} & \textbf{0.597} / \textbf{0.598} & \textbf{0.231} / \textbf{0.230} & \textbf{0.545} & 177.12
  \\
  & \FRLC       & 53 & 0.405 & 0.534 / 0.541 & 0.174 / 0.178 & 0.492 & 16.92
  \\
  & \LOT        & 53 & 0.390 & 0.559 / 0.567 & 0.193 / 0.197 & 0.487 & 10.88 \\
\midrule
\multirow{3}{*}{\shortstack{E9.0 $\to$ E9.25 \\ (45{,}360 cells)}} 
  & \ourmethodshort\ & 57 & \textbf{0.452} & \textbf{0.563} / \textbf{0.554} & \textbf{0.190} / \textbf{0.187} & \textbf{0.500} & 286.95 \\
  & \FRLC       & 57 & 0.481 & 0.524 / 0.515 & 0.158 / 0.155 & 0.471 &  19.31 \\
  & \LOT        & 57 & --    & -- / -- & -- / -- & -- & -- \\
\midrule
\multirow{3}{*}{\shortstack{E9.25 $\to$ E9.5 \\ (75{,}600 cells)}} 
  & \ourmethodshort\ & 67 & \textbf{0.411} & \textbf{0.562} / \textbf{0.567} & \textbf{0.191} / \textbf{0.194} & \textbf{0.565} & 470.61
  \\
  & \FRLC       & 67 & 0.431 & 0.484 / 0.488 & 0.129 / 0.130 & 0.441 & 33.91 \\
  & \LOT        & 67 & --    & -- / -- & -- / -- & -- & -- \\
\midrule
\multirow{3}{*}{\shortstack{E9.5 $\to$ E9.75 \\ (131{,}040 cells)}} 
  & \ourmethodshort\ & 80 & \textbf{0.389} & \textbf{0.554} / \textbf{0.551} & \textbf{0.172} / \textbf{0.169} & \textbf{0.564} & 806.81 \\
  & \FRLC       & 80 & 0.399 & 0.491 / 0.487 & 0.116 / 0.115 & 0.447 & 58.58 \\
  & \LOT  & 80 & --    & -- / -- & -- / -- & -- & --
  \\
\midrule
\multirow{3}{*}{\shortstack{E9.75 $\to$ E10.0 \\ (120{,}960 cells)}} 
  & \ourmethodshort\ & 77 & \textbf{0.361} & \textbf{0.559} / \textbf{0.560} & \textbf{0.180} / \textbf{0.181} & \textbf{0.475} & 741.91
  \\
  & \FRLC       & 77 & 0.379 & 0.502 / 0.502 & 0.130 / 0.130 & 0.437 & 52.02 \\
  & \LOT        & 77 & --    & -- / -- & -- / -- & -- & -- \\
\bottomrule
\end{tabular}}
\end{table}

\newpage

\begin{figure}[t]
    \centering
    \includegraphics[width=0.48\linewidth]{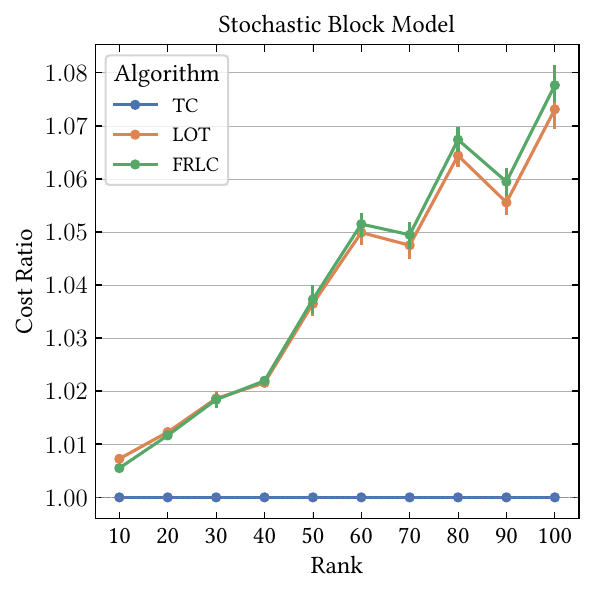}%
    \hfill
    \includegraphics[width=0.48\linewidth]{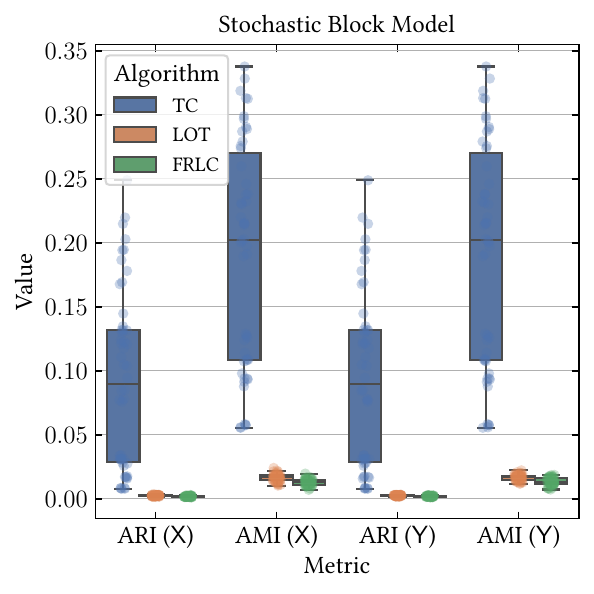}
    \caption{
        Comparison of low-rank OT methods on the stochastic block
        model dataset. \textbf{(Left)} Relative cost of the rank 
        $K \in \{10, \ldots, 100\}$ transport plan inferred by \LOT and \FRLC compared 
        to the cost of the transport plan inferred by \ourmethodshort. 
        \textbf{(Right)} Co-clustering accuracy (AMI/ARI) of \ourmethodshort, \LOT, and \FRLC at rank $K = 100$. 
        The stochastic block model dataset consists of $100$ clusters of size $50$. 
    }
    \label{fig:sbm_cost}
\end{figure}

\begin{figure}[t]
    \centering
    \includegraphics[width=1.0\linewidth]{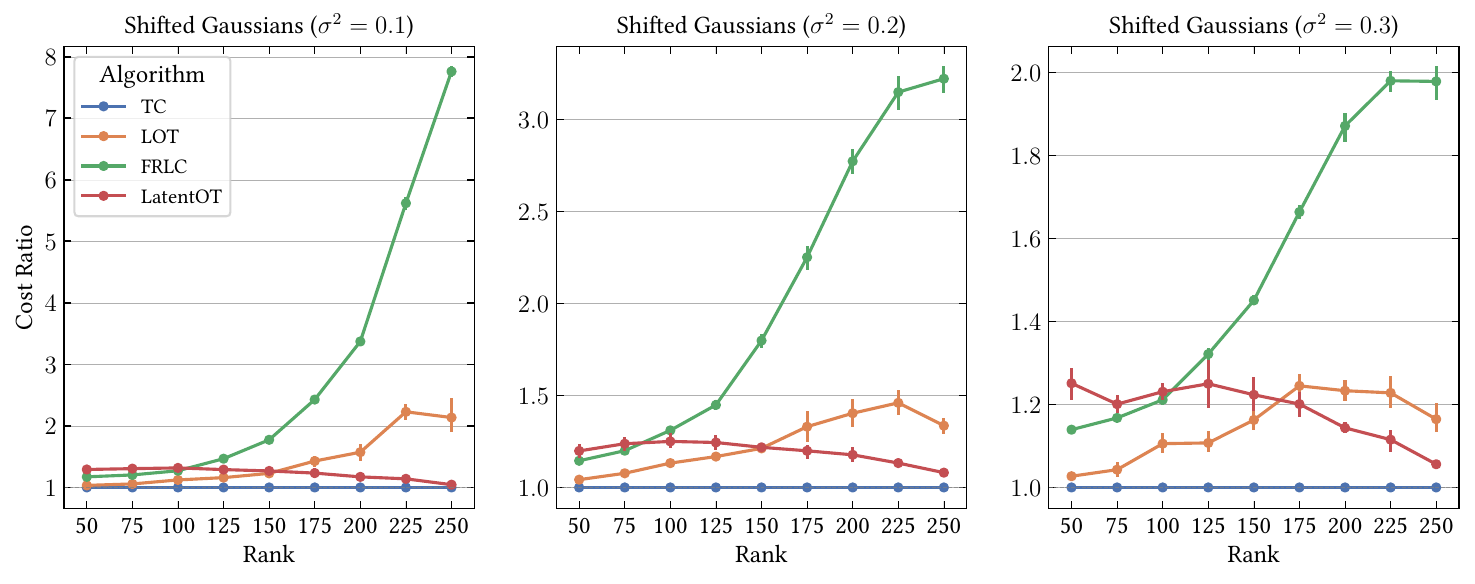}
    \includegraphics[width=1.0\linewidth]{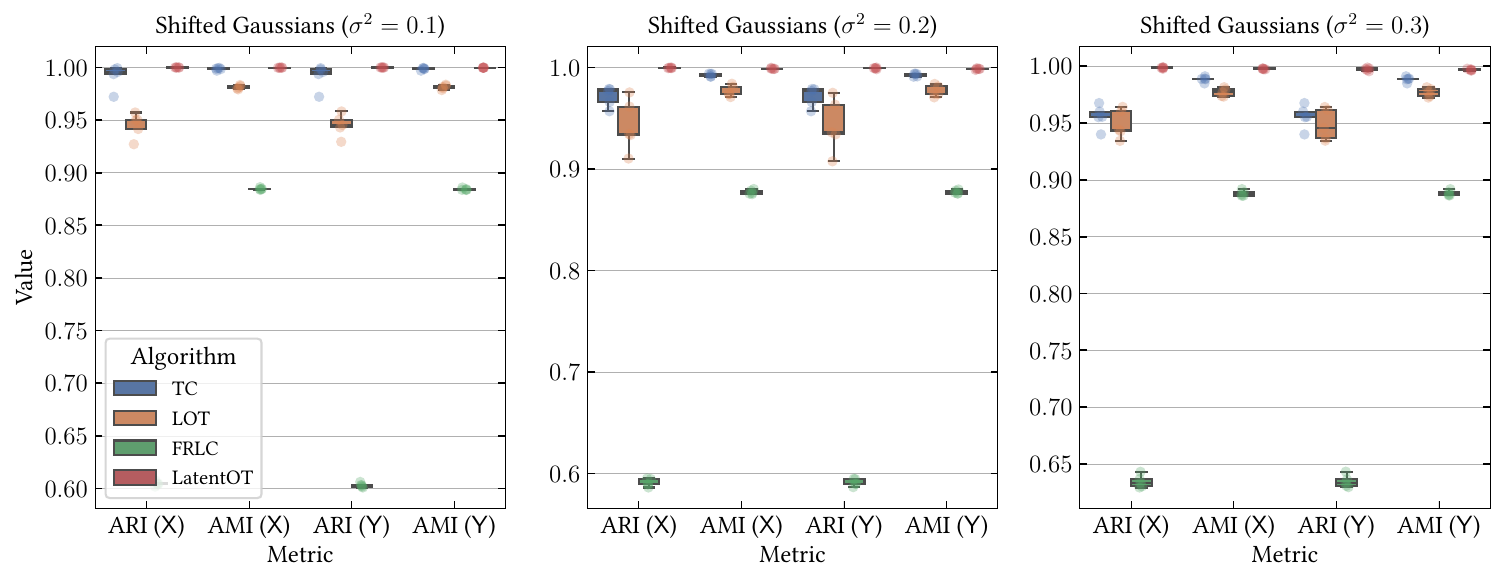}
    \caption{
        Comparison of low-rank OT methods on the shifted
        Gaussians dataset. \textbf{(Top)} Relative cost of the rank 
        $K \in \{50, 75, \ldots, 250\}$ transport plan inferred by \LOT, \FRLC, and \LIN compared 
        to the cost of the transport plan inferred by \ourmethodshort
        across noise levels $\sigma^2\in\{0.1,0.2, 0.3\}$. 
        \textbf{(Bottom)} Co-clustering accuracy (AMI/ARI) of \ourmethodshort, \LOT, \FRLC, and 
        \LIN at rank $K = 250$ across noise levels $\sigma^2\in\{0.1,0.2, 0.3\}$. 
        The shifted Gaussians dataset consists of $250$ clusters of 
        unequal size. 
    }
    \label{fig:shifted_gaussians}
\end{figure}

\begin{figure}[t]
    \centering
    \includegraphics[width=1.0\linewidth]{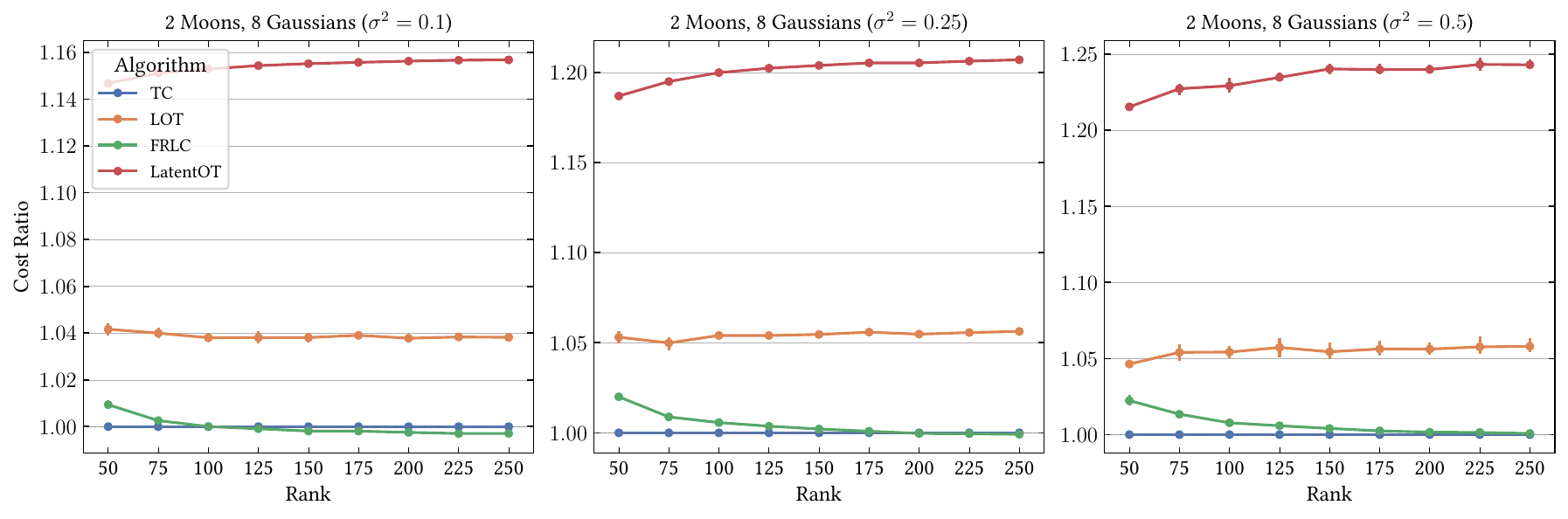}
    \caption{
        Relative cost of the rank 
        $K \in \{50, 75, \ldots, 250\}$ transport plan inferred by \LOT, \FRLC, and \LIN compared 
        to the cost of the transport plan inferred by \ourmethodshort
        across noise levels $\sigma^2\in\{0.1,0.2, 0.3\}$
        for the 2-Moons and 8-Gaussians \citep{tong2023improving}
        dataset.
    }
    \label{fig:twomoons}
\end{figure}

\begin{figure}[t]
    \centering
    \includegraphics[width=1.0\linewidth]{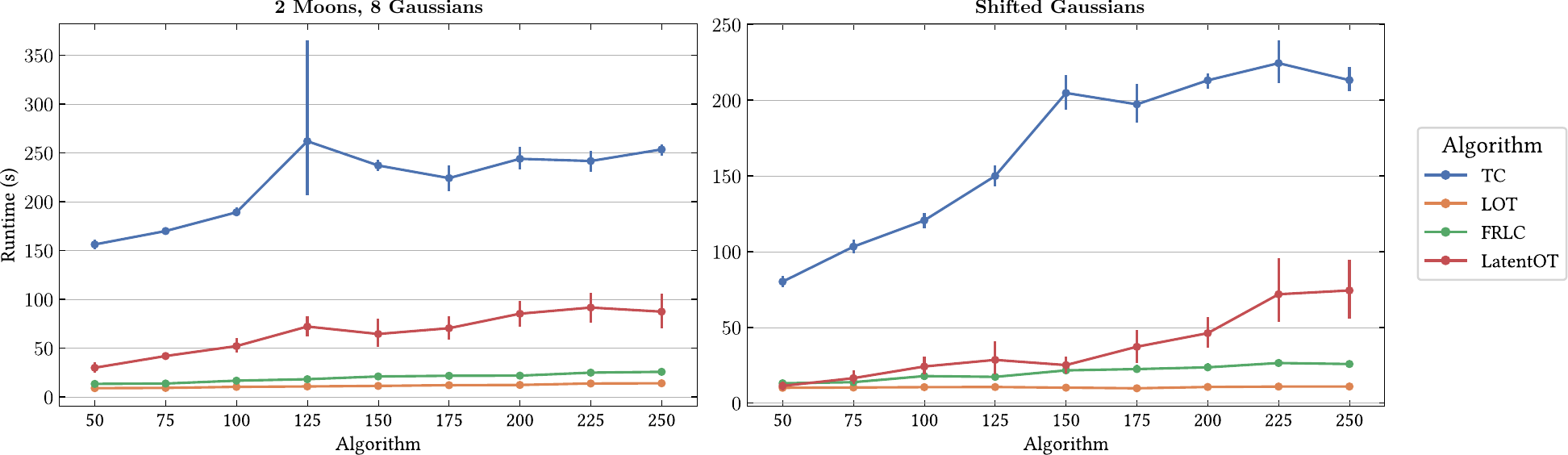}
    \caption{
        Runtime of \ourmethodshort, \LOT, \FRLC, and \LIN versus the rank 
        $K \in \{50, 75, \ldots, 250\}$
        for the 2-Moons and 8-Gaussians \citep{tong2023improving}
        dataset and the Shifted Gaussians dataset across
        all noise levels.
    }
    \label{fig:twomoons}
\end{figure}

\begin{figure}[tbp]
\centerline{\includegraphics[width=\linewidth]{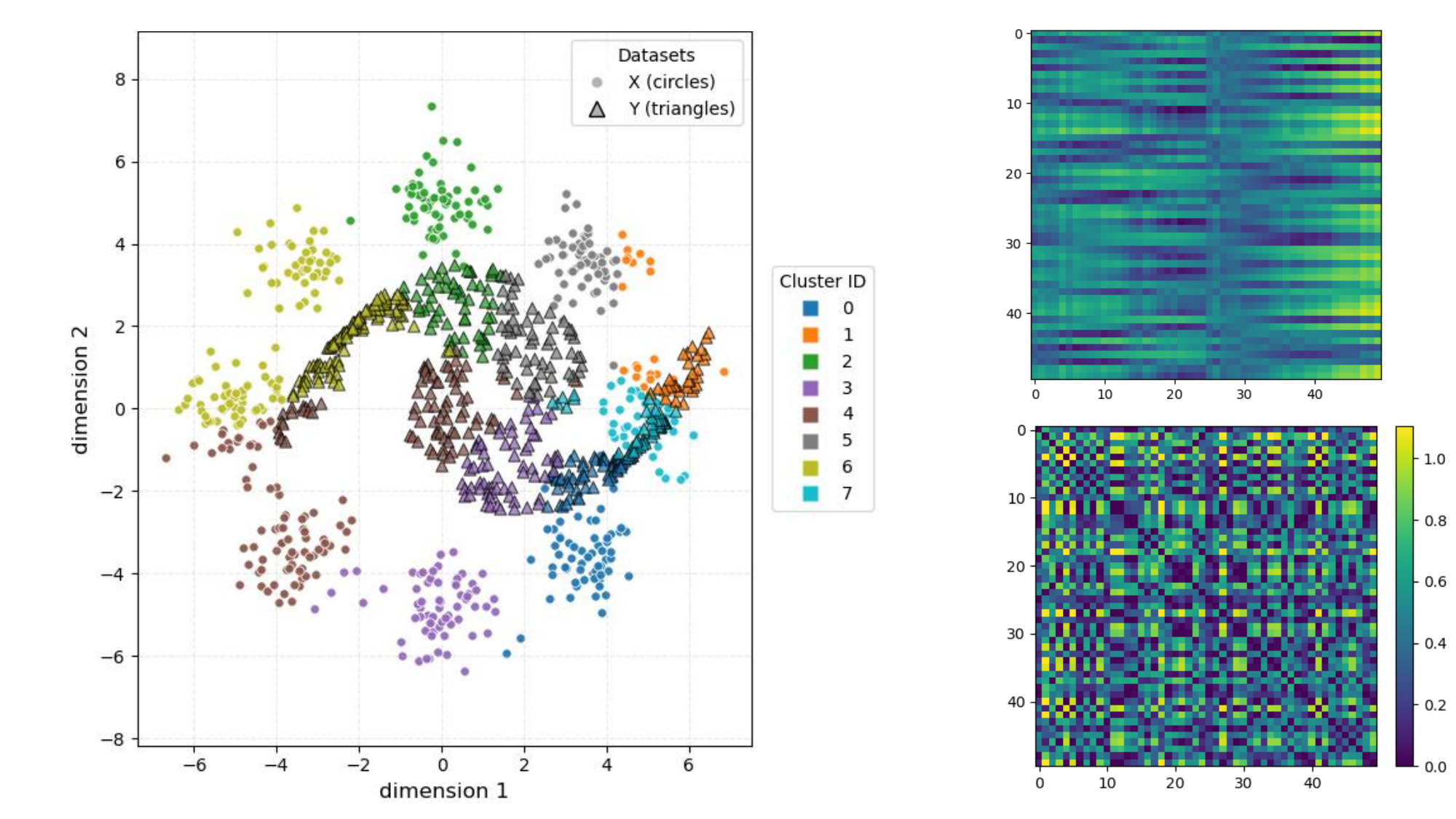}}
\caption{(Left) An example co-clustering of the two-moons 8-Gaussians dataset \cite{tong2023improving} with Algorithm~\ref{alg:exact_GKMS_reduction}. (Right) A comparision between the raw cost matrix $\mb{C}$ (top), and the transport conjugated cost $\mb{M}^{\dagger}$ (bottom).
}
\label{fig:trans_conj_cost}
\end{figure}

\begin{figure}[t]
    \centering
    \includegraphics[width=0.8\linewidth]{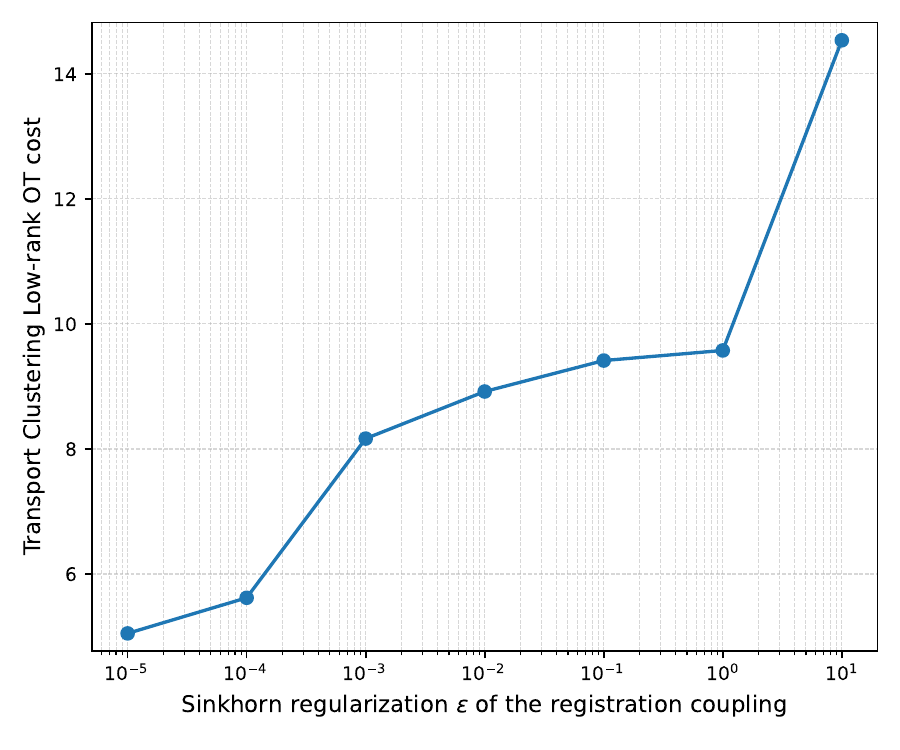}
    \caption{
        (Sensitivity of \ourmethodshort\ to error in the coupling) Low-Rank OT Cost of Transport Clustering as a function of entropy-regularization scale $\epsilon$. Lower $\epsilon$ is closer to an optimal full-rank solution.
    }\label{fig:eps_sensitivity}
\end{figure}

\end{document}